\documentclass[english]{article}
\usepackage{geometry}

\usepackage[T1]{fontenc}
\usepackage[latin9]{inputenc}
\usepackage{bm}
\usepackage{amsmath,mathtools}
\usepackage{amssymb,xspace}
\usepackage[unicode=true,
 bookmarks=false,
 breaklinks=false,pdfborder={0 0 1},colorlinks=false]
 {hyperref}
\hypersetup{
 colorlinks,citecolor=blue,filecolor=blue,linkcolor=blue,urlcolor=blue}

\makeatletter
\usepackage{amsthm}
\usepackage{comment}
\usepackage{natbib}
\usepackage{booktabs}

\usepackage{graphicx}

\usepackage{algorithmic}
\usepackage[linesnumbered,ruled,vlined]{algorithm2e}

\usepackage{float}
\usepackage{multirow}
 
\usepackage{dsfont}
\usepackage{tcolorbox}
\usepackage{makecell}

\usepackage{color}
\definecolor{yxc}{RGB}{255,0,0}
\definecolor{yjc}{RGB}{125,0,0}
\definecolor{ytw}{RGB}{255,69,0}
\definecolor{gen}{RGB}{0,0,200}

\allowdisplaybreaks

\newcommand{\defn}{\coloneqq}

\newcommand{\pddim}{\texttt{Posterior-DDIM}\xspace}

\newcommand{\St}{\mathcal S_t^{\mathsf{meas}}\xspace}
\newcommand{\Sti}[1]{\mathcal{S}_{#1}^{\mathsf{meas}}\xspace}

\newcommand{\Stc}{\mathcal S_t^{\mathsf{prior}}\xspace}
\newcommand{\Stci}[1]{\mathcal{S}_{#1}^{\mathsf{prior}}\xspace}

\newcommand{\cS}{\mathcal{S}}

\newcommand{\bR}{\mathbb{R}}

\newcommand{\mymid}{\,|\,}

\newcommand{\veps}{\varepsilon}
\newcommand{\disteq}{\overset{\mathrm{d}}{=}}
\newcommand{\wh}{\widehat}

\definecolor{yanxi}{RGB}{0,200,100}

\definecolor{yuchen}{RGB}{0,200,100}

\title{Provable diffusion-based posterior sampling \\ for linear inverse problems via DDIM}

\author{
Yuchen Jiao\footnote{The authors contributed equally. Corresponding author: Gen Li.}~\thanks{Department of Statistics and Data Science, Chinese University of Hong Kong, Hong Kong
}
\and 
Na Li\footnotemark[1]~\footnotemark[2] 
\and 
Changxiao Cai\thanks{Department of Industrial and Operations Engineering, University of Michigan, Ann Arbor, USA 
}
\and Yuxin Chen\thanks{Department of Statistics and Data Science, the Wharton School, University of Pennsylvania,  PA, USA
}
\and
Gen Li\footnotemark[2]
}
\date{\today}

\makeatother

\begin{document}

\theoremstyle{plain} \newtheorem{lemma}{\textbf{Lemma}}\newtheorem{proposition}{\textbf{Proposition}}\newtheorem{theorem}{\textbf{Theorem}}

\theoremstyle{definition}\newtheorem{definition}{\textbf{Definition}}
\theoremstyle{remark}\newtheorem{remark}{\textbf{Remark}}
\theoremstyle{plain}\newtheorem{corollary}{\textbf{Corollary}}

\maketitle 

\begin{abstract}
    Diffusion-based methods have achieved remarkable empirical success in solving inverse problems.  However, many existing posterior samplers either lack rigorous theoretical guarantees or incur substantial computational overhead. We propose a simple and efficient algorithm, called \pddim, for solving linear inverse problems with diffusion priors via a DDIM-type sampler. Our method requires only lightweight, coordinate-wise modifications to the standard DDIM update, while explicitly incorporating the measurement model. The key idea is to perform posterior sampling separately along each singular direction of the measurement operator: for each direction, the sampler follows the learned diffusion prior when the observation signal-to-noise ratio (SNR) is below the corresponding diffusion SNR, and switches to a calibrated measurement-based predictor otherwise. We prove that the proposed sampler converges to the Bayesian posterior conditioned on the measurements. 
    Empirical results show that the proposed sampler performs favorably against existing diffusion-based posterior samplers across a range of image restoration tasks, achieving the best performance on the majority of evaluation metrics considered. Overall, our results convert posterior sampling for noisy linear inverse problems to simple coordinate-wise DDIM updates, yielding an efficient, easy-to-implement algorithm with provable posterior consistency.

\end{abstract}

\noindent \textbf{Keywords:} diffusion models; linear inverse problems; DDIM; posterior sampling

    %

\tableofcontents

\section{Introduction}

Diffusion models \citep{sohl2015deep,ho2020denoising,song2020score,lai2025principles} have emerged as one of the most successful paradigms for modern generative modeling, achieving remarkable performance across a wide range of domains, from image synthesis, video generation, to scientific data generation. Beyond sample generation, their ability to capture complex, high-dimensional data distributions makes them particularly well-suited as flexible, expressive, and fully data-driven priors over the signals of interest, offering a versatile framework for Bayesian inference in inverse problems \citep{song2021solving,chung2022come,chung2022diffusion,kawar2022denoising,song2023pseudoinverse,rout2023solving,daras2024survey}.

\subsection{Motivation: inverse problems with diffusion priors}

In an inverse problem, one is asked to infer an unknown random signal $X_0\in \mathbb{R}^d$ 
 from noisy observations
\begin{align}
Y= \mathcal{A}(X_0)+\varepsilon,
\end{align}
where $\mathcal{A}: \mathbb{R}^d\to \mathbb{R}^n$ is a known measurement operator,  and $\varepsilon\in \mathbb{R}^n$ represents the measurement noise. Such problems are often ill-posed: even in the absence of noise, the observations $y$ alone may be insufficient to uniquely determine the underlying signal $X_0$, particularly when $n<d$. A principled approach to resolving this ambiguity and improving statistical performance is provided by Bayesian inference, which combines the measurement model with a prior distribution over the unknown signal $X_0$. Specifically, given a prior $p_{X_0}$, the central objective of Bayesian inference is to characterize, or sample from, the posterior distribution $p_{X_0\mid y}$.

The effectiveness of Bayesian inference therefore depends largely on the choice of prior. 
Classical approaches typically impose hand-crafted priors, such as sparsity or low-rank structure \citep{daubechies2004iterative,fan2001variable,candes2006robust,donoho2006compressed,park2008bayesian,ji2008bayesian,babacan2009bayesian,donoho2009message,candes2012exact,candes2014mathematics,rovckova2018spike,chi2019nonconvex}.  While these priors have led to many important advances, they often oversimplify the structure of natural signals. Diffusion models offer a fundamentally different alternative: rather than manually specifying a prior or regularizer, they use a pre-trained generative model as an implicit prior over the signal distribution.  This perspective has inspired a rapidly growing line of work on posterior sampling for inverse problems with diffusion priors  \citep{song2021solving,chung2022diffusion,kawar2022denoising,song2023pseudoinverse,xu2024provably,wang2024dmplug,zhang2025improving}. Moreover, because pre-trained diffusion models can be used as ``plug-and-play'' priors \citep{venkatakrishnan2013plug,sreehari2016plug,chan2017plug,xu2024provably}, they often require no task-specific retraining and readily transfer across sensing modalities and application domains.

Using diffusion models as priors, however, presents a new suite of algorithmic challenges.  Rather than assuming access to an explicit, closed-form expression for the prior distribution, the diffusion-prior framework models the prior implicitly through a pre-trained diffusion model that provides score estimates (i.e., estimates of the Stein score functions) for progressively diffused versions of the prior.  A central algorithmic challenge is therefore to combine the learned diffusion prior with the measurement model \eqref{eq:inverse} in order to generate samples from the posterior distribution. Existing diffusion-based methods typically address this through heuristic correction steps, projection operators, likelihood-gradient guidance, approximate posterior updates, and so on  \citep{chung2022come,chung2022score,choi2021ilvr,kawar2022denoising}. Although these methods have demonstrated strong empirical performance, many lack rigorous guarantees that the generated samples converge to the true Bayesian posterior. Furthermore, some require computationally intensive modifications to the underlying sampling procedure. 
For example, \citet{chung2022diffusion,song2023pseudoinverse} required computing the gradient of the score functions, while the sequential Monte Carlo approach of \citet{cardoso2023monte} may require a prohibitively large number of particles. These considerations can limit their practical applicability in settings where both computational efficiency and statistical reliability are important.

These issues highlight the need for diffusion-based inverse problem solvers that achieve rigorous theoretical guarantees and computational efficiency at once. Such properties are particularly important in high-stakes applications such as medical imaging and scientific discovery. At the same time, practical deployment often favors algorithms that preserve the simplicity and efficiency of standard diffusion samplers.



\subsection{An overview of our contributions}

In this paper, we make progress towards developing a theoretically principled, and computationally efficient framework for posterior sampling in {\em linear} inverse problems with diffusion priors. Specifically, we consider the following noisy linear measurement model: 
\begin{align}
    Y = A X_0 + \varepsilon,
    \label{eq:inverse}
\end{align}
where $X_0\in\mathbb R^d$ is the unknown signal (generated randomly from a data distribution $p_0$), $A:\mathbb R^d\to\mathbb R^n$ represents a linear measurement operator known {\em a priori},  and $\varepsilon \sim \mathcal N(0,\sigma^2 I_n)$ stands for Gaussian measurement noise with $\sigma>0$ the noise standard deviation. Throughout, we assume that the prior distribution of $X_0$ can be represented implicitly by a pre-trained diffusion model.  The objective is to generate samples from the posterior distribution of $X_0$ conditioned on the observation $y$. 

Our starting point is the observation that, under a linear measurement model, different singular directions of the measurement operator carry different amounts of statistical information. Rather than treating all directions uniformly, we design a DDIM-type posterior sampler, called \pddim,  that adaptively determines, for each singular direction, whether the update should be based primarily on the learned diffusion prior or the observed data. This results in a simple, lightweight, coordinate-wise modification of the standard DDIM sampler that explicitly exploits the structure of the measurement operator, while preserving the efficiency of DDIM-type sampling. Along directions where the measurements have sufficient signal-to-noise ratio (SNR), the algorithm injects a carefully calibrated noisy version of the observations so that the resulting coordinates have the correct forward-diffusion marginal distribution. In contrast, directions that are either observed with low SNR or entirely unobserved continue to evolve according to the standard DDIM dynamics induced by the prior. Consequently, posterior sampling reduces to a sequence of simple, direction-wise DDIM updates that are both computationally efficient and easy to implement.

Theoretically, our main contribution is to establish posterior consistency of the proposed \pddim sampler. Under an exact diffusion prior and a sufficiently fine time discretization, we prove that the sampler converges to the posterior distribution of the unknown signal conditioned on the measurements. This result provides rigorous theoretical support for DDIM-type samplers in linear inverse problems, and suggests a principled approach to integrating measurement information with diffusion priors in noisy, potentially ill-conditioned or rank-deficient settings.

Empirically, we evaluate the proposed sampler on a range of image restoration tasks, including inpainting, super-resolution, Gaussian deblurring, and compressed sensing, using PSNR, SSIM, LPIPS, and FID as evaluation metrics. Across these tasks, our \pddim algorithm performs favorably against existing diffusion-based posterior samplers: it achieves the best performance on at least three of the four metrics in most tasks, and on at least two metrics in every task considered.

\subsection{Related work}

Despite their significant empirical success, existing diffusion-based inverse solvers still leave room for improvement. 
One prominent class of methods \citep{song2020score,chung2022come,chung2022score,choi2021ilvr,kawar2022denoising,chung2023fast} enforces measurement consistency by incorporating projection steps into unconditional diffusion sampling procedures. While these approaches have demonstrated strong empirical performance, they do not provide theoretical guarantees on the convergence or posterior correctness of the resulting sampling procedures.
A second line of work \citep{chung2022diffusion,song2023pseudoinverse,kawar2021snips,cardoso2023monte,song2023loss} attempted to approximate the (\textit{intractable}) posterior score and subsequently apply standard diffusion samplers for posterior sampling. Although empirically effective,  it remains unclear whether the resulting sampler faithfully yields the true posterior distribution. 
In addition, \citet{song2021solving} proposed \texttt{ReSample}, which promotes measurement consistency by repeatedly optimizing the clean image estimate at intermediate time steps and then re-noising it to continue the reserve diffusion process. 
Similarly, \citet{graikos2023diffusion} proposed to iteratively optimize the clean image via gradient descent in the reverse diffusion process. 
A different perspective was introduced by \citet{mardani2023variational}, who formulated inverse problems with diffusion priors as a variational inference problem via direct KL minimization rather than reverse diffusion sampling. Like the aforementioned approaches, these methods currently lack rigorous theoretical guarantees. 
Another strand of work \citet{trippe2022diffusion,wu2023practical,dou2024diffusion,cardoso2023monte} combined Sequential Monte Carlo (MC) methods \citep{doucet2001introduction} with (unconditional) scores to sample from the correct posterior; in particular, \citet{cardoso2023monte,dou2024diffusion} established convergence guarantees as the number of particles tends to infinity.
However, these methods incur a prohibitive computational cost.
In particular, \citep{gupta2024diffusion} showed that polynomially many particles are in general insufficient for the above Sequential MC approaches to converge.
More recently, \citet{xu2024provably} proposed an alternating framework that accommodates even nonlinear inverse problems by interleaving two samplers: a proximal consistency sampler based on the likelihood function of the forward model and a denoising diffusion sampler driven by the learned score function of the prior. They also established rigorous theoretical guarantees for the resulting algorithm. In contrast, our proposed method is built almost entirely upon DDIM-type sampling.


Finally, we note that the convergence theory of DDPM-type samplers has been studied extensively in recent years (see, e.g., \citet{chen2023restoration,li2024sharp,huang2024convergence,liang2025low,tang2026adaptivity,cai2026diffusion,jiao2025optimal,li2024improved,jiao2024instance}), validating the efficacy of DDIM-type sampling and paving the way for more refined theoretical analyses in future work.

\section{Preliminaries}

In this section, we briefly review the background on diffusion models and their connections to inverse problems, which will be useful throughout the paper.

\paragraph{Diffusion models.}
We begin by reviewing the basics of diffusion models. Let \(X_0\sim p_0\) denote a random sample in $\mathbb{R}^d$ drawn from an underlying data distribution \(p_0\). A diffusion model defines a forward stochastic process \((X_t)_{t\in[0,T]}\) that progressively corrupts \(X_0\) by injecting Gaussian noise according to:
\begin{align}
    X_t \disteq \alpha_t X_0 + \sigma_t Z_t,
    \qquad
    Z_t\sim \mathcal N(0,I_d)
    \label{eq:forward-process}
\end{align}
for all $0\leq t\leq T$, where the coefficients \(\alpha_t\geq 0\) and \(\sigma_t\geq 0\) specify the noise schedule. Throughout this paper, we assume that the SNR $\alpha_t/\sigma_t$ decreasing monotonically and continuously as $t$ grows, and satisfies  $\alpha_0/\sigma_0 = \infty$ and $\alpha_\infty/\sigma_\infty = 0$. As \(t\) increases, the signal component gradually diminishes while the noise level increases, so that, under an appropriate noise schedule, the distribution of \(X_t\) transitions smoothly from the data distribution toward an approximately Gaussian distribution. A central goal of a diffusion model is to learn the corresponding reverse process, which starts from a Gaussian sample and progressively removes the injected noise to yield a sample from the data distribution of interest.

\paragraph{Score functions.}
A fundamental object governing the reverse process is the (Stein) score function
\begin{subequations}
\begin{align}
    s_t(x)
    \coloneqq 
    \nabla\log p_{X_t}(x),
    \qquad 0\leq t\leq T,
    \label{eq:defn-stein-score-function}
\end{align}
where \(p_{X_t}\) denotes the marginal distribution of \(X_t\). The score function specifies the direction in which the log-density of \(X_t\) increases most rapidly and plays a central role in reversing the forward process  \citep{anderson1982reverse,haussmann1986time}. In practice, the score functions are often parameterized through through equivalent prediction targets, such as the \emph{noise predictor} 
\begin{align}
    \epsilon_t(x)
    \coloneqq 
    \mathbb E[Z_t \mid X_t=x]   
\end{align}
and the {\em data predictor} 
\begin{align}
    \mu_t(x)
    \coloneqq 
    \mathbb E[X_0\mid X_t=x].
\end{align}
\end{subequations}
These objects are linked through Tweedie's formula \citep{efron2011tweedie}, which yields the identities
\begin{align}
    \epsilon_t(x) = -\sigma_t s_t(x)
    \qquad \text{and}
   \qquad 
    \mu_t(x)
    &=
    \frac{x-\sigma_t\epsilon_t(x)}{\alpha_t}.
    \label{eq:data-noise-predictor-relation}
\end{align}

\paragraph{Reparameterzed reverse process and  DDIM-type sampler.}
The forward diffusion process \eqref{eq:forward-process} admits a reverse-time stochastic differential equation (SDE), originally characterized by \citet{anderson1982reverse}, which forms the basis for diffusion-based generative sampling.  
It is sometimes convenient to reparameterize time $t$ using the following parameter concerning the log SNR:
\begin{align}
    \lambda_t \coloneqq \log \frac{\alpha_t}{\sigma_t}.
    \label{eq:connection-lambda-t}
\end{align}
Under this parameterization, the reverse SDE admits the following simple form using the $\lambda$-variable (i.e., the log SNR variable) as the ``time variable'' in place of $t$:
\begin{align}
    \mathrm d \bigg( \frac{X}{\alpha} \bigg)
    =
    -2 e^{-\lambda} \epsilon_{\lambda}(X)\,\mathrm d\lambda
    +
    \sqrt{2}e^{-\lambda}\,\mathrm d B_{\lambda},
    \label{eq:reverse-sde}
\end{align}
where \(B_\lambda\) denotes a standard Brownian motion indexed by $\lambda$, and we abuse the notation by letting $\epsilon_{\lambda}$ be a shorthand for $\epsilon_t$ with $\lambda = \lambda_t$.

Practical sampling procedures like DDPM-type samplers \citep{ho2020denoising} can be obtained by approximating this reverse SDE \eqref{eq:reverse-sde} through suitable time discretization. More precisely, for a sequence of decreasing time points
\[
    t_0 > t_1 > \cdots > t_M
\]
with corresponding log SNR values $\lambda_{t_0},\cdots,\lambda_{t_M}$ (cf.~\eqref{eq:connection-lambda-t}), 
a first-order discretization yields the DDPM-type update
\begin{align}
    \frac{X_{t_{i+1}}}{\alpha_{t_{i+1}}}
    &=
    \frac{X_{t_i}}{\alpha_{t_i}}
    -
    2\big(
        e^{-\lambda_{t_i}}
        -
        e^{-\lambda_{t_{i+1}}}
    \big)
    \epsilon_{t_i}(X_{t_i})
    +
    \sqrt{
        e^{-2\lambda_{t_i}}
        -
        e^{-2\lambda_{t_{i+1}}}
    }\,Z_{t_i},
    \label{eq:ddpm-noise-predictor}
\end{align}
where \(z_{t_i} \overset{\mathrm{i.i.d.}}{\sim}\mathcal N(0,I)\). In light of the relation~\eqref{eq:data-noise-predictor-relation} between the noise predictor and the data predictor, the same update can be equivalently expressed in terms of \(\mu_t\).  Letting
\begin{align}
    \delta_{i+1}
    \coloneqq
    \lambda_{t_i}-\lambda_{t_{i+1}}
    \label{eq:defn-delta-i}
\end{align}
with $\lambda_t$ defined in \eqref{eq:reverse-sde}, 
one obtains the first-order SDE-DPMSolver-1 update  \citep{lu2022dpm++}
\begin{align}
    \frac{X_{t_{i+1}}}{\sigma_{t_{i+1}}}
    &=
    e^{-\delta_{i+1}}
    \frac{X_{t_i}}{\sigma_{t_i}}
    +
    e^{\lambda_{t_{i+1}}}
    \left(
        1-e^{-2\delta_{i+1}}
    \right)
    \mu_{t_i}(X_{t_i})
    +
    \sqrt{
        1-e^{-2\delta_{i+1}}
    }\,Z_{t_i}.
    \label{eq:sde-dpm-solver-1}
\end{align}

More generally, DDIM-type samplers \citep{song2021denoising} interpolate between deterministic and stochastic reverse updates. For any given parameter \(0\le\eta\le1\), the (general) DDIM-type update can be written as
\begin{align}
\text{(DDIM)}:\quad \frac{X_{t_{i+1}}}{\sigma_{t_{i+1}}}
&=
\frac{X_{t_i}}{\sigma_{t_i}}
\sqrt{
    1-\eta\big(1-e^{-2\delta_{i+1}}\big)
}
+
e^{\lambda_{t_{i+1}}}
\Big(
    1
    -
    e^{-\delta_{i+1}}
    \sqrt{
        1-\eta\big(1-e^{-2\delta_{i+1}}\big)
    }
\Big)
\mu_{t_i}(X_{t_i})
\notag\\
&\quad
+
\sqrt{
    \eta\big(1-e^{-2\delta_{i+1}}\big)
}
\,Z_{t_i}.
\label{eq:ddim}
\end{align}
Notably, \(\eta\) controls the stochasticity of the update: the choice \(\eta=0\) yields the deterministic DDIM-type update, while \(\eta=1\) recovers the stochastic SDE-DPMSolver-1 update in \eqref{eq:sde-dpm-solver-1}.



\paragraph{Diffusion priors for inverse problems.}
Next, we briefly describe how diffusion models are commonly used as priors for linear inverse problems. 
Given a pre-trained diffusion model representing the prior distribution of \(X_0\), the aim is to generate samples from the posterior distribution
$p_{X_0 \mid Y=y}$.  
In analogy with the unconditional case, consider the conditional data predictor
\begin{align}
    \mu_{t,y}(x)
    \coloneqq 
    \mathbb E[X_0\mid X_t=x,\, AX_0+\veps=y].
    \label{eq:conditional-data-predictor}
\end{align}
If this conditional predictor were available, then one could simply replace the unconditional predictor \(\mu_t\) by \(\mu_{t,y}\) in the DDIM-type sampler \eqref{eq:ddim}, thereby directly obtaining samples from the posterior distribution \(p_{X_0\mid y}\). Consequently, a central challenge in diffusion-based inverse problems is to approximate the conditional predictor \(\mu_{t,y}\) using only the pre-trained unconditional predictor \(\mu_t\) together with the measurement \(y\).

Many existing methods are built upon this idea. Starting from the unconditional prediction \(\mu_t(x)\), they seek to compute a measurement-compatible data predictor $\mu$ by balancing two competing objectives: remaining faithful to
to the diffusion prior that yields the unconditional data predictor \(\mu_t(\cdot)\), while simultaneously enforcing consistency with the observed measurements.
To approximate the conditional data predictor, existing approaches often rely on heuristic approximations. A common strategy is to combine the unconditional predictor  $\mu_t(\cdot)$ and the measurement $Y$ in a simple --- often linear --- manner \citep{wang2022diffusion,kawar2022denoising,song2021solving}, while another is to use $\mu_t(\cdot)$ directly as a proxy for the unknown signal $X_0$ \citep{chung2022diffusion}.

\section{Main results}

In this section, we develop a posterior-consistent DDIM-type sampler for noisy linear inverse problems, and establish its theoretical guarantees. 

Before proceeding, we find it convenient to introduce notation associated with the singular value decomposition (SVD) of the measurement matrix  \(A \in \mathbb{R}^{n\times d}\). Suppose that $A$ has rank $r$, and let 
\begin{equation}
A
=
U\Sigma V^\top
=
U_{\mathcal S}\Sigma_{\mathcal S}V_{\mathcal S}^\top
\label{eq:A-svd}
\end{equation}
represent the full and compact singular value decompositions (SVDs) of \(A\), where $U \in \mathbb{R}^{n\times n}$ and $V \in \mathbb{R}^{d\times d}$ are orthonormal matrices, 
$U_{\mathcal{S}} \in \mathbb{R}^{n\times r}$ and $V_{\mathcal{S}} \in \mathbb{R}^{d\times r}$ consist of the left and right singular vectors associated with the nonzero singular values, and $\Sigma \in \mathbb{R}^{n\times d}$ and $\Sigma_{\mathcal{S}} \in \mathbb{R}^{r\times r}$ are the corresponding rectangular diagonal and diagonal matrices of singular values,  respectively. We denote by 
\begin{align}
\mathcal{S} \coloneqq \{s \in [\min\{n,d\}] : \Sigma_{ss} > 0\}
\end{align}
the indices of  the nonzero singular values, and hence the notation $U_{\mathcal{S}}, V_{\mathcal{S}}$ and $\Sigma_{\mathcal{S}}$.We also write
\begin{align}
U_{\cS} \defn [u_s]_{s\in\cS} \in \bR^{n\times r}, \quad V_{\cS} \defn [v_s]_{s\in\cS} \in \bR^{d\times r}, \quad \text{and} \quad \Sigma_{\cS} \defn \mathsf{diag}([\Sigma_{ss}]_{s\in\cS}) \in \bR^{ r\times r}.
\end{align}

\subsection{Motivation: SNR-guided partition of singular directions}
\label{sec:partition-direction}

The SVD~\eqref{eq:A-svd} of the measurement matrix $A$ naturally decouples the inverse problem into a sequence of scalar inference problems along its singular directions. Indeed, expressing the measurement model $Y = AX_0 + \veps$ (i.e., \eqref{eq:inverse}) in the SVD basis yields
\begin{align}
\Sigma_{\mathcal{S}}^{-1}U_{\mathcal{S}}^{\top}Y = V_{\mathcal{S}}^{\top}X_0 +  \varepsilon'  \qquad \text{with } \varepsilon' \defn \Sigma_{\mathcal{S}}^{-1}U_{\mathcal{S}}^{\top} \varepsilon \sim \mathcal{N}\big(0, \sigma^2\Sigma_{\mathcal{S}}^{-2}\big); \label{eq:obs-model}
\end{align}
or equivalently, for every $s\in \mathcal{S}$,
\begin{align}
\Sigma_{ss}^{-1}u_{s}^{\top}Y = v_{s}^{\top}X_0 +  \varepsilon_s'  \qquad \text{with } \varepsilon_s' \sim \mathcal{N}\bigg(0, \frac{\sigma^2}{\Sigma_{ss}^2}\bigg). \label{eq:obs-model-ss}
\end{align}
Thus, after transforming into the singular-vector basis, the effective {\em observation SNR} along the \(s\)-th singular direction is naturally measured by \(\Sigma_{ss}/\sigma\). 
Meanwhile, projecting the forward diffusion process \eqref{eq:forward-process} along the same singular direction gives
\begin{align}
v_s^\top X_t = \alpha_t v_s^\top X_0 + \sigma_t Z_s\qquad \text{with } Z_s \sim \mathcal{N}(0,1),
\end{align} 
whose SNR --- which we often refer to as {\em diffusion SNR} --- is given by \(\alpha_t/\sigma_t\). 
Consequently, the relative magnitudes of these two SNRs dictate whether the posterior update should rely primarily on the measurements or on the diffusion prior.

This comparison naturally leads to a partition of the singular directions.  For each diffusion time \(t\), we partition the singular directions according to
\begin{align}
\St \defn \{s\in \mathcal{S}:\alpha_{t}\sigma<\Sigma_{ss}\sigma_{t}\} \qquad \text{and} \qquad \Stc \defn \mathcal{S}\setminus \St. 
\end{align}
For each $s \in \mathcal{S}$, we define the threshold time $\tau_s \in \bR_{\geq 0} \cup \{\infty\}$ such that
\begin{align*}
\frac{\alpha_{\tau_s}}{\sigma_{\tau_s}} = \frac{\Sigma_{ss}}{\sigma}, 
\end{align*}
namely, the diffusion time at which the diffusion SNR matches the observation SNR along the \(s\)-th singular direction. When $\alpha_t/\sigma_t$ is continuous and strictly decreasing in \(t\),  with  $\alpha_0/\sigma_0 = \infty$ and $\alpha_\infty/\sigma_\infty = 0$, the threshold \(\tau_s\) is well defined. It can be easily verified that 
\begin{equation}
s\in \St
\qquad \Longleftrightarrow \qquad 
t \geq \tau_s.
\label{eq:equiv-taus-St}
\end{equation}

Therefore, at each diffusion time \(t\), it is natural to partition the singular directions into three groups:
\begin{itemize}

\item {\em Measurement-dominated directions}: $s\in \St$,  or equivalently, directions obeying $t \geq \tau_s$;

\item {\em Prior-dominated observed directions}:  $s\in \Stc$, or equivalently, directions in $\mathcal{S}$ obeying  $t < \tau_s$;

\item {\em Unobserved directions}: $s\in \mathcal{S}^{\mathrm{c}}\coloneqq [d]\setminus \mathcal{S}$.
\end{itemize}
Our sampler treats these three groups differently. As we shall see, each case admits a simple DDIM-type update, resulting in an efficient posterior sampler that adapts to the SNR along each singular direction.



\subsection{Algorithm: \pddim}


We now describe the proposed sampler, based on the partition of singular directions developed in Section~\ref{sec:partition-direction}. The central idea is to exploit the measurements whenever the corresponding observation SNR exceeds the diffusion SNR, and otherwise rely on the learned diffusion prior. This idea gives rise to distinct update rules for the three groups of singular directions, which we describe below.  
The complete algorithm of the proposed DDIM-based posterior sampler, hereafter referred to as \pddim, is summarized in Algorithm~\ref{alg:main}.

\begin{algorithm}[t]
	\DontPrintSemicolon   \caption{\pddim sampler for the linear inverse problem \eqref{eq:inverse} \label{alg:main}}
\textbf{input:} time points $t_0 > t_1 > \cdots > t_M$, learned data predictor $\{\mu_{t
_i}(\cdot)\}_{i=1}^M$, schedule $\{(\alpha_{t_i},\sigma_{t_i})\}_{i=1}^M$, stochasticity parameters $\eta_0, \eta_1$. \\
\textbf{initialize}  $\widehat{X}_{t_0}\sim\mathcal{N}(0,I_d)$, $Z=[Z_s]_{1\leq s\leq d}\sim\mathcal{N}(0,I_d)$. \\
compute the SVD of $A$ (cf.~\eqref{eq:A-svd}); set $\mathcal{S} \gets \{i:\Sigma_{ii}>0\}$. \\
%
{\color{blue}\tcc{initialize DDPM updates in all directions.}}
\For{$s=1,\cdots,d$}{
\If{$s\le \min\{n,d\}$ and $\alpha_{t_0}\sigma<\Sigma_{ss}\sigma_{t_0}$}{
set $v_{s}^{\top}\widehat{X}_{t_0} \gets \alpha_{t_0} \Sigma_{ss}^{-1}u_s^{\top}Y + \sqrt{\sigma_{t_0}^2 - \alpha_{t_0}^2\sigma^2\Sigma_{ss}^{-2}} Z_s$.}
\Else{
set $v_{s}^{\top}\widehat{X}_{t_0} \leftarrow Z_s$.
}
}
%
\For{$i = 0,\dots,M-1$}
{ 
set $\Sti{t_{i+1}}\gets \{s:\alpha_{t_{i+1}}\sigma<\Sigma_{ss}\sigma_{t_{i+1}}\}$ and $\Stci{t_{i+1}}\gets \mathcal{S}\setminus \mathcal{S}_{t_{i+1}}^{\mathsf{meas}}$. \\
 draw $Z_{t_{i+1}}\sim\mathcal{N}(0,I)$. \\
    \For{$s\in\Sti{t_{i+1}}$}{
{\color{blue}\tcc{updates for measurement-dominated directions.}}
    compute 
    $v_{s}^{\top}\widehat{X}_{t_{i+1}} \leftarrow  \alpha_{t_{i+1}} \Sigma_{ss}^{-1}u_s^{\top}Y + \sqrt{\sigma_{t_{i+1}}^2 - \alpha_{t_{i+1}}^2\sigma^2\Sigma_{ss}^{-2}} Z_s$. 
    }
     \For{$s\in\Stci{t_{i+1}}$}{
     {\color{blue}\tcc{DDIM for prior-dominated directions.}}
     \If{all nonzero singular values of $A$ are equal}{compute $v_{s}^{\top}\widehat{X}_{t_{i+1}}$ via \eqref{eq:ddim-prior-dominated} with $\eta=\eta_0$. 
}
\Else{compute $v_{s}^{\top}\widehat{X}_{t_{i+1}}$ via \eqref{eq:ddim-prior-dominated-extension} with $\eta=\eta_0$. {\color{blue}\tcp{add correction for general case.}}
    }
    }

\For{ $s\in\mathcal{S}^{\mathsf{c}}$}{compute $v_{s}^{\top}\widehat{X}_{t_{i+1}}$ via \eqref{eq:ddim-prior-dominated} with $\eta=\eta_1$. {\color{blue}\tcp{DDIM for unobserved directions.}}
}
} 
\textbf{output:} the generated sample $\widehat{X}_{t_M}=\sum_{s}v_s v_s^{\top}\widehat{X}_{t_M}$.
\end{algorithm}

\paragraph{Measurement-dominated directions.}
For directions \(s\in \St \), the measurements are sufficiently informative to be used directly. To this end, consider the auxiliary forward process
\begin{align}
\xi_{t,s} \defn \alpha_{t} \xi_{0,s} + \sqrt{\sigma_{t}^2 - \alpha_{t}^2\sigma^2\Sigma_{ss}^{-2}} Z_s, \quad Z_s \sim \mathcal{N}(0,1),\label{eq:def-xhat-t}
\end{align}
where $\xi_{0,s}$ is taken to be
\begin{align}\label{eq:def-sigmahat-alphahat}
\xi_{0,s} \defn \Sigma_{ss}^{-1}u_s^{\top}Y.
\end{align}
By construction, this auxiliary forward process $\xi_{t,s}$ has the same marginal distribution as the forward diffusion process $X_t$ (cf.~\eqref{eq:forward-process}) projected onto the direction the $s$-th singular direction $v_s$, namely, 
\begin{align*}
\xi_{t,s} \disteq v_{s}^\top X_t = v_{s}^\top (\alpha_{t} X_0 + \sigma_{t} Z).
\end{align*}
Consequently, sampling 
$v_{s}^{\top}\widehat{X}_{t_i}$ at the time point $t_i$ can be accomplished by generating \begin{align}
v_{s}^{\top}\widehat{X}_{t_i} & = \alpha_{t_i} \Sigma_{ss}^{-1}u_s^{\top}Y + \sqrt{\sigma_{t_i}^2 - \alpha_{t_i}^2\sigma^2\Sigma_{ss}^{-2}} Z_{t_i} \quad\text{with } Z_{t_i} \sim \mathcal{N}(0,1),
\label{eq:DDIM-St}
\end{align}
provided that \(s\in \mathcal{S}_{t_i}^{\mathsf{meas}} \). 
A formal proof is given in Appendix \ref{app:proof-eq-ddim-st}.


\paragraph{Prior-dominated observed directions.}
For directions \(s\in \Stc \), the measurements are less informative than the diffusion prior. Consequently, rather than directly incorporating the observations, we update these directions using the standard DDIM sampler driven primarily by the learned data predictor.

We begin with the special case in which all nonzero singular values of \(A\) are identical. In this setting, for \(s\in\mathcal S_{t_{i+1}}^{\mathsf{prior}}\), we update $v_s^\top \wh X_{t}$ according to the standard DDIM-type update rule (cf.~\eqref{eq:ddim-prior-dominated}) as follows
\begin{align}
\frac{v_s^\top \wh X_{t_{i+1}}}{\sigma_{t_{i+1}}}
&=\frac{v_s^\top \wh X_{t_i}}{\sigma_{t_i}}\sqrt{1-\eta(1-e^{-2\delta_{i+1}})}
+e^{\lambda_{t_{i+1}}}
\left(1-e^{-\delta_{i+1}}\sqrt{1-\eta(1-e^{-2\delta_{i+1}})}\right)v_s^\top \mu_{t_i}(\wh X_{t_i})\notag\\
&\quad+
\sqrt{\eta(1-e^{-2\delta_{i+1}})}
\,v_s^\top Z_{t_{i+1}}, \qquad Z_{t_{i+1}}\sim\mathcal N(0,I_d),
\label{eq:ddim-prior-dominated}
\end{align}
where the stochasticity parameter is set to \(\eta=\eta_0\).

The above update extends naturally to the general setting where \(A\) has arbitrary nonzero singular values. The only modification is the addition of a correction term. Specifically, for \(s\in\Stci{t_i}\), we replace \eqref{eq:ddim-prior-dominated} with
\begin{align}
\frac{v_s^\top \widehat{X}_{t_{i+1}}}{\sigma_{t_{i+1}}}
&=\frac{v_s^\top \widehat{X}_{t_i}}{\sigma_{t_i}}\sqrt{1-\eta(1-e^{-2\delta_{i+1}})}
+\sqrt{\eta(1-e^{-2\delta_{i+1}})}
\,v_s^\top Z_{t_{i+1}}\notag\\
&\quad+e^{\lambda_{t_{i+1}}}
\left(1-e^{-\delta_{i+1}}\sqrt{1-\eta(1-e^{-2\delta_{i+1}})}\right)v_s^\top \Bigg(\mu_{t_i}(\widehat{X}_{t_i}) + \underbrace{\frac{\alpha_{t_i}\widehat{X}_{\tau_s}-\alpha_{\tau_s}\widehat{X}_{t_i}}{\alpha_{\tau_s}\alpha_{t_i}({\rm e}^{2(\lambda_{t_i} - \lambda_{\tau_s})}-1)}}_{\text{correction term}}\Bigg),
\label{eq:ddim-prior-dominated-extension}
\end{align}
where  $Z_{t_{i+1}}\sim\mathcal N(0,I_d)$. 
The derivation of this modified update rule is deferred to Appendix~\ref{subsec:extension}.

\paragraph{Unobserved directions.}
For directions \(s\in\mathcal S^{\mathsf c}\), the measurements contain no information about \(v_s^\top X_0\). Accordingly, these coordinates should be sampled entirely from the learned diffusion prior, given the coordinates that are observed. As in the prior-dominated observed case, we employ the DDIM update \eqref{eq:ddim-prior-dominated}. However, we choose a larger stochasticity parameter \(\eta=\eta_1\) to encourage sufficient exploration of the unobserved subspace. This is important because the posterior uncertainty along \(\mathcal S^{\mathsf c}\) is determined entirely by the learned diffusion prior, rather than by the measurement model.

\paragraph{Choices of the stochasticity parameters $\eta_0$ and $\eta_1$.} 
We take a moment to discuss the roles of the two DDIM stochasticity parameters \(\eta_0\) and \(\eta_1\). Although both parameters control the amount of randomness injected into the DDIM update, they serve fundamentally different purposes because the target conditional distributions differ across the two subspaces. For directions \(s\in \Stc\), the corresponding coordinates are observed, although the measurements are not yet sufficiently informative to be imposed directly. The update should therefore approximate the conditional reverse transition given the current diffusion state and the measurements. This motivates using a relatively small value of \(\eta_0\), so as to keep the update closer to the deterministic DDIM sampler. In contrast, for directions \(s\in\mathcal S^{\mathsf c}\), no measurements are available. These coordinates must be sampled entirely from the learned conditional prior given the observed coordinates. A larger value of \(\eta_1\) makes the DDIM update more Langevin-like, thereby promoting exploration of the unobserved subspace

\subsection{Theoretical guarantees: asymptotic posterior correctness}
\label{subsec:thm}

In this subsection, we establish the posterior correctness of the proposed \pddim sampler. We begin with the special case in which all nonzero singular values of the measurement matrix $A$ are identical, as this setting captures the main ideas while allowing for cleaner algorithm design analysis.  Here and throughout, $\mathsf{TV}(p,q)$ denotes the total-variation (TV) distance between two probability measures $p$ and $q$. 

\begin{theorem}\label{thm:main}
Consider a sequence of time points $t_0>\cdots>t_M$. Recall the definition of $\delta_i$ in \eqref{eq:defn-delta-i}, and set 
\begin{equation}
\delta\coloneqq  \max\Big\{\max_{1\leq i\leq M}\delta_{i},\alpha_{t_0},|1-\sigma_{t_0}|\Big\}.
\end{equation}
Assume that the support of $X_0$ is bounded, and that all nonzero singular values of $A$ are equal. Suppose further that $\eta_0=1$. When $\eta_1=\delta^{-p}$ with $1/2<p<2/3$, 
the \pddim sampler (see Algorithm~\ref{alg:main}) achieves 
\begin{align}\label{eq:thm-main}
\lim_{t_M\to 0} \lim_{\delta\to 0} \mathsf{TV}\big(p_{\widehat{X}_{t_M}\mid  Y},p_{X_0\mid Y}\big) = 0.
\end{align}
\end{theorem}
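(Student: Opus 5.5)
Throughout, write $\kappa\defn \Sigma_{ss}$ for the common nonzero singular value and $\tau\defn\tau_s$ for the common threshold time, so that $\alpha_\tau/\sigma_\tau=\kappa/\sigma$. The plan is to compare the output of \pddim with an idealized \emph{conditional} reverse process whose law at each time is known exactly. The first step is a structural fact about the joint law of $(X_t,Y)$. Work in the rotated coordinates $W=V^\top X$, split as $W_{\mathcal S}=V_{\mathcal S}^\top X$ and $W_{\mathcal S^{\mathsf c}}$. By \eqref{eq:obs-model}, $\xi_0\defn\Sigma_{\mathcal S}^{-1}U_{\mathcal S}^\top Y=V_{\mathcal S}^\top X_0+\varepsilon'$ with $\varepsilon'\sim\mathcal N(0,\sigma^2\kappa^{-2}I_r)$. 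Because all singular values are equal, this is exactly a diffusion-marginal observation: $\xi_0\alpha_\tau/\sigma_\tau \cdot(\sigma_\tau/\alpha_\tau)$ has the law of $V_{\mathcal S}^\top X_\tau/\alpha_\tau$. Hence one can couple so that $\alpha_\tau\xi_0\disteq V_{\mathcal S}^\top X_\tau$ jointly with $X_0$.

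The Markov structure of the forward process then gives three facts. (i) For $t\ge\tau$, the law of $V_{\mathcal S}^\top X_t$ given $Y$ is exactly that of $\xi_t$ in \eqref{eq:def-xhat-t}; this is the content of the appendix result for \eqref{eq:DDIM-St}. (ii) For $t<\tau$, the posterior $p_{X_0\mid Y}$ equals the law of $X_0$ obtained by conditioning the forward chain on $V_{\mathcal S}^\top X_\tau=\alpha_\tau\xi_0$. (iii) Consequently, the target is the time-$0$ marginal of a reverse diffusion started at time $\tau$ from $p_{X_\tau\mid V_{\mathcal S}^\top X_\tau=\alpha_\tau\xi_0}$, with the $\mathcal S$-block pinned at $\tau$ and then evolved by the prior bridge.

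The second step splits the analysis at $\tau$. On $[\tau,t_0]$, the $\mathcal S$-block produced by \pddim is exact: it is an explicit Gaussian re-noising of $\xi_0$. The $\mathcal S^{\mathsf c}$-block is run by DDIM with $\eta_1=\delta^{-p}\gg1$ and must produce a sample of $p(W_{\mathcal S^{\mathsf c},t}\mid W_{\mathcal S,t})$. The plan is to interpret the update \eqref{eq:ddim-prior-dominated} with $\eta_1\delta\approx\delta^{1-p}\to0$, yet $\eta_1\delta^2\cdot(\text{number of steps})$ large enough, as a discretized overdamped Langevin step for the conditional density $p_{X_t}(\cdot\mid W_{\mathcal S,t})$. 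The drift contains $\epsilon_t$, which equals $-\sigma_t\nabla\log p_{X_t}$. The effective Langevin step is about $\eta_1\delta_i^2\sim\delta^{2-p}$, and the per-step Langevin time is $\gg\delta$ since $p<2\cdot$ something. Over a macroscopic $\lambda$-interval there are $\delta^{-1}$ steps, giving total Langevin time $\delta^{1-p}\cdot\delta^{-1}\cdot\delta\to\infty$ relative to the drift of the target.

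Meanwhile, the target moves by $O(\delta)$ per step and the Langevin discretization bias is $O(\eta_1\delta^2)^{1/2}$-type. The window $1/2<p<2/3$ is chosen so that both the mixing-versus-tracking ratio and the discretization error vanish. Bounded support makes $p_{X_t}$ smooth and log-Lipschitz with constants depending on $t$, uniformly on $t\ge t_M>0$. This is why the limit in $\delta$ comes first and the limit in $t_M$ second. Care is needed at the initialization: $\alpha_{t_0},|1-\sigma_{t_0}|\le\delta$ makes $\mathcal N(0,I)$ TV-close to $p_{X_{t_0}}$.

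The third step handles $[t_M,\tau)$. Here all directions follow DDIM with $\eta_0=1$ on $\mathcal S$, which is exactly SDE-DPMSolver-1 \eqref{eq:sde-dpm-solver-1}, i.e.\ Euler--Maruyama for \eqref{eq:reverse-sde}. The $\mathcal S^{\mathsf c}$ block uses $\eta_1$. I would argue that the joint reverse dynamics from the correct time-$\tau$ law $p_{X_\tau\mid Y}$ converges in TV to $p_{X_{t_M}\mid Y}$ as $\delta\to0$. For the $\eta=1$ part, I would use Girsanov/data processing with score Lipschitzness on $[t_M,\tau]$. For the $\mathcal S^{\mathsf c}$ part, I would use the same Langevin-tracking argument, now for $p(X_t\mid \ldots)$, noting that the stationary law of the extra Langevin noise is again the current marginal. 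The reverse SDE preserves conditioning on $Y$ by fact (ii), since $Y$ influences $X_t$, $t<\tau$, only through $X_\tau$ given the Markov chain. Finally, $p_{X_{t_M}\mid Y}\to p_{X_0\mid Y}$ in TV as $t_M\to0$ after accounting for $\alpha_{t_M}\to1$; I would interpret the statement accordingly, or via the bounded-support TV-smoothing argument.

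The main obstacle is the second step, namely the quantitative Langevin-tracking bound with $\eta_1\to\infty$. This requires controlling TV between the DDIM chain's $\mathcal S^{\mathsf c}$-conditional and the moving target, combining a mixing rate with a drift of the target. Mixing rates for nonconvex conditionals are not uniform. I would first try an $L^2$/KL dissipation argument: the per-step KL decrease due to Langevin is at least proportional to the Fisher information times $\eta_1\delta^2$. Since the information is only relative, I would bound the change in KL directly by a continuity inequality rather than through a log-Sobolev constant. I would absorb the target-motion terms using the smoothness guaranteed by $t\ge t_M$, and this is where the exponent window on $p$ would be tuned.
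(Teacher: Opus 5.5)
Your architecture matches the paper's. The posterior is recast as conditioning the forward chain on $V_{\mathcal S}^\top X_\tau=\alpha_\tau\xi_0$, the measurement-dominated block is exact, the large-$\eta_1$ DDIM step on $\mathcal S^{\mathsf c}$ is read as a Langevin step tracking $p(X_t^{\mathsf c}\mid X_t^{\mathsf s})$, and the observed block below $\tau$ is $\eta_0=1$ DDPM. The genuine gap is in how you handle the step you yourself call the main obstacle. A Fisher-information dissipation argument ``rather than through a log-Sobolev constant'' does not close. Dissipation only shows that the relative Fisher information is small on average over Langevin time. Turning that into KL or TV control is exactly what a log-Sobolev (or Poincar\'e) inequality does; without one, small Fisher information is compatible with an $O(1)$ error in the relative mass of weakly connected regions. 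What you are missing is that there is no nonconvexity obstruction here. Because the forward noise is isotropic, $p(X_t^{\mathsf c}\mid X_t^{\mathsf s}=x^{\mathsf s})$ is $\mathcal N(0,\sigma_t^2 I)$ convolved with a probability measure supported in a ball of radius $\alpha_t R$ (with $R$ bounding the support of $X_0$). Hence it satisfies a log-Sobolev inequality with a constant $\rho>0$ depending only on $\alpha_t R$ and $\sigma_t$, uniformly in $x^{\mathsf s}$ and over $t\ge t_M$. It degenerates only as $t_M\to0$, consistent with the order of limits. This justifies the contraction constant asserted in the proof of Lemma~\ref{lem:distri-xc}, and it is what makes the recursion $\mathsf{KL}(\psi_{k+1}\Vert p_{k+1}^{\mathsf c})\le e^{-2\rho\gamma_k}\mathsf{KL}(\psi_k\Vert p_k^{\mathsf c})+(\text{target motion})+O(\gamma_k^2)$ close.

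Your scalings are also off, so the window $1/2<p<2/3$ is never actually derived. The Langevin step is $\gamma_i=\tfrac12\eta_1(1-e^{-2\delta_{i+1}})\sigma_{t_{i+1}}^2\asymp\eta_1\delta_{i+1}$, not $\eta_1\delta_i^2$. The Langevin time over an $O(1)$ log-SNR interval is $\sum_i\gamma_i\asymp\eta_1$; your $\delta^{1-p}\cdot\delta^{-1}\cdot\delta$ tends to $0$. More importantly, the target path is regular only for $t\ge\tau$. There, because Algorithm~\ref{alg:main} reuses the same $Z_s$, the observed block is the deterministic path $\alpha_t\xi_0+(\sigma_t^2-\alpha_t^2\sigma^2\Sigma_{ss}^{-2})^{1/2}[Z_s]_{s\in\mathcal S}$ given $(Y,Z)$. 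For $t<\tau$ it is driven by fresh $\eta_0=1$ noise, so along a path $\log p(\cdot\mid X_t^{\mathsf s})$ moves by $O(\sqrt{\delta_i\log(1/\delta)})$ per step, not $O(\delta)$. Against the contraction this leaves a tracking error of order $\sqrt{\log(1/\delta)}/(\eta_1\sqrt\delta)$, which is where $p>1/2$ comes from; the paper localizes to windows of log-SNR length $\Theta(\sqrt\delta)$ via $\breve X$. The bound $p<2/3$ comes from the mismatch between the DDIM step and an exact Langevin step (Lemma~\ref{lem:convergence-auxiliary-sequence}). Linearizing $1-\sqrt{1-\eta_1(1-e^{-2\delta_{i+1}})}$ leaves an $O(\gamma_i^2)$ mean error, costing $\gamma_i^3$ in KL per step and $\eta_1^3\delta^2$ overall. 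The leftover reverse-drift terms are $O(\delta_{i+1})$ and cost $1/\eta_1$ overall. Finally, Girsanov is the wrong tool on $[t_M,\tau)$. Against the marginal reverse SDE of the observed block it only gives an $O(1)$ bound, since the drift $\nabla_{\mathsf s}\log p_{X_t}(X^{\mathsf s},X^{\mathsf c})$ differs pointwise from $\nabla\log p_{X_t^{\mathsf s}}(X^{\mathsf s})$ by an amount that only averages out over $X^{\mathsf c}$. What is needed is the conditional independence $X_t^{\mathsf c}\perp X_{t'}^{\mathsf s}\mid X_t^{\mathsf s}$ for $t<t'$. Then in the true reverse chain $X^{\mathsf c}$ is distributed as $p(\cdot\mid X^{\mathsf s})$ given the observed-block past. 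So one can compare full-state transitions under a common mixing law for $x_{i-1}^{\mathsf c}$ and use the KL chain rule plus convexity. This reduces everything to the per-step $o(\delta_i)$ DDPM bound (Lemma~\ref{lem:distri-xsc}) and the propagated $\mathcal S^{\mathsf c}$ error (Lemma~\ref{lem:distri-sc-conds}). Your caveat about passing from $X_{t_M}$ to $X_0$ in TV is fair, since it needs $p_0$ to have a density; the paper's argument also stops at $X_{t_M}$.
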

%

%
%


In words, Theorem~\ref{thm:main} establishes that, under the assumption that all nonzero singular values of \(A\) are identical, the distribution of the output generated by Algorithm~\ref{alg:main} converges to the posterior distribution of \(X_0\) conditioned on the measurements. This guarantee is asymptotic and holds in the regime where (i) the terminal time \(t_M\) tends to \(0\), approaching the initial time of the forward diffusion process; (ii) the initial sampling time \(t_0\) is chosen such that $\alpha_{t_0}\to0$ and $\sigma_{t_0}\to 1$, ensuring that the SNR at the initialization stage vanishes; and (iii) the maximum gap between the log-SNRs of consecutive time points vanishes, which in particular requires the number of sampling steps \(M\) to tend to infinity. 
 The proof is deferred to Appendix~\ref{sec:analysis}.

The restriction that all nonzero singular values of $A$ are identical is not essential. The proposed approach extends naturally to the general setting with arbitrary nonzero singular values, requiring only a slight modification of the sampling procedure (as already described in \eqref{eq:ddim-prior-dominated-extension}).  The corresponding theoretical guarantee is stated below. Compared to Theorem~\ref{thm:main}, the only additional change is a different choice of \(\eta_0\). The proof of this theorem is postponed to Appendix~\ref{subsec:extension}. 
\begin{theorem}\label{cor:main}
Under the assumptions of Theorem~\ref{thm:main}, except that the nonzero singular values of \(A\) are allowed to be arbitrary and \(\eta_0=\eta_1=\delta^{-p}\) for some \(1/2<p<2/3\), the conclusion of Theorem~\ref{thm:main}, namely \eqref{eq:thm-main}, continues to hold.
\end{theorem}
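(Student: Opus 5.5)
We follow the structure of the proof of Theorem~\ref{thm:main}, treating the only new ingredients: the correction term in \eqref{eq:ddim-prior-dominated-extension} and the choice $\eta_0=\delta^{-p}$. We reduce everything to the continuous-time limit $\delta\to0$ along each singular direction. The target object is the ideal conditional reverse process. This is the Markov process $(X_t)$ conditioned on $Y=y$, run backward in time. For $t$ in the interval between consecutive thresholds $\tau_s$, the coordinates $v_s^\top X_t$ with $s\in\St$ are independent of the other coordinates given $Y$. Their law is exactly that of $\xi_{t,s}$ in \eqref{eq:def-xhat-t}, because $Y$ carries at least as much information in those directions as $X_t$ (Appendix~\ref{app:proof-eq-ddim-st}). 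Hence the measurement-dominated update is exact in distribution. The proof then reduces to showing two things. First, on the remaining coordinates, the discrete updates track the conditional reverse dynamics given $Y$. Second, this holds uniformly across each switching time $\tau_s$, where a direction migrates from $\St$ to $\Stc$.

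\emph{Step 1 (conditional predictor along prior directions).} For $s\in\Stc$ we compute $v_s^\top\mu_{t,y}(x)$ using Gaussian conditioning. Given $X_t=x$, the pair $(v_s^\top X_t, \Sigma_{ss}^{-1}u_s^\top Y)$ are two noisy views of $v_s^\top X_0$. Once $t<\tau_s$, the measurement is equivalent in information to a noisier diffusion state, namely $X_{\tau_s}$ along $v_s$ with independent noise of variance $\sigma_{\tau_s}^2-\alpha_{\tau_s}^2 e^{-2\lambda_t}\sigma_t^2$. By the Markov property of the forward process, $\mu_{t,y}$ is then $\mu_t(x)$ plus a linear ``bridge'' correction. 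This correction is the posterior-mean shift obtained from the Gaussian transition $X_{\tau_s}\mid X_t$. Evaluating that shift yields precisely $\frac{\alpha_t X_{\tau_s}-\alpha_{\tau_s}X_t}{\alpha_{\tau_s}\alpha_t(e^{2(\lambda_t-\lambda_{\tau_s})}-1)}$, with $\widehat X_{\tau_s}$ the frozen sample produced at the switching step. When all nonzero singular values are equal, every $\tau_s$ coincides. The correction then aligns with the other observed directions, which is why it could be dropped in Theorem~\ref{thm:main}. With distinct $\tau_s$ it is genuinely needed, and we verify this identity first.

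\emph{Step 2 (discretization and the role of $\eta_0$).} We show that \eqref{eq:ddim-prior-dominated-extension} is a first-order discretization of the conditional reverse SDE, in the same sense as \eqref{eq:ddim}. We control the local error exactly as in Theorem~\ref{thm:main} by bounding the one-step KL (or TV) between the Gaussian transitions. The local error per step is $O(\eta\delta^2)$ in the drift mismatch, plus a term from freezing the predictor over a step. The choice $\eta=\delta^{-p}$ makes the dynamics strongly mixing, Langevin-like, within each interval. Summing over $M\asymp\delta^{-1}$ steps, the accumulated error scales like $\delta^{2-3p/\ldots}$. The window $1/2<p<2/3$ balances two requirements: the mixing must be fast enough that the conditional law is restored after each threshold crossing, which needs $p>1/2$; and the discretization error must vanish, which needs $p<2/3$. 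We reuse the corresponding lemma from the proof of Theorem~\ref{thm:main}, since the added correction is linear with bounded coefficients away from $t=\tau_s$. Bounded support keeps $\mu_t$ Lipschitz for $t\ge t_M>0$.

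\emph{Step 3 (switching times; main obstacle).} The delicate point is the entrance of direction $s$ into $\Stc$. There the coefficient $(e^{2(\lambda_t-\lambda_{\tau_s})}-1)^{-1}$ blows up, and the state is handed from the measurement-based sample to the prior dynamics. At this point we use the large $\eta_0$. Near $\tau_s$ the correction term acts as a strong restoring drift toward the exact conditional value. The heavy noise then mixes over an $O(\delta^{p'})$ window, so any mismatch from the discrete grid not landing exactly on $\tau_s$ is forgotten in TV. We plan to handle this window with a direct Gaussian computation, since the dynamics are nearly linear there, and to bound its TV contribution by $o(1)$. Since there are at most $r$ switching times, these contributions sum to $o(1)$. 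Combining them via the data-processing inequality and the chain rule for TV, then letting $\delta\to0$ and finally $t_M\to0$ (where $X_{t_M}\to X_0$ given $Y$), yields \eqref{eq:thm-main}.
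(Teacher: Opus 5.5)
\textbf{The identity at the heart of your Step~1 is false, and Steps~2--3 rest on it.} The correction in \eqref{eq:ddim-prior-dominated-extension} is not $v_s^\top(\mu_{t,y}-\mu_t)$.

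In the coupling your opening paragraph relies on, the measurement is realized as $v_s^\top X_{\tau_s}$ of the Markov forward process. That is what makes the measurement-dominated coordinates exact and conditionally independent. But then $X_0\to X_t\to X_{\tau_s}$ is a Markov chain, so $\mathbb{E}[X_0\mymid X_t,v_s^\top X_{\tau_s}]=\mu_t(X_t)$, and the shift is zero. In the original model, $\mu_{t,y}-\mu_t$ is a prior-dependent nonlinear reweighting; for $p_0$ a point mass it vanishes while the correction does not.

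What the term actually is: $\frac{\sigma_t^2}{\alpha_t}\nabla_{x_t}\log p(x_{\tau_s}\mymid x_t)$, the extra \emph{score} of the bridge density $p(x_t\mymid x_{\tau_s})$, converted by Tweedie's formula (this is the derivation in Appendix~\ref{subsec:extension}). A DDIM/DDPM step driven by $\nabla\log q_t$ transports $(q_t)$ only if $(q_t)$ is a forward-noising family, and the bridge family is not one: for $p_0=\delta_0$ its variance along $v_s$ is strictly below $\sigma_t^2$.

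Moreover, in neither model does the algorithm follow $X_t\mymid Y$ at intermediate times. Its non-measurement coordinates see only the regenerated $\widehat X_t^{\mathsf{st}}$, a noisier version of $\{v_s^\top X_{\tau_s}\}_{s\in\St}$. Conditioning on the latter would require an intractable $h$-transform. So there is no conditional reverse SDE for Step~2 to discretize. For the same reason, Lemma~\ref{lem:distri-xsc} does not transfer: it compares the $\eta_0=1$ kernel one step at a time with the unconditional reverse kernel.

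Your framing is also internally inconsistent. If the update really were DDIM with the exact $\mu_{t,y}$, $\eta_0=1$ would suffice and a large $\eta_0$ would be pointless. And in Theorem~\ref{thm:main} the correction is absent not because it ``aligns''. It is absent because after the common $\tau$ the reverse process is Markov, so the unconditional predictor is exact.

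\textbf{The missing idea: $\eta_0=\eta_1=\eta\to\infty$ drives the whole argument, not just the switching windows.}
\begin{itemize}
\item The paper's intermediate target is $p_k^{\mathsf{stc}}$, the law of $(X_{t_k}^{\mathsf{sc}},X_{t_k}^{\mathsf{c}})$ given $X_{t_k}^{\mathsf{st}}$ and the frozen values $\{v_s^\top X_{\tau_s}\}_{s\in\Stci{t_k}}$. Its score is exactly $\nabla\log p_{X_{t_k}}$ plus the Gaussian bridge term.
\item With $\eta\delta\to0$, the joint $(\mathsf{sc},\mathsf{c})$ update is a Langevin step of size $\gamma_k\asymp\eta\delta_{k+1}$ on $p_{k+1}^{\mathsf{stc}}$, up to a path-KL error that vanishes when $\eta^3\delta^2\to0$. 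This is where $p<2/3$ is used.
\item A KL tracking recursion follows. It combines contraction $e^{-2\kappa\gamma_k}$, target drift $\mathsf{KL}(\psi_k\Vert p_{k+1}^{\mathsf{stc}})-\mathsf{KL}(\psi_k\Vert p_k^{\mathsf{stc}})=O(\delta_{k+1})$, and discretization error $O(\gamma_k^2)$. The result is $\mathsf{KL}\lesssim 1/\eta+\max_k\gamma_k\to0$.
\item Threshold crossings are absorbed into this recursion, because at $t=\tau_s$ conditioning on $v_s^\top X_t$ and on $v_s^\top X_{\tau_s}$ coincide.
\item At $t_M\to0$ every observed direction is prior-dominated. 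The target is then $X_{t_M}\mymid\{v_s^\top X_{\tau_s}\}_s$, which matches $X_{t_M}\mymid Y$ because $(X_0,\{v_s^\top X_{\tau_s}\}_s)\disteq(X_0,\{\alpha_{\tau_s}\Sigma_{ss}^{-1}u_s^\top Y\}_s)$.
\end{itemize}

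Your Step~3 instinct is sound: the bridge coefficient does blow up near $\tau_s$, and the paper's uniform $\kappa$ and $O(\delta_{k+1})$ drift bound pass over this. But any window analysis would have to be built on this Langevin-tracking target. As written, both your Step~3 and the error count in Step~2 are only sketched.
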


    

\section{Experiments}
\label{sec:exp}

In this section, we conduct extensive empirical evaluations of the proposed \pddim algorithm on a diverse range of image restoration tasks, including inpainting, super-resolution, Gaussian deblurring, and compressed sensing.

We begin by describing the common experimental setup. Unless otherwise stated, the experiments are run with the NFE (number of function evaluations) set to 100, except for \texttt{DPS}, for which we follow \citet{zhang2024unleashing} and set the NFE to be $1000$. To justify our default choice,  we also investigate the effect of NFE and find that the reconstruction quality largely saturates once the NFE exceeds 80 (see Figure~\ref{fig:imgnet_sr}). 
All experiments are conducted on a server equipped with NVIDIA A100-PCIE-40GB GPUs and Intel Xeon (Cascade Lake) CPUs.


\subsection{Choices of  $\eta_0$ and $\eta_1$}
\label{subsec:exp-eta}

We evaluate the influence of the hyperparameters  $\eta_0$ and $\eta_1$ in \pddim, which govern the DDIM updates on $\mathcal{S}^{\mathrm{c}}$ and $\Stc$, respectively. Experiments are conducted on two image restoration tasks: 50\% random inpainting and $4\times$ super-resolution with average pooling, using the CelebA \citep{liu2015deep} and ImageNet \citep{deng2009imagenet} datasets. 
We set the noise standard deviation to $\sigma=0.05$ (doubled when pixels are rescaled to $[-1,1]$).
For ImageNet, we use the pretrained $256 \times 256$ model without classifier guidance from \citet{dhariwal2021diffusion}.
For CelebA, we use the pretrained model released with SDEdit \citep{meng2021sdedit}.
Following \citet{wang2022zero,zhang2024unleashing}, we evaluate on the 1K ImageNet sub-test set and the CelebA test set provided by \citet{wang2022zero}. Performance is measured using PSNR (peak SNR), SSIM \citep{wang2004image}, LPIPS \citep{zhang2018unreasonable}, and FID \citep{heusel2017gans}. For readability, all reported LPIPS values are multiplied by 100.

We vary $\eta_1$ from $0$ to $32$ while fixing $\eta_0=0$, and vary $\eta_0$ from $0$ to $2$ while fixing $\eta_1=16$.
The results for inpainting and super-resolution are reported in Tables \ref{tab:result_eta_inp} and \ref{tab:result_eta_sr}, respectively.
For fixed $\eta_0$, increasing $\eta_1$ generally improves performance across most evaluation metrics, although the improvement is not strictly monotone and small fluctuations are observed. This overall trend is consistent with our theoretical analysis that requires $\eta_1$ to be sufficiently large.
Turning to the second set of experiments, where $\eta_1$ is fixed and $\eta_0$ 
 is varied, we  find that fine-tuning $\eta_0$ can yield additional, albeit modest, performance gains. Empirically, the best empirical performance is attained at $\eta_0=0$, which corresponds to the deterministic DDIM update.

%


\begin{table}[t]
\centering
\caption{Empirical results for inpainting on CelebA and ImageNet under different $\eta$. 
}
\label{tab:result_eta_inp}
\begin{tabular}{c|c|cccc|cccc}
\hline
\multirow{2}{*}{$\eta_0$} & \multirow{2}{*}{$\eta_1$}
& \multicolumn{4}{c|}{CelebA}
& \multicolumn{4}{c}{ImageNet} \\
\cline{3-10}

&
& PSNR$\uparrow$ & SSIM$\uparrow$ & LPIPS$\downarrow$ & FID$\downarrow$
& PSNR$\uparrow$ & SSIM$\uparrow$ & LPIPS$\downarrow$ & FID$\downarrow$ \\\hline
                           & 0                          & 27.22                     & 0.79                  &  27.11                &  61.46       &  {21.24}                        &  {0.54}                        &  {46.72}                        & 89.56                        \\  
                           & 1                                 &  31.32                &  0.88                 &  16.97                 &     30.24       &  {26.13}                        &  {0.78}                        &  {26.17}                        & 37.76                      \\  
                           & 2                                 &   32.55                 &   0.90                  &   15.44                 &    26.62         &  {28.76}                        &  {0.86}                        &  {16.74}                        & 18.60              \\  
                           & 4                                 & 33.16                &  \textbf{0.91}                   &  15.07                   &  25.42           &  {30.72}                        &  \textbf{0.89}                        &  14.45                        & \textbf{15.74}              \\  
                           & 8                                 &  \textbf{33.23}                 &  \textbf{0.91}                   &  \textbf{15.06}                   &   \textbf{25.33}        &  \textbf{31.06}                        &  \textbf{0.89}                        &  \textbf{14.31}                        & 15.86           \\  
                           & 16                                &   \textbf{33.23}                 &   \textbf{0.91}                  &   \textbf{15.06}                 &  \textbf{25.33}            &  {\textbf{31.06}} &  {\textbf{0.89}} &  {\textbf{14.31}} & {15.85} \\  
\multirow{-7}{*}{0}        & 32           &  \textbf{33.23}                    &  \textbf{0.91}                        &  \textbf{15.06}      &                           \textbf{25.33}                          &  {\textbf{31.06}} &  {\textbf{0.89}} &  {\textbf{14.31}} & {15.85} \\ \hline
0  & &  \textbf{33.23}                 &   \textbf{0.91}                  &   \textbf{15.06}                 &  \textbf{25.33}            &  {\textbf{31.06}} &  {\textbf{0.89}} &  {\textbf{14.31}} & {15.85}\\
0.5        &                                &    33.03                &  0.90                    &  15.66                                  &   26.60          &  {30.86}                        &  {0.88}                        &  {15.33}                        & 17.89           \\   
1                          &                                 &   32.84                   &   0.90                  &  16.36                   &  28.42          &  {30.67}                        &  {0.88}                        &  {16.61}                        & 20.85            \\   
2                                & \multirow{-4}{*}{16}                &   32.51                   &    0.89                 &   17.89 & 33.21             &  {30.26}                        &  {0.86}                        &  {19.52}                        & 28.71          \\ \hline
\end{tabular}
\end{table}

\begin{table}[t]
\centering
\caption{Empirical results for super-resolution on CelebA and ImageNet under different $\eta$.}
\label{tab:result_eta_sr}
\begin{tabular}{c|c|cccc|cccc}
\hline
\multirow{2}{*}{$\eta_0$} & \multirow{2}{*}{$\eta_1$}
& \multicolumn{4}{c|}{CelebA}
& \multicolumn{4}{c}{ImageNet} \\
\cline{3-10}

&
& PSNR$\uparrow$ & SSIM$\uparrow$ & LPIPS$\downarrow$ & FID$\downarrow$
& PSNR$\uparrow$ & SSIM$\uparrow$ & LPIPS$\downarrow$ & FID$\downarrow$ \\\hline
                           & 0                          &  {27.77}          &  {0.78}           &  {22.49}             & 32.97                  & 22.48                       & 0.56                    &  38.50                    &  53.08           \\  
                           & 1                          &   {28.39}          &  {0.80}           &  \textbf{21.09}             & \textbf{31.24}                  &    24.81                 &  0.70                   &  \textbf{31.22}                 &   \textbf{39.94}           \\  
                           & 2                          &   {28.92}          &  {0.82}           &  {21.31}             & 32.56                  &   25.67                   &  0.72                   &  32.03                  &  45.74              \\  
                           & 4                          &   {29.59}          &  {0.83}           &  {21.85}             & 33.49                  &  \textbf{25.93}                    & \textbf{0.73}                 &  32.73                   &  49.06            \\  
                           & 8                          &  {29.83}          &  \textbf{0.84}           &  {22.02}             & 34.13                  &  25.90                   &  \textbf{0.73}                    &  32.84                    &  49.99           \\  
                           & 16                         &  \textbf{29.84}          &  \textbf{0.84}           &  {22.02}             & 34.16                  &  25.90                    &  \textbf{0.73}                    & 32.84                    & 49.99              \\  
\multirow{-7}{*}{0}        & 32                         &  \textbf{29.84}          &  \textbf{0.84}           &  {22.02}             & 34.16                  &  25.90                    &  \textbf{0.73}                   &  32.84                    &  49.99             \\ \hline
0 & &  \textbf{29.84}          &  \textbf{0.84}           &  {22.02}             & 34.16                  &  25.90                    &  \textbf{0.73}                    & 32.84                    & 49.99 \\
0.5                        &                            &  {29.48}          &  {0.83}           &  {22.86}             & 35.35                  &  25.64                     &  0.72                   &  34.11                   &  52.27            \\   
1                          &                            &  {29.21}          &  {0.82}           &  {23.57}             & 37.15                  &    25.44                  & 0.70                     &   35.23                   &  55.03             \\   
2                          & \multirow{-4}{*}{16}       &  {28.82}          &  {0.81}           &  {24.89}             & 41.79                  &  25.13                    &   0.69                   &  37.24                   &   61.86          \\ \hline
\end{tabular}
\end{table}


\subsection{Empirical comparisons with reference algorithms}
\label{subsec:exp-comparison}

Next, we compare \pddim against several representative diffusion-based image restoration algorithms, including \texttt{\texttt{DDRM}} \citep{kawar2022denoising}, \texttt{\texttt{DDNM+}} \citep{wang2022zero}, \texttt{\texttt{DPS}} \citep{chung2022diffusion}, \texttt{\texttt{RED-diff}} \citep{mardani2023variational}, and \texttt{ProjDiff} \citep{zhang2024unleashing}.\footnote{In addition to our algorithm, we also implement \texttt{ProjDiff} and \texttt{DDNM+}.
The remaining results reported in this section are taken from \citet{zhang2024unleashing}.} As a classical baseline, we also include the least-squares estimator, $\widehat{X}^{\mathsf{ls}} = A^{\dagger}Y$. In addition to the inpainting and super-resolution tasks considered in Section~\ref{subsec:exp-eta}, we further evaluate all methods on Gaussian deblurring and compressed sensing.  For Gaussian deblurring, we use a one-dimensional Gaussian blur kernel of size \(5\) with standard deviation \(10\). For compressed sensing, we employ a Walsh--Hadamard sampling matrix with a sampling ratio of \(0.5\). Throughout all experiments, the observation noise standard deviation is fixed at \(\sigma=0.05\).

In addition, note that the posterior mean can be approximately computed via Monte Carlo sampling, i.e,
$$
\mathbb{E}[X_0\mymid Y] \approx \frac{1}{n}\sum_{i=1}^N \widehat{X}^{(i)},
$$
where $\widehat{X}^{(i)}\overset{\mathsf{i.i.d.}}{\sim} p_{X_0\mymid Y}$.
As is well-known,  the posterior mean achieves the optimal reconstruction accuracy under the mean squared estimation error metric. 
%
%
Consequently, reconstruction fidelity can be improved by averaging samples generated from independent noise realizations.
We observe, however, that this approach may sometimes degrade quality as measured by FID. 
To illustrate this effect, we report both single-sample results and posterior-mean results, where the latter are obtained by averaging 4 independent trials. 

The results are reported in Tables \ref{tab:sr}--\ref{tab:cs}. Boldface indicates the best overall result, while underlining indicates the best sampling-based result. 
Across all tasks and datasets, the proposed \pddim method consistently delivers strong performance. 
For super-resolution, it achieves the best performance on three and two of the four evaluation metrics on CelebA and ImageNet datasets respectively. For inpainting, it outperforms the reference algorithms on all four metrics for CelebA and on two metrics for ImageNet.
For Gaussian deblurring, \pddim achieves the best performance on three metrics for CelebA and two metrics for ImageNet.
For compressed sensing, \pddim outperforms \texttt{DDNM+} across all of the four evaluation metrics.
Overall, whereas existing methods typically achieve the best result on only one or two metrics in most settings, our method attains the best performance on \textbf{at least two} evaluation metrics across all tasks and datasets, demonstrating its efficacy and robustness across a diverse range of inverse problems.

\begin{table*}[t]
\centering
\caption{Empirical results for super-resolution on CelebA and ImageNet.}
\label{tab:sr}
\resizebox{\textwidth}{!}{
\begin{tabular}{c|c|cccc|cccc}
\hline
\multirow{2}{*}{} & \multirow{2}{*}{Method}
& \multicolumn{4}{c|}{CelebA}
& \multicolumn{4}{c}{ImageNet} \\
\cline{3-10}

&
& PSNR$\uparrow$ & SSIM$\uparrow$ & LPIPS$\downarrow$ & FID$\downarrow$
& PSNR$\uparrow$ & SSIM$\uparrow$ & LPIPS$\downarrow$ & FID$\downarrow$ \\
\hline

\multirow{8}{*}{\makecell{Single-\\sample}}
& $A^\dagger y$
& 23.64 & 0.49 & 68.72 & 147.89
& 21.85 & 0.45 & 65.34 & 183.32 \\

& \texttt{DPS}
& 27.98 & 0.78 & 23.10 & 39.91
& 24.44 & 0.67 & \underline{\textbf{31.81}} & \underline{\textbf{36.17}} \\

& \texttt{DDRM}
& 29.20 & 0.82 & 21.92 & 40.14
& 25.66 & 0.72 & 34.88 & 55.71 \\


& \texttt{RED-diff}
& 24.98 & 0.55 & 50.59 & 73.89
& 22.74 & 0.49 & 53.24 & 96.26 \\

& \texttt{DDNM+}
& 29.20 & 0.82 & 21.91 & 39.96
& 25.62 & 0.72 & 34.39 & 53.78 \\

& \texttt{ProjDiff}
& 29.49 & 0.83 & \textbf{\underline{20.89}} & 36.61
& 25.73 & 0.72 & 33.03 & 49.70 \\

& \textbf{\pddim (Ours)}
& \underline{29.84} & \underline{0.84} & 22.02 & \underline{\textbf{34.16}}
& \underline{25.90} & \underline{{0.73}} & 32.84 & 49.99 \\

\hline

\multirow{3}{*}{\makecell{Posterior\\ mean}}
& \texttt{DDNM+}
& 30.22 & \textbf{0.85} & 21.97 & 43.54
& 26.11 & 0.73 & 34.38 & 54.27 \\

& \texttt{ProjDiff}
& 30.31 & \textbf{0.85} & 22.55 & 38.34
& 26.10 & \textbf{0.74} & 33.13 & 50.27 \\

& \textbf{\pddim (Ours)}
& \textbf{30.35} & \textbf{0.85} & 22.55 & 40.62
& \textbf{26.13} & \textbf{0.74} & 33.35 & 52.91 \\

\hline
\end{tabular}
}
\end{table*}

\begin{table*}[t]
\centering
\caption{Empirical results for inpainting on CelebA and ImageNet. }
\label{tab:inpainting}
\resizebox{\textwidth}{!}{
\begin{tabular}{c|c|cccc|cccc}
\hline
\multirow{2}{*}{} & \multirow{2}{*}{Method}
& \multicolumn{4}{c|}{CelebA}
& \multicolumn{4}{c}{ImageNet} \\
\cline{3-10}

& 
& PSNR$\uparrow$ & SSIM$\uparrow$ & LPIPS$\downarrow$ & FID$\downarrow$
& PSNR$\uparrow$ & SSIM$\uparrow$ & LPIPS$\downarrow$ & FID$\downarrow$ \\
\hline

\multirow{8}{*}{\makecell{Single-\\sample}}
& $A^\dagger y$
& 13.70 & 0.19 & 76.07 & 226.28
& 14.21 & 0.24 & 67.42 & 176.52 \\

& \texttt{DPS}
& 32.80 & 0.90 & 16.32 & 32.80
& 30.15 & 0.86 & 17.76 & 22.03 \\

& \texttt{DDRM}
& 32.81 & 0.90 & 16.78 & 35.28
& 29.99 & 0.87 & 17.11 & 19.88 \\


& \texttt{RED-diff}
& 7.96 & 0.18 & 78.27 & 192.32
& 9.85 & 0.17 & 83.92 & 281.65 \\

& \texttt{DDNM+}
& 32.81 & 0.90 & 16.78 & 35.33
& 30.00 & 0.87 & 17.07 & 19.92 \\

& \texttt{ProjDiff}
& \underline{33.44} & \underline{0.91} & 15.32 & 31.13
& \underline{31.32} & \underline{0.89} & \underline{13.87} & \textbf{\underline{14.97}} \\

& \textbf{\pddim (Ours)}
& 33.23 & \underline{0.91} & \underline{15.06} & \underline{\textbf{25.33}}
& 31.06 & \underline{0.89} & 14.31 & 15.85 \\

\hline

\multirow{3}{*}{\makecell{Posterior\\ mean}}
& \texttt{DDNM+}
& 33.98 & \textbf{0.92} & 16.06 & 37.64
& 31.22 & 0.89 & 16.44 & 22.07 \\

& \texttt{ProjDiff}
& 34.22 & \textbf{0.92} & 14.41 & 32.40 
& \textbf{32.07} & \textbf{0.90} & 13.78 & 16.57 \\

& \textbf{\pddim (Ours)}
& \textbf{34.24} & \textbf{0.92} & \textbf{13.81} & 27.15
& 32.04 & \textbf{0.90} & \textbf{13.67} & 16.75 \\

\hline
\end{tabular}
}
\end{table*}

Figure~\ref{fig:celeba_inp} shows the performance of \pddim, \texttt{DDNM+}, and \texttt{ProjDiff} under different numbers of Monte Carlo samples used to estimate the posterior mean. As discussed previously, increasing the number of samples improves reconstruction fidelity, as reflected by PSNR, SSIM, and LPIPS, but comes at the expense of reduced sample diversity, leading to worse FID scores. 
Furthermore, among all reference algorithms considered, our method achieves the best trade-off between fidelity and diversity on this task.

Figure~\ref{fig:imgnet_sr} illustrates the effect of the NFE on the super-resolution task using the ImageNet dataset. As the NFE increases, the performance improves, and becomes stable once NFE exceeds $80$, suggesting that our default choice of NFE=100 provides a favorable balance between computational cost and reconstruction quality.

\begin{figure}[t]
    \centering
    \includegraphics[width=\textwidth]{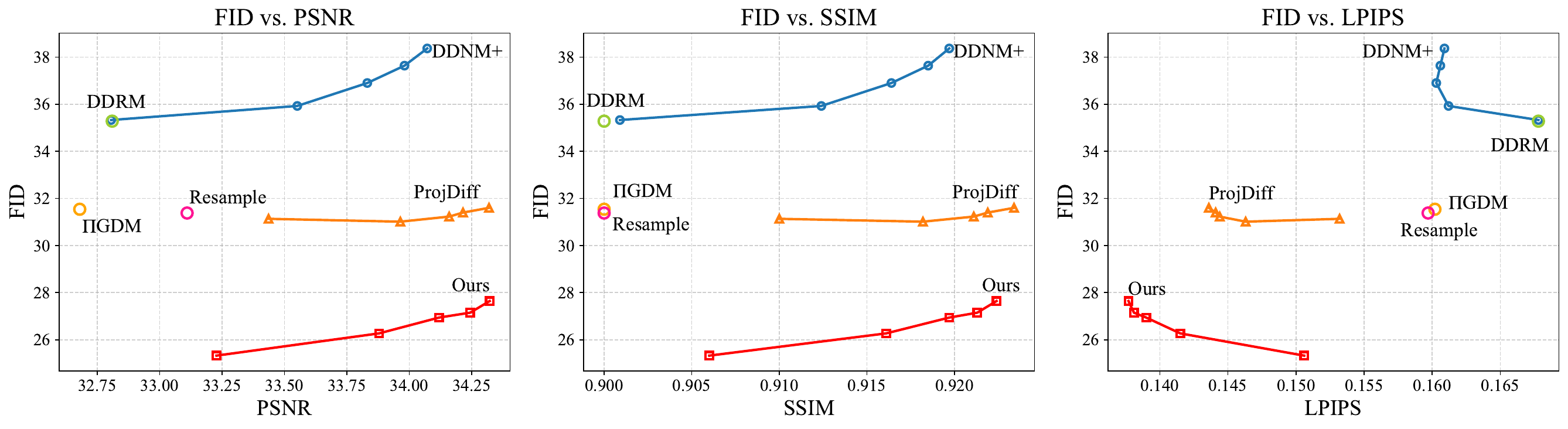}
    \caption{FID versus PSNR, SSIM, and LPIPS for different inpainting algorithms on the CelebA dataset.}
    \label{fig:celeba_inp}
\end{figure}


\begin{figure}[t]
    \centering
    \includegraphics[width=\textwidth]{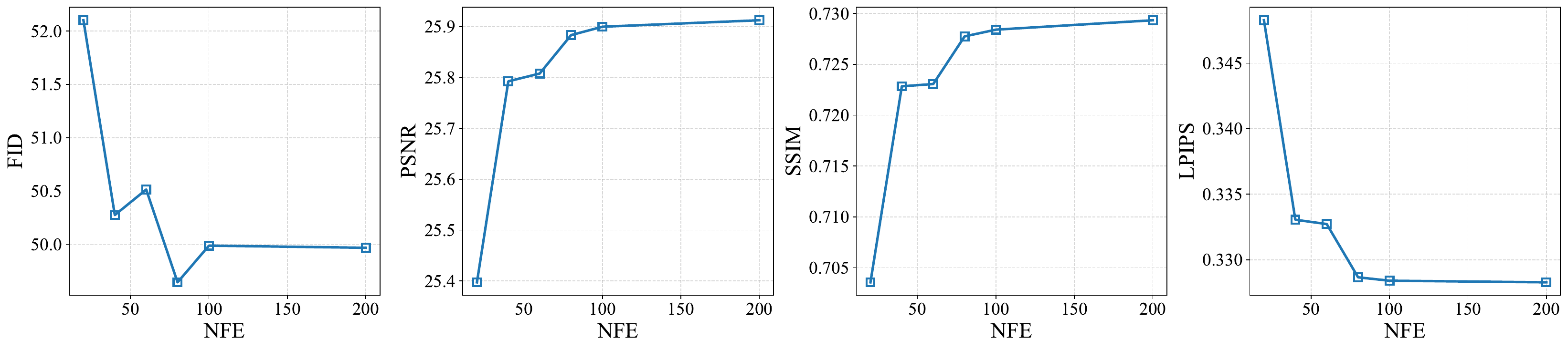}
    \caption{Effect of the NFE on the ImageNet super-resolution task.}
    \label{fig:imgnet_sr}
\end{figure}

\begin{table*}[t]
\centering
\caption{Empirical results for Gaussian deblurring on CelebA and ImageNet.\protect\footnotemark[2]}
\label{tab:deblurring}
\resizebox{\textwidth}{!}{
\begin{tabular}{c|c|cccc|cccc}
\hline
\multirow{2}{*}{} & \multirow{2}{*}{Method}
& \multicolumn{4}{c|}{CelebA}
& \multicolumn{4}{c}{ImageNet} \\
\cline{3-10}

&
& PSNR$\uparrow$ & SSIM$\uparrow$ & LPIPS$\downarrow$ & FID$\downarrow$
& PSNR$\uparrow$ & SSIM$\uparrow$ & LPIPS$\downarrow$ & FID$\downarrow$ \\
\hline

\multirow{8}{*}{\makecell{Single-\\sample}}
& $A^\dagger y$
& 23.64 & 0.49 & 68.72 & 147.89
& 17.78 & 0.29 & 60.25 & 100.05 \\

& \texttt{DPS}
& 27.98 & 0.78 & 23.10 & 39.91
& 24.26 & 0.64 & 37.03 & 50.17 \\

& \texttt{DDRM}
& 29.20 & 0.82 & 21.92 & 40.14
& 27.82 & 0.79 & 28.62 & 45.96 \\

& \texttt{DDNM+}
& 29.19 & 0.82 & 21.89 & 40.20
& 27.90 & 0.79 & 28.63 & 46.54 \\

& \texttt{RED-diff}
& 24.98 & 0.55 & 50.59 & 73.89
& 23.74 & 0.50 & 48.12 & 68.23 \\

& \texttt{ProjDiff}
& \underline{31.41} & \underline{\textbf{0.87}} & \textbf{\underline{18.12}} & 34.89
& \underline{27.91} & \underline{\textbf{0.79}} & 23.85 & 29.12 \\

& \textbf{\pddim (Ours)}
& 30.18 & 0.84 & 18.90 & \underline{27.70}
& 26.99 & 0.76 & \underline{23.71} & \underline{24.54} \\

\hline

\multirow{2}{*}{\makecell{Posterior\\ mean}}
& \texttt{ProjDiff}
& \textbf{31.44} & \textbf{0.87} & 18.22 & 35.96
& \textbf{27.96} & \textbf{0.79} & 23.93 & 29.41 \\

& \textbf{\pddim (Ours)}
& \textbf{31.44} & \textbf{0.87} & 18.18 & \textbf{27.15}
& 27.28 & 0.77 & \textbf{23.05} & \textbf{23.69} \\

\hline
\end{tabular}
}
\end{table*}

\footnotetext[2]{When reproducing \texttt{DDNM+} on the Gaussian deblurring task, we observe that the obtained results are substantially inferior to those reported in \citet{zhang2024unleashing}. Consequently, we adopt the results from \citet{zhang2024unleashing}; since only the published single-sample results are available, posterior-mean results for \texttt{DDNM+}.  
}

\begin{table*}[t]
\centering
\caption{Empirical results for compressed sensing on CelebA and ImageNet.\protect\footnotemark[3]}
\label{tab:cs}
\resizebox{\textwidth}{!}{
\begin{tabular}{c|c|cccc|cccc}
\hline
\multirow{2}{*}{} & \multirow{2}{*}{Method}
& \multicolumn{4}{c|}{CelebA}
& \multicolumn{4}{c}{ImageNet} \\
\cline{3-10}

&
& PSNR$\uparrow$ & SSIM$\uparrow$ & LPIPS$\downarrow$ & FID$\downarrow$
& PSNR$\uparrow$ & SSIM$\uparrow$ & LPIPS$\downarrow$ & FID$\downarrow$ \\
\hline

\multirow{3}{*}{\makecell{Single-\\sample}}

& $A^\dagger y$
& 14.96 & 0.34 & 66.48 & 397.50
& 15.08 & 0.37 & 60.45 & 257.70 \\

& \texttt{DDNM+}
& 29.05 & 0.88 & 18.46 & 36.43
& 23.64 & 0.81 & 21.10 & 25.38 \\


& \textbf{\makecell{\pddim (Ours)}}
& 28.94 & 0.88 & 16.91 & \textbf{26.69}
& 23.39 & 0.82 & 18.67 & \textbf{19.85} \\

\hline

\multirow{2}{*}{\makecell{Posterior\\ mean}}
& \texttt{DDNM+}
& 30.10 & \textbf{0.90} & 17.72 & 39.07
& 24.40 & 0.83 & 20.02 & 25.54 \\


& \textbf{\makecell{\pddim  (Ours)}}
& \textbf{30.35} & \textbf{0.90} & \textbf{15.20} & 28.57
& \textbf{24.70} & \textbf{0.85} & \textbf{16.68} & 18.73 \\

\hline
\end{tabular}
}
\end{table*}

\footnotetext[3]{   Results for the compressed sensing task are not implemented in \citet{zhang2024unleashing}; therefore, we compare only with \texttt{DDNM+}. We note that \citet{wang2022zero} reported that \texttt{DDNM+} outperforms \texttt{DDRM} with PSNR, SSIM, and FID metrics, and our algorithm outperforms \texttt{DDNM+}.}

\section{Discussion}

In this work, we have proposed a novel DDIM-type posterior sampler for solving linear inverse problems with diffusion priors.  In contrast to many existing methods that rely on heuristic projection steps or approximate posterior scores, our approach follows a different route by treating different singular directions of the measurement operator differently according to the relative strengths of the observation noise and the diffusion noise. This leads to a simple, easy-to-implement, DDIM-type sampling procedure that is both practically effective and theoretically principled. In particular, we have established that, under mild conditions,  the proposed \pddim sampler converges to the true posterior distribution of the signal given the measurement. 
Empirically, our experiments on a range of real-world image restoration tasks have shown that  the proposed method consistently performs favorably against existing diffusion-based posterior samplers, achieving the best performance on the majority of evaluation metrics across all tasks.

Several directions merit further investigation. For instance, the present work has focused on  linear inverse problems, and a fundamentally important next step is to extend the proposed framework to broader nonlinear inverse problems, which naturally arise in wide-ranging real-world applications. In addition, while our theoretical analysis has established asymptotic convergence to the true posterior distribution, it falls short of characterizing the rate of convergence, which would be of  interest to quantify. It would also be valuable to understand how the convergence behavior and sampling quality depend on additional factors such as the noise level, the properties of the measurement operator, and the choice of time discretization of the diffusion process.

\section*{Acknowledgments}
Y.~Chen is supported in part by the Alfred P.~Sloan Research Fellowship,  the ONR grant N00014-25-1-2344, 
the NSF grants 2221009 and 2218773, 
the Wharton AI \& Analytics Initiative's AI Research Fund, 
and the Amazon Research Award. 
G.~Li is supported in part by the Chinese University of Hong Kong Direct Grant for Research and the Hong Kong Research Grants Council ECS 24305724 and GRF 14307525.
C.~Cai is supported in part by the NSF grants CAREER award CCF-2541600 and DMS-2515333. 

\appendix

\section{Proof of Theorem \ref{thm:main}}
\label{sec:analysis}





For convenience, we introduce the following notations: for any vector $X_{t_i}\in\mathbb{R}^d$, denote
$$
X_{t_i}^{\mathsf{st}}\coloneqq V_{\Sti{t_i}}^{\top}{X}_{t_i},\qquad {X}_{t_i}^{\mathsf{sc}}\coloneqq V_{\Stci{t_i}}^{\top}X_{t_i},\qquad X_{t_i}^{\mathsf{c}}\coloneqq V_{\mathcal{S}^{\mathsf{c}}}^{\top}X_{t_i},\qquad X_{t_i}^{\mathsf{s}}\coloneqq V_{\mathcal{S}}^{\top}X_{t_i},
$$
the projection of observation
$\overline{Y}_{t_i}\coloneqq \Sigma_{\Sti{t_i}}^{-1}U_{\Sti{t_i}}^{\top} Y$, and the noise $\overline{Z}_{t_i}\coloneqq [Z_s]_{s\in {\Sti{t_i}}}$ used to update $V_{\Sti{t_i}}^{\top}\widehat{X}_{t_i}$. Additionally, throughout this paper, we denote by $\mathsf{KL}(p \,\Vert\, q)$ the Kullback-Leibler (KL) divergence between two probability measures $p$ and $q$.

We intend to establish the following stronger conclusion than Theorem \ref{thm:main}: for any $t_i$, conditioned on $V_{\Sti{t_i}}^{\top}\widehat{X}_{t_i}$ and $v_{s}^{\top}\widehat{X}_{\tau_s}$for all $\tau_s> t_i$, we have
\begin{align*}
    \mathsf{TV}\big(p_{\widehat{X}_{t_i}\mid V_{\Sti{t_i}}^{\top}\widehat{X}_{t_i}=V_{\Sti{t_i}}^{\top}x_{t_i}, v_{s}^{\top}\widehat{X}_{\tau_s}=v_s^{\top}x_{\tau_s}}, p_{X_{t_i} \mid V_{\Sti{t_i}}^{\top}X_{t_i} = V_{\Sti{t_i}}^{\top}x_{t_i}, v_{s}^{\top}X_{\tau_s} = v_{s}^{\top}x_{\tau_s}}) \to 0.
\end{align*}
When $t_M$ is sufficiently close to $0$, the set $\Sti{t_M}$ becomes empty, and we obtain
\begin{align*}
    \mathsf{TV}\big(p_{\widehat{X}_{t_M}\mid  v_{s}^{\top}\widehat{X}_{\tau_s}=v_s^{\top}x_{\tau_s}}, p_{X_{t_M} \mid  v_{s}^{\top}X_{\tau_s} = v_{s}^{\top}x_{\tau_s}}) \to 0.
\end{align*}
Moreover, noticing that $v_{s}^{\top}\widehat{X}_{\tau_s}=\Sigma_{ss}^{-1}u_s^{\top}Y$, and $(X_{t_M},v_s^{\top}X_{\tau_s})$ have the same joint distribution as $(X_{t_M},\Sigma_{ss}^{-1}u_s^{\top}Y)$, we can complete the proof of Theorem \ref{thm:main}.
\paragraph{Step 1. construct auxiliary sequences.}
We begin with introducing some auxiliary sequences.
Note that $\sum_{i=1}^{M-1}\delta_{i}=\Theta(1)$.
Denote
\begin{align}\label{eq:def-plambd}
\gamma_i \coloneqq \frac{\eta_1(1-{\rm e}^{-2\delta_{i+1}})}{2}\sigma_{t_{i+1}}^2,\qquad p_{\lambda_i}(\cdot,\cdot) \coloneqq p_{X_{t_i}^{\mathsf{c}},X_{t_i}^{\mathsf{sc}}|X_{t_i}^{\mathsf{st}}}(\cdot,\cdot|\alpha_{t_{i}}\overline{Y}_{t_{i}} + \widehat{\sigma}_{t_{i}}\overline{Z}_{t_{i}}),
\end{align}
where $\widehat{\sigma}_{t_i}$ is defined in \eqref{eq:def-xhat-t}.
We have for $\delta_{i+1}\le 1/2$,
\begin{align}\label{eq:proof-bound-gamma}
\frac12\eta_1\delta_{i+1}\sigma_{t_{i+1}}^2 = \frac{\eta_1 \delta_{i+1}}{2}\sigma_{t_{i+1}}^2 \le \gamma_i \le \frac{\eta_1 \cdot 2\delta_{i+1}}{2} \sigma_{t_{i+1}}^2 =\eta_1\delta_{i+1}\sigma_{t_{i+1}}^2.
\end{align}

An auxiliary sequence $\widetilde{X}_{k}$ with initialization $\widetilde{X}_0 = \widehat{X}_{t_0}$ is iteratively defined as 
\begin{subequations}\label{eq:def-auxiliary-sequence}
\begin{align}
    \widetilde{X}_{k+1}^{\mathsf{st}} &=\alpha_{t_{k+1}}\overline{Y}_{t_{k+1}} + \widehat{\sigma}_{t_{k+1}}\overline{Z}_{t_{k+1}},\notag\\
\widetilde{X}_{k+1}^{\mathsf{sc}} &= \frac{\alpha_{t_{k+1}}}{\alpha_{t_k}}\widetilde{X}_{t_{k}}^{\mathsf{sc}} + \frac{\alpha_{t_{k+1}}\sigma_{t_k}^2}{\alpha_{t_{k}}}\left(1-{\rm e}^{-2\delta_{k+1}}\right)\nabla_{\widetilde{X}_{k}^{\mathsf{sc}}} \log p_{\lambda_k}(\widetilde{X}_{k}^{\mathsf{c}},\widetilde{X}_{k}^{\mathsf{sc}}) + \sigma_{t_{k+1}}\sqrt{1 - e^{-2\delta_{k+1}}} Z_{k}^{\mathsf{sc}},\notag\\
\widetilde{X}_{k+1}^{\mathsf{c}} &= \widetilde{X}_{k}^{\mathsf{c}} +  \gamma_k \nabla_{\widetilde{X}_k^{\mathsf{c}}} \log p_{\lambda_{k+1}}(\widetilde{X}_{k}^{\mathsf{c}},\widetilde{X}_{k+1}^{\mathsf{sc}}) + \sqrt{2\gamma_k} Z_k^{\mathsf{c}},
\end{align}
\end{subequations}
where $Z_k^{\mathsf{sc}}$ and $Z_k^{\mathsf{c}}$ are independent Gaussian random processes.
Another auxiliary sequence is defined as
\begin{align}\label{eq:def-auxiliary-sequence-bar}
\overline{X}_{k+1}^{\mathsf{st}} &=\alpha_{t_{k+1}}\overline{Y}_{t_{k+1}} + \widehat{\sigma}_{t_{k+1}}\overline{Z}_{t_{k+1}},\quad
\overline{X}_{k+1}^{\mathsf{sc}} \sim p_{X_{k+1}^{\mathsf{sc}}\mid X_k}(\cdot\mid \overline{X}_{k}),\quad
\overline{X}_{k+1}^{\mathsf{c}} \sim p_{X_{k+1}^{\mathsf{c}}\mid X_{k+1}^{\mathsf{s}}}(\cdot\mid \overline{X}_{k+1}^{\mathsf{s}}).
\end{align}
Based on these sequences, we have the following lemma, whose proof is postponed to Section \ref{subsec:proof-lem-convergence-auxiliary-sequence}.
\begin{lemma}\label{lem:convergence-auxiliary-sequence}
The auxiliary sequence defined in \eqref{eq:def-auxiliary-sequence} satisfies
\begin{align}
\mathsf{KL}(p_{\widetilde{X}_{0:M}\mymid Y,Z} \,\Vert\, p_{\widehat{X}_{t_{0:M}}\mymid Y,Z})\to 0,\qquad \mathrm{as}\quad \eta_1^4\delta^3\to 0, \quad \mathrm{and}\quad \eta_1\to\infty.
\end{align}
This implies that conditioned on observation $Y$ and Gaussian noise $Z$, random variables $\widetilde{X}_{0:M}$ converges to the same distribution with $\widehat{X}_{t_{0:M}}$ if this limit exists.
Moreover, the sequence defined in \eqref{eq:def-auxiliary-sequence-bar} satisfies $p_{\overline{X}_k \mymid V_{\Sti{t_k}}^{\top}\overline{X}_{k}, v_{s}^{\top}\overline{X}_{k_{\tau_s}}} = p_{X_{t_k} \mymid V_{\Sti{t_k}}^{\top}X_{t_k}, v_{s}^{\top}X_{\tau_s}}$, where $k_{\tau_s}$ satisfies $t_{k_{\tau_s}} = \tau_s$.
\end{lemma}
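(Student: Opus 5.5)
The plan is to bound the path-space KL with the chain rule, one transition at a time, conditioning throughout on $(Y,Z)$. Both $\widetilde X$ and $\widehat X_{t_\cdot}$ are Markov. Given $(Y,Z)$, their $\mathsf{st}$-blocks coincide deterministically: both equal $\alpha_{t_{k+1}}\overline Y_{t_{k+1}}+\widehat\sigma_{t_{k+1}}\overline Z_{t_{k+1}}$. Hence
\begin{align*}
\mathsf{KL}\big(p_{\widetilde X_{0:M}\mid Y,Z}\,\Vert\,p_{\widehat X_{t_{0:M}}\mid Y,Z}\big)=\sum_{k=0}^{M-1}\mathbb E\,\mathsf{KL}\big(p_{\widetilde X_{k+1}^{\mathsf{sc}},\widetilde X_{k+1}^{\mathsf c}\mid \widetilde X_k}\,\Vert\,p_{\widehat X_{t_{k+1}}^{\mathsf{sc}},\widehat X_{t_{k+1}}^{\mathsf c}\mid \widehat X_{t_k}=\widetilde X_k}\big).
\end{align*}
Each summand compares two Gaussians, so it reduces to a squared mean difference divided by a variance, provided the covariances match exactly.

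\textbf{Matching the noise levels.} For the $\mathsf{sc}$ block ($\eta_0=1$), the update \eqref{eq:ddim-prior-dominated} is the SDE-DPMSolver-1 step. Its noise variance is $\sigma_{t_{k+1}}^2(1-e^{-2\delta_{k+1}})$, exactly as in \eqref{eq:def-auxiliary-sequence}. For the $\mathsf c$ block, the DDIM noise variance is $\sigma_{t_{k+1}}^2\eta_1(1-e^{-2\delta_{k+1}})=2\gamma_k$ by definition \eqref{eq:def-plambd}. We also need $1-\eta_1(1-e^{-2\delta})\ge 0$; this holds since $\eta_1\delta=\delta^{1-p}\to0$.

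\textbf{Matching the drifts.} I rewrite $\mu_{t}$ via Tweedie \eqref{eq:data-noise-predictor-relation} as $\mu_t(x)=(x+\sigma_t^2\nabla\log p_{X_t}(x))/\alpha_t$. Two structural facts then handle the conditioning in $p_{\lambda_k}$.
\begin{enumerate}
\item For the $\mathsf c$ block, $\nabla_{x^{\mathsf c}}\log p_{X^{\mathsf c},X^{\mathsf{sc}}\mid X^{\mathsf{st}}}=\nabla_{x^{\mathsf c}}\log p_{X_t}$, because conditioning on other coordinates only subtracts a term that does not depend on $x^{\mathsf c}$.
\item Since all nonzero singular values are equal, all $\tau_s$ coincide. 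Hence at each step either $\Sti{t_k}=\mathcal S$ or $\Stci{t_k}=\mathcal S$. Whenever the $\mathsf{sc}$ block is nonempty, the conditioning set is empty and $p_{\lambda_k}$ is the unconditional marginal.
\end{enumerate}
Given these facts, the $\mathsf{sc}$ transition of the algorithm coincides with the one in \eqref{eq:def-auxiliary-sequence} up to $O(\delta^2)$ terms in the coefficients. These contribute per-step KL $O(\delta^4/\delta)$, which is negligible after summing over $M\asymp\delta^{-1}$ steps.

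\textbf{Main obstacle: the $\mathsf c$ block.} I would expand the DDIM coefficient as $\sqrt{1-\eta_1(1-e^{-2\delta})}=1-\eta_1\delta+O(\eta_1^2\delta^2)$. The DDIM mean is then $x^{\mathsf c}+\gamma_k\nabla\log p_{X_{t_k}}$ plus the following residuals:
\begin{enumerate}
\item an $O(\delta)$ drift coming from the deterministic DDIM/ODE part, which is not multiplied by $\eta_1$;
\item $O(\eta_1^2\delta^2)$ expansion errors;
\item a score-mismatch error of size $\gamma_k$ times the change of the score between $(t_k,\widetilde X_k^{\mathsf{sc}})$ and $(t_{k+1},\widetilde X_{k+1}^{\mathsf{sc}})$.
\end{enumerate}
For the score-mismatch error I use Lipschitz continuity of scores in both space and $\lambda$ on $t\ge t_M$, which follows from bounded support of $X_0$. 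The increment $\widetilde X_{k+1}^{\mathsf{sc}}-\widetilde X_k^{\mathsf{sc}}$ is $O(\sqrt\delta)$, so this error is $O(\eta_1\delta^{3/2})$. Dividing squared mean errors by the variance $\asymp\eta_1\delta$ and summing over $\delta^{-1}$ steps gives a total of order
\begin{align*}
\frac{1}{\eta_1}+\eta_1^3\delta^2+\eta_1\delta .
\end{align*}
This vanishes when $\eta_1\to\infty$ and $\eta_1^4\delta^3\to0$; with $\eta_1=\delta^{-p}$ the middle term is $\delta^{2-3p}\to0$ because $p<2/3$. The $1/\eta_1$ term from residual (1) is the reason $\eta_1\to\infty$ is needed. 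Controlling this term uniformly, with moment bounds on the scores along $\widetilde X$, is where I expect the real work to lie.

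\textbf{The $\overline X$ claim.} I would argue by induction on $k$. The $\mathsf{st}$ coordinates have the law of $V_{\Sti{t_k}}^\top X_{t_k}$ by the marginal-matching construction \eqref{eq:DDIM-St}. The $\mathsf{sc}$ coordinates are drawn from the exact reverse transition of the forward Markov chain. The $\mathsf c$ coordinates are drawn from the exact conditional given the $\mathsf s$ coordinates. At the switching time $t_{k_{\tau_s}}=\tau_s$, the value $v_s^\top\overline X_{k_{\tau_s}}$ is retained in the conditioning. Combining the Markov property of $(X_t)$ in reverse time with these three exact steps propagates the identity $p_{\overline X_k\mid V_{\Sti{t_k}}^\top\overline X_k,\,v_s^\top\overline X_{k_{\tau_s}}}=p_{X_{t_k}\mid V_{\Sti{t_k}}^\top X_{t_k},\,v_s^\top X_{\tau_s}}$ from step $k$ to step $k+1$.
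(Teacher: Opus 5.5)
\textbf{The KL bound.} This part is the paper's own argument. It uses the same chain rule over transitions with coinciding $\mathsf{st}$ blocks. The $\mathsf{sc}$ kernels agree, in fact exactly by the rewriting \eqref{eq:proof-alg-xsc}, so the $O(\delta^2)$ slack you allow is zero. The $\mathsf c$ kernels are Gaussians with common variance $2\gamma_k$. You split the mean discrepancy the same way the paper does: an $O(\delta)$ ODE drift (the paper's $e_{t_k,1}+e_{t_k,2}$), an $O(\gamma_k^2)$ expansion term, and $\gamma_k$ times a score increment ($e_{t_k,3}$, controlled through \eqref{eq:proof-4}). You end with the identical total $1/\eta_1+\eta_1^3\delta^2+\eta_1\delta$.

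One correction: this total does not vanish under the stated hypothesis $\eta_1^4\delta^3\to0$. With $\eta_1=\delta^{-0.7}$ one has $\eta_1^4\delta^3=\delta^{0.2}\to0$ but $\eta_1^3\delta^2=\delta^{-0.1}\to\infty$. What your argument actually proves is the lemma under $\eta_1^3\delta^2\to0$ and $\eta_1\to\infty$. The paper's proof has the same mismatch: it concludes ``as $\eta^3\delta^2\to0$''. This weaker condition is exactly what $p<2/3$ in Theorem~\ref{thm:main} supplies.

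\textbf{The $\overline X$ identity.} Your induction is missing one ingredient.
\begin{itemize}
\item On steps with $t_{k+1}<\tau_s$, the $\mathsf c$ block is redrawn from $p_{X^{\mathsf c}_{t_{k+1}}\mid X^{\mathsf s}_{t_{k+1}}}$. This ignores $v_s^\top\overline X_{k_{\tau_s}}$, while the target law is conditioned on it.
\item The reverse-time Markov property $X_{t_{k+1}}\perp X_{\tau_s}\mid X_{t_k}$ only handles the $\mathsf s$ update. Together with the inductive hypothesis it gives $\overline X^{\mathsf s}_{k+1}\mid\overline X^{\mathsf s}_{k_{\tau_s}}\sim p_{X^{\mathsf s}_{t_{k+1}}\mid X^{\mathsf s}_{\tau_s}}$.
\item To close the step you also need $X^{\mathsf c}_{t}\perp X^{\mathsf s}_{\tau_s}\mid X^{\mathsf s}_{t}$ for $t<\tau_s$. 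This holds because $X_{\tau_s}=(\alpha_{\tau_s}/\alpha_t)X_t+c\,\xi$ with $\xi$ independent of $X_t$, so $X^{\mathsf s}_{\tau_s}$ depends on $X_t$ only through $X^{\mathsf s}_t$.
\item This fact yields $p_{X^{\mathsf c}_{t_{k+1}}\mid X^{\mathsf s}_{t_{k+1}}}\,p_{X^{\mathsf s}_{t_{k+1}}\mid X^{\mathsf s}_{\tau_s}}=p_{X_{t_{k+1}}\mid X^{\mathsf s}_{\tau_s}}$, which closes the induction.
\end{itemize}

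Once this is added, your propagation of conditional marginals is actually the sounder route. The paper instead asserts the kernel identity $p_{\overline X_k\mid\overline X_{k-1}}=p_{X_{t_k}\mid X_{t_{k-1}}}$ and chains it. That identity fails in general: $X^{\mathsf c}_{t_k}$ is not independent of $X_{t_{k-1}}$ given $X^{\mathsf s}_{t_k}$, because the forward step carries $X^{\mathsf c}_{t_k}$ into $X^{\mathsf c}_{t_{k-1}}$. Only the composed marginals agree.
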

This lemma tells us that the marginal distribution $\mathsf{TV}\big(p_{\widetilde{X}_{k}\mymid Y,Z}, p_{\widehat{X}_{k}\mymid Y,Z}\big) \to 0$. 
Denote $\overline{Z}_{t_i}^{\mathsf{c}} = [Z_s]_{s\in\Stci{t_i}}$. 
The convergence of marginal distribution leads to 
\begin{align}\label{eq:proof-thm-1}
    &\mathsf{TV}\big(p_{\widetilde{X}_{k}\mymid V_{\Sti{t_k}}^{\top}\widetilde{X}_{k}, v_{s}^{\top}\widetilde{X}_{\tau_s}}(\cdot,x^{\mathsf{st}},v_{s}^{\top}{x}_{\tau_s}), p_{\widehat{X}_{t_k}\mymid V_{\Sti{t_k}}^{\top}\widehat{X}_{t_k}, v_{s}^{\top}\widehat{X}_{\tau_s}}(\cdot,x^{\mathsf{st}},v_{s}^{\top}{x}_{\tau_s})\big)\notag\\
    &\quad \le \mathbb{E}_{\overline{Y}_{t_k},\overline{Z}_{t_k}^{\mathsf{c}}\mid V_{\Sti{t_k}}^{\top}\widetilde{X}_{k}}[\mathsf{TV}\big(p_{\widetilde{X}_{k}\mymid V_{\Sti{t_k}}^{\top}\widetilde{X}_{k}, v_{s}^{\top}\widetilde{X}_{\tau_s},\overline{Y}_{t_k},\overline{Z}_{t_k}^{\mathsf{c}}}(\cdot\mymid x^{\mathsf{st}},v_{s}^{\top}{x}_{\tau_s},\overline{y}_{t_k},\widehat{z}), \notag\\
    &\qquad \qquad \qquad \qquad p_{\widehat{X}_{t_k}\mymid V_{\Sti{t_k}}^{\top}\widehat{X}_{t_k}, v_{s}^{\top}\widehat{X}_{\tau_s},\overline{Y}_{t_k},\overline{Z}_{t_k}^{\mathsf{c}}}(\cdot\mymid x^{\mathsf{st}},v_{s}^{\top}{x}_{\tau_s},\overline{y}_{t_k},\widehat{z})\big)]\notag\\
    &\quad = \mathbb{E}_{\overline{Y}_{t_k},\overline{Z}_{t_k}^{\mathsf{c}}\mid V_{\Sti{t_k}}^{\top}\widetilde{X}_{k}}[\mathsf{TV}\big(p_{\widetilde{X}_{k}\mymid Y,Z}(\cdot\mymid y,z), p_{\widehat{X}_{t_k}\mymid Y,Z}(\cdot\mymid y,z)\big)]\to 0,
\end{align}
where $v_s^{\top}x_{\tau_s} = \Sigma_{ss}^{-1}u_s^{\top}y$, $\overline{z}_{t_k} = \widehat{\sigma}_{t_k}^{-1}(x^{\mathsf{st}} - \alpha_{t_k}\overline{y}_{t_k})$, and the last equation holds since $p_{\widetilde{X}_{k}\mymid V_{\Sti{t_k}}^{\top}\widetilde{X}_{k}, v_{s}^{\top}\widetilde{X}_{\tau_s},\overline{Y}_{t_k},\overline{Z}_{t_k}^{\mathsf{c}}} = p_{\widetilde{X}_{k}\mymid Y,Z}$ and $p_{\widehat{X}_{k}\mymid V_{\Sti{t_k}}^{\top}\widehat{X}_{t_k}, v_{s}^{\top}\widehat{X}_{\tau_s},\overline{Y}_{t_k},\overline{Z}_{t_k}^{\mathsf{c}}} = p_{\widehat{X}_{t_k}\mymid Y,Z}$.
Based on Lemma \ref{lem:convergence-auxiliary-sequence} and \eqref{eq:proof-thm-1}, it suffices to prove that for any $x^{\mathsf{st}}$ and $x_{\tau_s}$,
\begin{align*}
\mathsf{TV}\big(p_{\widetilde{X}_k\mymid \widetilde{X}_{k}^{\mathsf{st}}, v_{s}^{\top}\widetilde{X}_{\tau_s}}(\cdot\mymid x^{\mathsf{st}}, v_{s}^{\top}x_{\tau_s}),p_{\overline{X}_{t_k}\mymid \overline{X}_{t_k}^{\mathsf{st}}, v_{s}^{\top}\overline{X}_{\tau_s}}(\cdot\mymid x^{\mathsf{st}}, v_{s}^{\top}x_{\tau_s})\big)\to 0.
\end{align*}
By a similar argument as in \eqref{eq:proof-thm-1}, it suffices to prove that for any $y$, $z$,
\begin{align*}
\mathsf{TV}\big(p_{\widetilde{X}_k\mymid Y,Z}(\cdot\mymid y,z),p_{\overline{X}_{t_k}\mymid Y,Z}(\cdot\mymid y,z)\big)\to 0.
\end{align*}

\paragraph{Step 2. transform to the analysis of auxiliary sequences.}
We begin with the decomposition that
\begin{align*}
    p_{\widetilde{X}_k\mymid Y, Z}(x|y,z) 
    &= p_{\widetilde{X}_k^{\mathsf{c}}\mymid \widetilde{X}_k^{\mathsf{s}},Y,Z}(x^{\mathsf{c}}|x^{\mathsf{s}},y,z)  p_{\widetilde{X}_k^{\mathsf{sc}}\mymid Y,Z}(x^{\mathsf{sc}}|y,z),\notag\\
p_{\overline{X}_{k}\mymid Y, Z}(x\mymid y,z) 
&= p_{\overline{X}_{k}^{\mathsf{c}}\mymid \overline{X}_{k}^{\mathsf{s}},Y,Z}(x^{\mathsf{c}}\mymid x^{\mathsf{s}},y,z)p_{\overline{X}_{k}^{\mathsf{sc}}\mymid Y,Z}(x^{\mathsf{sc}}\mymid y,z)
\end{align*}
Then the total variation distance can be decomposed by
\begin{align}\label{eq:proof-24}
&\mathsf{TV}\Big(p_{\widetilde{X}_k\mymid Y,Z}, p_{\overline{X}_{k}\mymid Y,Z}\Big)  \le \mathsf{TV}\Big( p_{\widetilde{X}_{k}^{\mathsf{sc}}\mymid Y,Z}, p_{\overline{X}_{k}^{\mathsf{sc}}\mymid Y,Z}\Big)  \notag\\
&\qquad\qquad +\mathbb{E}_{x^{\mathsf{sc}}\sim p_{\widetilde{X}_{k}^{\mathsf{sc}}\mymid Y,Z}}\bigg[\mathsf{TV}\bigg(p_{\widetilde{X}_{k}^{\mathsf{c}}\mymid \widetilde{X}_{k}^{\mathsf{s}},Y,Z}(\cdot|x^{\mathsf{s}},y,z), p_{\overline{X}_{k}^{\mathsf{c}}\mymid \overline{X}_{k}^{\mathsf{s}},Y,Z}(\cdot|x^{\mathsf{s}},y,z)\bigg)\bigg].
\end{align}
The convergence of the second terms comes from the following lemma:
\begin{lemma}\label{lem:distri-xc}
Suppose that $\eta_1$ is large enough and $\delta$ small enough. As $\eta_1 \delta\log\frac{1}{\delta} \to 0$ and ${\eta_1^2 \delta}/{\log\frac{1}{\delta}}\to\infty$, for any $x$,
\begin{align*}
\mathbb{E}_{x\sim p_{\widehat{X}_t}}\big[\mathsf{TV}\left(p_{V_{\mathcal{S}^{\mathrm{c}}}^{\top}\wh X_{t} \mid V_{\mathcal{S}}^{\top}\wh X_{t} }(\cdot\mid V_{\mathcal{S}}^{\top} x), p_{V_{\mathcal{S}^{\mathrm{c}}}^{\top}X_{t} \mid V_{\mathcal{S}}^{\top}X_{t}}(\cdot\mid V_{\mathcal{S}}^{\top} x)\right)\big]\to 0.
\end{align*}
\end{lemma}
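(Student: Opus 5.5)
We view the update on the unobserved coordinates $X^{\mathsf c}$ in the auxiliary sequence \eqref{eq:def-auxiliary-sequence} as an unadjusted Langevin algorithm. Its target is the conditional law $p_{X_t^{\mathsf c}\mid X_t^{\mathsf s}}$, and this target drifts slowly with $t$. The step size is $\gamma_k\asymp\eta_1\delta_{k+1}\sigma_{t_{k+1}}^2$ by \eqref{eq:proof-bound-gamma}.

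Over an interval of log-SNR of length $\ell$, the number of steps is $\ell/\delta$. The accumulated Langevin time is therefore about $\eta_1\ell$, which tends to infinity because $\eta_1\to\infty$. Meanwhile the per-step target changes only by $O(\delta)$. So $\eta_1$ large acts as a mixing-time amplifier, while $\eta_1\delta$ small keeps each step a faithful discretization.

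The proof proceeds in the following steps.

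\begin{enumerate}
\item \emph{Regularity of the target.} Since $X_0$ has bounded support, $p_{X_t}$ is a Gaussian convolution of a compactly supported law. This gives $\nabla^2\log p_{X_t}\succeq -\sigma_t^{-2}I$ and $\nabla^2\log p_{X_t}\preceq -(\sigma_t^{-2}-\alpha_t^2R^2\sigma_t^{-4})I$, where $R$ bounds the support. The same bounds hold for the conditional on $X^{\mathsf s}$. For $t$ bounded away from $0$ (recall $t_M$ is fixed before $\delta\to0$), these give:
\begin{itemize}
\item an $L$-smooth potential with $L\lesssim\sigma_t^{-2}$;
\item a log-Sobolev inequality for the conditional target, with constant $C_{\rm LSI}\lesssim\sigma_t^{2}e^{O(\alpha_t^2R^2/\sigma_t^2)}$, via Holley--Stroock perturbation of a Gaussian or Bakry--Émery at large $t$.
\end{itemize}
Here $\gamma_k$ carries the factor $\sigma^2_{t_{k+1}}$, which exactly normalizes these scales.

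\item \emph{One-step KL recursion.} Apply the standard ULA analysis under LSI (e.g., Vempala--Wibisono) to the $\mathsf c$-update with target $\pi_k(\cdot)=p_{X^{\mathsf c}_{t_k}\mid X^{\mathsf s}_{t_k}}(\cdot\mid x^{\mathsf s})$. This yields
\[
\mathsf{KL}(\rho_{k+1}\Vert\pi_k)\le e^{-c\eta_1\delta_{k+1}}\mathsf{KL}(\rho_k\Vert\pi_k)+O(\eta_1^2\delta^2 d).
\]
Then switch from $\pi_k$ to $\pi_{k+1}$. Since $\lambda$ moves by $\delta_{k+1}$ and the $\mathsf s$-coordinates move by an $O(\sqrt{\delta})$ stochastic step, the change in the log-density contributes an error term $O(\delta\log(1/\delta))$ per step. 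The logarithm comes from the Gaussian tails of the increment. This term is combined with the previous KL via a change-of-measure / weighted Pinsker argument.

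\item \emph{Unrolling.} Iterating the recursion gives a steady-state bound of order
\[
\frac{\eta_1^2\delta^2+\delta\log(1/\delta)}{\eta_1\delta}=\eta_1\delta+\frac{\log(1/\delta)}{\eta_1}.
\]
After taking square roots for TV, this is where the conditions $\eta_1\delta\log\frac1\delta\to0$ and $\eta_1^2\delta/\log\frac1\delta\to\infty$ should enter. The initial discrepancy at $t_0$ vanishes because $\alpha_{t_0}\to0$ and $\sigma_{t_0}\to1$: the initialization is then exactly the near-Gaussian target. Finally, averaging over $x\sim p_{\widehat X_t}$ and applying Pinsker gives the stated expectation bound.

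\item \emph{Transfer to the algorithm.} Lemma~\ref{lem:convergence-auxiliary-sequence} transfers the conclusion from $\widetilde X$ to $\widehat X$.
\end{enumerate}

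The main obstacle is the drift step in step 2. The target $\pi_k$ depends on the \emph{random, algorithm-generated} $x^{\mathsf s}$, which moves via the DDIM/measurement updates. The KL change between $\pi_k$ and $\pi_{k+1}$ must therefore be controlled in expectation over that motion, not just for deterministic time shifts. My first attempt is to bound $\mathbb E|\log\pi_{k+1}-\log\pi_k|$ using Tweedie-type expressions for $\partial_\lambda\log p$ and $\nabla_{x^{\mathsf s}}\log p$. These are polynomially bounded under bounded support. A Gaussian tail truncation at level $\sqrt{\log(1/\delta)}$ then yields the $\delta\log(1/\delta)$ rate.
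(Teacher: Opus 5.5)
Your overall architecture matches the paper's: treat the $\mathsf c$-update of $\widetilde X$ as Langevin toward the moving target $p_{X^{\mathsf c}_{t_k}\mid X^{\mathsf s}_{t_k}}$, run a contraction/target-drift/discretization recursion, then transfer to $\widehat X$ via Lemma~\ref{lem:convergence-auxiliary-sequence}. But step~2 does not go through as stated.

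A Vempala--Wibisono recursion needs the law of $X^{\mathsf c}_k$ to be pushed forward by the ULA kernel, conditioned on the path of the observed coordinates (this conditioning is what makes $\pi_k$ a fixed target). In the prior-dominated phase this fails. The $\mathsf{sc}$-update uses $\mu_{t_k}(\widehat X_{t_k})$, i.e.\ $\nabla_{x^{\mathsf{sc}}}\log p_{\lambda_k}(\widetilde X^{\mathsf c}_k,\widetilde X^{\mathsf{sc}}_k)$. So the realized increment of $x^{\mathsf{sc}}$ carries information about $X^{\mathsf c}_k$. Conditioning on it reweights the law of $X^{\mathsf c}_k$ by the Gaussian likelihood $\phi(x^{\mathsf{sc}}_{k+1};\mu_0+\Delta(x^{\mathsf c}_k),\varrho^2)$ before the Langevin step acts. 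You name the randomness of $x^{\mathsf s}$ as the obstacle, but this feedback of $x^{\mathsf c}$ into the conditioning variables is the real one, and nothing in your plan controls it.

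The missing idea is the paper's decoupled chain $\breve X$ in \eqref{eq:def-breveX}. On a window of log-SNR length $\Theta(\sqrt\delta)$ ending at $t_k$, the $x^{\mathsf c}$-argument of the $\mathsf{sc}$-drift is replaced by a lagged value not generated inside the window. This costs only $\mathsf{KL}\lesssim\sum_{\text{window}}\delta_{i+1}\asymp\sqrt\delta$ relative to $\widetilde X$, since the transitions are Gaussians with variance $\asymp\delta_{i+1}$ and mean shift $O(\delta_{i+1})$. Conditioned on the $\mathsf{sc}$-path (on the high-probability event $\mathcal E_k$), the $\mathsf c$-iterates in the window then form a genuine time-inhomogeneous Langevin chain. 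The window is long enough to mix because $\sum_{\text{window}}\gamma_i\asymp\eta_1\sqrt\delta\to\infty$.

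Your drift estimate is also unjustified, which is why the stated hypothesis never actually enters your bound. Once the path is fixed, an $O(\sqrt\delta)$ move of $x^{\mathsf s}$ changes $\log\pi_k$ by $\Theta(\sqrt\delta)$ at first order. So pathwise the term $\mathsf{KL}(\psi_k\Vert p^{\mathsf c}_{k+1})-\mathsf{KL}(\psi_k\Vert p^{\mathsf c}_{k})$ is $O(\sqrt{\delta_{k+1}\log(1/\delta)})$, as in the paper, not $O(\delta\log(1/\delta))$.

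The cancellation $\int\pi_k\nabla_{x^{\mathsf s}}\log\pi_k=0$ (your weighted Pinsker step) only improves this to $O(\sqrt{\delta\log(1/\delta)}\,\mathsf{KL}^{1/2})$. Averaging the Gaussian sign away would require the conditional law of $X^{\mathsf c}$ not to depend on the increment, which is exactly what the feedback above breaks.

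Balancing either version against the per-step contraction $\eta_1\delta$ gives a steady-state KL of order $\sqrt{\log(1/\delta)}/(\eta_1\sqrt\delta)$, or $\log(1/\delta)/(\eta_1^2\delta)$ with the $\mathsf{KL}^{1/2}$ factor. This, together with mixing inside the $\sqrt\delta$-window, is where $\eta_1^2\delta/\log(1/\delta)\to\infty$ comes from. Your bound $\eta_1\delta+\log(1/\delta)/\eta_1$ would only require $\eta_1/\log(1/\delta)\to\infty$, and taking square roots for TV cannot change which hypotheses are needed.
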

The proof is postponed to Appendix \ref{subsec:proof-lem-distri-xc}.
Here we use a more strong conclusion than Lemma \ref{lem:distri-xc}, i.e., for any $y$, $z$,
\begin{align}\label{eq:distri-xc}
\mathbb{E}_{x^{\mathsf{sc}}\sim\widetilde{X}_k^{\mathsf{sc}}\mymid Y,Z}\big[\mathsf{TV}\left(p_{\widetilde{X}_{k}^{\mathsf{c}} \mid \widetilde{X}_{k}^{\mathsf{s}}, Y,Z}(\cdot\mid x^{\mathsf{s}}, y,z), p_{X_{t_k}^{\mathsf{c}} \mid X_{t_k}^{\mathsf{s}}}(\cdot\mid  x^{\mathsf{s}})\right)\big]\to 0.
\end{align}
The proof is postponed to Section \ref{subsec:proof-lem-distri-xc} (cf. \eqref{eq:proof-lem-distri-sc-5}).
Then the second term goes to zero:
\begin{align}\label{eq:proof-25}
    &\mathbb{E}_{x^{\mathsf{sc}}\sim p_{\widetilde{X}_{k}^{\mathsf{sc}}\mymid Y,Z}}\bigg[\mathsf{TV}\bigg(p_{\widetilde{X}_{k}^{\mathsf{c}}\mymid \widetilde{X}_{k}^{\mathsf{s}},Y,Z}(\cdot|x^{\mathsf{s}},y,z), p_{\overline{X}_{k}^{\mathsf{c}}\mymid \overline{X}_{k}^{\mathsf{s}},Y,Z}(\cdot|x^{\mathsf{s}},y,z)\bigg)\bigg] \notag\\
&\quad \overset{\text{(i)}}{=} \mathbb{E}_{x^{\mathsf{sc}}\sim p_{\widetilde{X}_{k}^{\mathsf{sc}}\mymid Y,Z}}\bigg[\mathsf{TV}\bigg(p_{\widetilde{X}_{k}^{\mathsf{c}}\mymid \widetilde{X}_{k}^{\mathsf{s}},Y,Z}(\cdot|x^{\mathsf{s}},y,z), p_{{X}_{t_k}^{\mathsf{c}}\mymid {X}_{t_k}^{\mathsf{s}}}(\cdot|x^{\mathsf{s}})\bigg)\bigg]\to 0,
\end{align}
where (i) arises from the fact that $p_{\overline{X}_{k}^{\mathsf{c}}\mymid \overline{X}_{k}^{\mathsf{s}},Y,Z} = p_{{X}_{k}^{\mathsf{c}}\mymid {X}_{k}^{\mathsf{s}}}$.
It suffices to prove the first term goes to zero, i.e.,
\begin{align}\label{eq:proof-thm-4}
\mathsf{TV}\big(p_{\widetilde{X}_k^{\mathsf{sc}}\mymid Y,Z}, p_{\overline{X}_{k}^{\mathsf{sc}}\mymid Y,Z}\big)\to 0.
\end{align}

\paragraph{Step 3. decompose the error terms.}
By virtue of Pinsker's inequality, we further decompose the total variation distance as 
\begin{align}\label{eq:proof-20}
&\mathsf{TV}\big(p_{\widetilde{X}_k^{\mathsf{sc}}\mymid Y,Z}, p_{\overline{X}_{k}^{\mathsf{sc}}\mymid Y,Z}\big)^2 \notag\\
&\le \mathsf{KL}\bigg(p_{\widetilde{X}_k^{\mathsf{sc}}\mymid Y,Z}\,\Vert\, p_{\overline{X}_{k}^{\mathsf{sc}}\mymid Y,Z}\bigg)\notag\\
&\le \sum_{i=1}^k\mathbb{E}_{x_{i-1}^{\mathsf{sc}}\sim p_{\widetilde{X}_{i-1}^{\mathsf{sc}}}\mymid Y,Z}\bigg[\mathsf{KL}\bigg(p_{\widetilde{X}_{i}^{\mathsf{sc}}\mymid \widetilde{X}_{i-1}^{\mathsf{s}},Y,Z}(\cdot\mymid x_{i-1}^{\mathsf{s}},y,z) \,\Vert\, p_{\overline{X}_i^{\mathsf{sc}}\mymid \overline{X}_{i-1}^{\mathsf{s}},Y,Z}(\cdot\mymid x_{i-1}^{\mathsf{s}},y,z)\bigg)\bigg]+ \mathsf{KL}(p_{\widetilde{X}_{0}^{\mathsf{sc}}}\,\Vert\, p_{\overline{X}_{0}^{\mathsf{sc}}})\notag\\
&= \sum_{i=1}^k\mathbb{E}_{x_{i-1}^{\mathsf{sc}}\sim p_{\widetilde{X}_{i-1}^{\mathsf{sc}}}\mymid Y,Z}\bigg[\mathsf{KL}\bigg( p_{\widetilde{X}_{i}^{\mathsf{sc}}\mymid \widetilde{X}_{i-1}^{\mathsf{s}},Y,Z}(\cdot|x_{i-1}^{\mathsf{s}},y,z)\Vert p_{\overline{X}_i^{\mathsf{sc}}\mymid \overline{X}_{i-1}^{\mathsf{s}}}(\cdot|x_{i-1}^{\mathsf{s}})\bigg)\bigg] + \mathsf{KL}(p_{\widetilde{X}_{0}^{\mathsf{sc}}}\,\Vert\, p_{\overline{X}_{0}^{\mathsf{sc}}}),
\end{align}
where $x_{i-1}^{\mathsf{s}}$ satisfies $x_{i-1}^{\mathsf{st}} = \alpha_{t_{i-1}}\Sigma_{\mathcal{S}_{t_{i-1}}}^{-1}U_{\mathcal{S}_{t_{i-1}}}^{\top}y + \widehat{\sigma}_{t_{i-1}}[Z_s]_{s\in\mathcal{S}_{t_{i-1}}}$, and the last line holds since $p_{\overline{X}_{i}^{\mathsf{sc}}\mymid \overline{X}_{i-1}^{\mathsf{s}},Y,Z} = p_{\overline{X}_{i}^{\mathsf{sc}}\mymid \overline{X}_{i-1}^{\mathsf{s}}}$.
According to the definition of $\overline{X}_i$, we further have \begin{align}\label{eq:proof-thm-2}
&\mathsf{KL}\bigg( p_{\widetilde{X}_{i}^{\mathsf{sc}}\mymid \widetilde{X}_{i-1}^{\mathsf{s}},Y,Z}(\cdot|x_{i-1}^{\mathsf{s}},y,z)\,\Vert\, p_{\overline{X}_i^{\mathsf{sc}}\mymid \overline{X}_{i-1}^{\mathsf{s}}}(\cdot|x_{i-1}^{\mathsf{s}})\bigg)\notag\\
&\quad \overset{\text{(i)}}{=} \mathsf{KL}\bigg( p_{\widetilde{X}_{i}^{\mathsf{sc}}\mymid \widetilde{X}_{i-1}^{\mathsf{s}},Y,Z}(\cdot|x_{i-1}^{\mathsf{s}},y,z) \,\Vert\,p_{{X}_{t_i}^{\mathsf{sc}}\mymid {X}_{t_{i-1}}^{\mathsf{s}}}(\cdot|x_{i-1}^{\mathsf{s}})\bigg)\notag\\
&\quad =\int \log \frac{p_{\widetilde{X}_{i}^{\mathsf{sc}}\mymid \widetilde{X}_{i-1}^{\mathsf{s}},Y,Z}(x_i^{\mathsf{sc}}\mymid x_{i-1}^{\mathsf{s}},y,z)}{p_{{X}_{t_i}^{\mathsf{sc}}\mymid {X}_{t_{i-1}}^{\mathsf{s}}}(x_i^{\mathsf{sc}}\mymid x_{i-1}^{\mathsf{s}})} p_{\widetilde{X}_{i}^{\mathsf{sc}}\mymid \widetilde{X}_{i-1}^{\mathsf{s}},Y,Z}(x_i^{\mathsf{sc}}\mymid x_{i-1}^{\mathsf{s}},y,z)\mathrm{d} {x}_i^{\mathsf{sc}},
\end{align}
where (i) holds since
\begin{align*}
p_{\overline{X}_{i}^{\mathsf{sc}}\mymid \overline{X}_{i-1}^{\mathsf{s}}}(x_i^{\mathsf{sc}}\mymid x_{i-1}^{\mathsf{s}}) 
&= \int p_{\overline{X}_{i}^{\mathsf{sc}}\mymid \overline{X}_{i-1}}(x_i^{\mathsf{sc}}\mymid x_{i-1})p_{\overline{X}_{i-1}^{\mathsf{c}}\mymid \overline{X}_{i-1}^{\mathsf{s}}}(x_{i-1}^{\mathsf{c}}\mymid x_{i-1}^{\mathsf{s}}) \mathrm{d}{x}_{i-1}^{\mathsf{c}}\notag\\
&= \int p_{{X}_{t_i}^{\mathsf{sc}}\mymid {X}_{t_{i-1}}}(x_i^{\mathsf{sc}}\mymid x_{i-1})p_{{X}_{t_{i-1}}^{\mathsf{c}}\mymid {X}_{t_{i-1}}^{\mathsf{s}}}(x_{i-1}^{\mathsf{c}}\mymid x_{i-1}^{\mathsf{s}}) \mathrm{d}{x}_{i-1}^{\mathsf{c}} = p_{{X}_{t_i}^{\mathsf{sc}}\mymid {X}_{t_{i-1}}^{\mathsf{s}}}(x_i^{\mathsf{sc}}\mymid x_{i-1}^{\mathsf{s}}) .
\end{align*}
We further decompose the KL divergence in \eqref{eq:proof-thm-2} as
\begin{align}\label{eq:proof-21}
&\mathsf{KL}\bigg( p_{\widetilde{X}_{i}^{\mathsf{sc}}\mymid \widetilde{X}_{i-1}^{\mathsf{s}},Y,Z}(\cdot|x_{i-1}^{\mathsf{s}},y,z)\,\Vert\, p_{\overline{X}_i^{\mathsf{sc}}\mymid \overline{X}_{i-1}^{\mathsf{s}}}(\cdot|x_{i-1}^{\mathsf{s}})\bigg)\notag\\
&\quad = \int \log \frac{\int p_{\widetilde{X}_i^{\mathsf{sc}}\mymid \widetilde{X}_{i-1},Y,Z}({x}_{i}^{\mathsf{sc}}\mymid {x}_{i-1},y,z)p_{\widetilde{X}_{i-1}^{\mathsf{c}}\mymid \widetilde{X}_{i-1}^{\mathsf{s}},Y,Z}({x}_{i-1}^{\mathsf{c}}\mymid {x}_{i-1}^{\mathsf{s}},y,z) \mathrm{d}{x}_{i-1}^{\mathsf{c}}}{\int p_{{X}_{t_i}^{\mathsf{sc}}\mymid {X}_{t_{i-1}}}({x}_{i}^{\mathsf{sc}}\mymid {x}_{i-1})p_{{X}_{t_{i-1}}^{\mathsf{c}}\mymid {X}_{t_{i-1}}^{\mathsf{s}}}({x}_{i-1}^{\mathsf{c}}\mymid {x}_{i-1}^{\mathsf{s}}) \mathrm{d}{x}_{i-1}^{\mathsf{c}}} p_{\widetilde{X}_{i}^{\mathsf{sc}}\mymid \widetilde{X}_{{i-1}}^{\mathsf{s}},Y,Z}({x}_i^{\mathsf{sc}}\mymid {x}_{i-1}^{\mathsf{s}},y,z)\mathrm{d} {x}_i^{\mathsf{sc}}\notag\\
&\quad=\int \log \frac{\int p_{\widetilde{X}_{i}^{\mathsf{sc}}\mymid \widetilde{X}_{{i-1}},Y,Z}({x}_{i}^{\mathsf{sc}}\mymid {x}_{i-1},y,z)p_{{X}_{t_{i-1}}^{\mathsf{c}}\mymid {X}_{t_{i-1}}^{\mathsf{s}}}({x}_{i-1}^{\mathsf{c}}\mymid {x}_{i-1}^{\mathsf{s}}) \mathrm{d}{x}_{i-1}^{\mathsf{c}}}{\int p_{{X}_{t_i}^{\mathsf{sc}}\mymid {X}_{t_{i-1}}}({x}_{i}^{\mathsf{sc}}\mymid {x}_{i-1})p_{{X}_{t_{i-1}}^{\mathsf{c}}\mymid {X}_{t_{i-1}}^{\mathsf{s}}}({x}_{i-1}^{\mathsf{c}}\mymid {x}_{i-1}^{\mathsf{s}}) \mathrm{d}{x}_{i-1}^{\mathsf{c}}} p_{\widetilde{X}_{i}^{\mathsf{sc}}\mymid \widetilde{X}_{{i-1}}^{\mathsf{s}},Y,Z}({x}_i^{\mathsf{sc}}\mymid {x}_{i-1}^{\mathsf{s}},y,z)\mathrm{d} {x}_i^{\mathsf{sc}}\notag\\
&\qquad  + \int \log \frac{\int p_{\widetilde{X}_{i}^{\mathsf{sc}}\mymid \widetilde{X}_{i-1},Y,Z}({x}_{i}^{\mathsf{sc}}\mymid {x}_{i-1},y,z)p_{\widetilde{X}_{i-1}^{\mathsf{c}}\mymid \widetilde{X}_{i-1}^{\mathsf{s}},Y,Z}({x}_{i-1}^{\mathsf{c}}\mymid {x}_{i-1}^{\mathsf{s}},y,z) \mathrm{d}{x}_{i-1}^{\mathsf{c}}}{\int p_{\widetilde{X}_{i}^{\mathsf{sc}}\mymid \widetilde{X}_{i-1},Y,Z}({x}_{i}^{\mathsf{sc}}\mymid {x}_{i-1},y,z)p_{{X}_{t_{i-1}}^{\mathsf{c}}\mymid {X}_{t_{i-1}}^{\mathsf{s}}}({x}_{i-1}^{\mathsf{c}}\mymid {x}_{i-1}^{\mathsf{s}}) \mathrm{d}{x}_{i-1}^{\mathsf{c}}} p_{\widetilde{X}_{i}^{\mathsf{sc}}\mymid \widetilde{X}_{{i-1}}^{\mathsf{s}},Y,Z}({x}_i^{\mathsf{sc}}\mymid {x}_{i-1}^{\mathsf{s}},y,z)\mathrm{d} {x}_i^{\mathsf{sc}}.
\end{align}
The first term is further decomposed by
\begin{align}\label{eq:proof-26}
&\int \log \frac{\int p_{\widetilde{X}_{i}^{\mathsf{sc}}\mymid \widetilde{X}_{{i-1}},Y,Z}({x}_{i}^{\mathsf{sc}}\mymid {x}_{i-1},y,z)p_{{X}_{t_{i-1}}^{\mathsf{c}}\mymid {X}_{t_{i-1}}^{\mathsf{s}}}({x}_{i-1}^{\mathsf{c}}\mymid {x}_{i-1}^{\mathsf{s}}) \mathrm{d}{x}_{i-1}^{\mathsf{c}}}{\int p_{{X}_{t_i}^{\mathsf{sc}}\mymid {X}_{t_{i-1}}}({x}_{i}^{\mathsf{sc}}\mymid {x}_{i-1})p_{{X}_{t_{i-1}}^{\mathsf{c}}\mymid {X}_{t_{i-1}}^{\mathsf{s}}}({x}_{i-1}^{\mathsf{c}}\mymid {x}_{i-1}^{\mathsf{s}}) \mathrm{d}{x}_{i-1}^{\mathsf{c}}} p_{\widetilde{X}_{i}^{\mathsf{sc}}\mymid \widetilde{X}_{{i-1}}^{\mathsf{s}},Y,Z}({x}_i^{\mathsf{sc}}\mymid {x}_{i-1}^{\mathsf{s}},y,z)\mathrm{d} {x}_i^{\mathsf{sc}}\notag\\
&\quad =\int  f(x_i^{\mathsf{sc}},x_{i-1}^{\mathsf{s}})\int p_{\widetilde{X}_{i}^{\mathsf{sc}}\mymid \widetilde{X}_{{i-1}},Y,Z}({x}_{i}^{\mathsf{sc}}\mymid {x}_{i-1},y,z)p_{{X}_{t_{i-1}}^{\mathsf{c}}\mymid {X}_{t_{i-1}}^{\mathsf{s}}}({x}_{i-1}^{\mathsf{c}}\mymid {x}_{i-1}^{\mathsf{s}}) \mathrm{d}{x}_{i-1}^{\mathsf{c}}\mathrm{d} {x}_i^{\mathsf{sc}}\notag\\
&\qquad +  \int f(x_i^{\mathsf{sc}},x_{i-1}^{\mathsf{s}})\int p_{\widetilde{X}_{i}^{\mathsf{sc}}\mymid \widetilde{X}_{{i-1}},Y,Z}({x}_{i}^{\mathsf{sc}}\mymid {x}_{i-1},y,z)\notag\\
&\qquad \cdot \left(p_{\widetilde{X}_{{i-1}}^{\mathsf{c}}\mymid \widetilde{X}_{{i-1}}^{\mathsf{s}},Y,Z}({x}_{i-1}^{\mathsf{c}}\mymid {x}_{i-1}^{\mathsf{s}},y,z)-p_{{X}_{t_{i-1}}^{\mathsf{c}}\mymid {X}_{t_{i-1}}^{\mathsf{s}}}({x}_{i-1}^{\mathsf{c}}\mymid {x}_{i-1}^{\mathsf{s}})\right) \mathrm{d}{x}_{i-1}^{\mathsf{c}}\mathrm{d} {x}_i^{\mathsf{sc}},
\end{align}
where 
$$
f(x_i^{\mathsf{sc}},x_{i-1}^{\mathsf{s}})\coloneqq \log \frac{\int p_{\widetilde{X}_{i}^{\mathsf{sc}}\mymid \widetilde{X}_{{i-1}},Y,Z}({x}_{i}^{\mathsf{sc}}\mymid {x}_{i-1},y,z)p_{{X}_{t_{i-1}}^{\mathsf{c}}\mymid {X}_{t_{i-1}}^{\mathsf{s}}}({x}_{i-1}^{\mathsf{c}}\mymid {x}_{i-1}^{\mathsf{s}}) \mathrm{d}{x}_{i-1}^{\mathsf{c}}}{\int p_{{X}_{t_i}^{\mathsf{sc}}\mymid {X}_{t_{i-1}}}({x}_{i}^{\mathsf{sc}}\mymid {x}_{i-1})p_{{X}_{t_{i-1}}^{\mathsf{c}}\mymid {X}_{t_{i-1}}^{\mathsf{s}}}({x}_{i-1}^{\mathsf{c}}\mymid {x}_{i-1}^{\mathsf{s}}) \mathrm{d}{x}_{i-1}^{\mathsf{c}}}.
$$
By using \eqref{eq:distri-xc}, the second term in the right-hand-side of \eqref{eq:proof-26} satisfies
\begin{align}\label{eq:proof-27}
&\mathbb{E}_{x_{i-1}^{\mathsf{sc}}\sim p_{\widetilde{X}_{i-1}^{\mathsf{sc}}\mymid Y,Z}}\Big[\int f(x_i^{\mathsf{sc}},x_{i-1}^{\mathsf{s}})\int p_{\widetilde{X}_{i}^{\mathsf{sc}}\mymid \widetilde{X}_{{i-1}},Y,Z}({x}_{i}^{\mathsf{sc}}\mymid {x}_{i-1},y,z)\notag\\
&\qquad\cdot \left(p_{\widetilde{X}_{{i-1}}^{\mathsf{c}}\mymid \widetilde{X}_{{i-1}}^{\mathsf{s}},Y,Z}({x}_{i-1}^{\mathsf{c}}\mymid {x}_{i-1}^{\mathsf{s}},y,z)-p_{{X}_{t_{i-1}}^{\mathsf{c}}\mymid {X}_{t_{i-1}}^{\mathsf{s}}}({x}_{i-1}^{\mathsf{c}}\mymid {x}_{i-1}^{\mathsf{s}})\right) \mathrm{d}{x}_{i-1}^{\mathsf{c}}\mathrm{d} {x}_i^{\mathsf{sc}}\Big]\notag\\
&\quad=\mathbb{E}_{x_{i-1}^{\mathsf{sc}}\sim p_{\widetilde{X}_{i-1}^{\mathsf{sc}}\mymid Y,Z}}\Big[\int \left(\int f(x_i^{\mathsf{sc}},x_{i-1}^{\mathsf{s}}) p_{\widetilde{X}_{i}^{\mathsf{sc}}\mymid \widetilde{X}_{{i-1}},Y,Z}({x}_{i}^{\mathsf{sc}}\mymid {x}_{i-1},y,z)\mathrm{d} {x}_i^{\mathsf{sc}} \right)\notag\\
&\qquad\quad\cdot \left(p_{\widetilde{X}_{{i-1}}^{\mathsf{c}}\mymid \widetilde{X}_{{i-1}}^{\mathsf{s}},Y,Z}({x}_{i-1}^{\mathsf{c}}\mymid {x}_{i-1}^{\mathsf{s}},y,z)-p_{{X}_{t_{i-1}}^{\mathsf{c}}\mymid {X}_{t_{i-1}}^{\mathsf{s}}}({x}_{i-1}^{\mathsf{c}}\mymid {x}_{i-1}^{\mathsf{s}})\right) \mathrm{d}{x}_{i-1}^{\mathsf{c}}\Big] \notag\\
&\quad \le \mathbb{E}_{x_{i-1}^{\mathsf{sc}}\sim p_{\widetilde{X}_{i-1}^{\mathsf{sc}}\mymid Y,Z}}\Big[\int \left(\int f(x_i^{\mathsf{sc}},x_{i-1}^{\mathsf{s}}) p_{\widetilde{X}_{i}^{\mathsf{sc}}\mymid \widetilde{X}_{{i-1}},Y,Z}({x}_{i}^{\mathsf{sc}}\mymid {x}_{i-1},y,z)\mathrm{d} {x}_i^{\mathsf{sc}} \right)^2\notag\\
&\qquad\qquad\cdot \left|p_{\widetilde{X}_{{i-1}}^{\mathsf{c}}\mymid \widetilde{X}_{{i-1}}^{\mathsf{s}},Y,Z}({x}_{i-1}^{\mathsf{c}}\mymid {x}_{i-1}^{\mathsf{s}},y,z)-p_{{X}_{t_{i-1}}^{\mathsf{c}}\mymid {X}_{t_{i-1}}^{\mathsf{s}}}({x}_{i-1}^{\mathsf{c}}\mymid {x}_{i-1}^{\mathsf{s}})\right| \mathrm{d}{x}_{i-1}^{\mathsf{c}}\Big]^{1/2}\notag\\
&\qquad\qquad \cdot\mathbb{E}_{x_{i-1}^{\mathsf{sc}}\sim p_{\widetilde{X}_{i-1}^{\mathsf{sc}}\mymid Y,Z}}\Big[\int \left|p_{\widetilde{X}_{{i-1}}^{\mathsf{c}}\mymid \widetilde{X}_{{i-1}}^{\mathsf{s}},Y,Z}({x}_{i-1}^{\mathsf{c}}\mymid {x}_{i-1}^{\mathsf{s}},y,z)-p_{{X}_{t_{i-1}}^{\mathsf{c}}\mymid {X}_{t_{i-1}}^{\mathsf{s}}}({x}_{i-1}^{\mathsf{c}}\mymid {x}_{i-1}^{\mathsf{s}})\right| \mathrm{d}{x}_{i-1}^{\mathsf{c}}\Big]^{1/2} \to 0.
\end{align}
For the first term in the right-hand-side of \eqref{eq:proof-26}, note that it is the KL divergence between the marginal distribution of $x_i^{\mathsf{sc}}$, and it is smaller than the KL divergence of the joint distribution of $x_i^{\mathsf{sc}}$ and $x_{i-1}^{\mathsf{c}}$.
Therefore we have
\begin{align}\label{eq:proof-22}
&\int \log \frac{\int p_{\widetilde{X}_{i}^{\mathsf{sc}}\mymid \widetilde{X}_{i-1},Y,Z}({x}_{i}^{\mathsf{sc}}\mymid {x}_{i-1},y,z)p_{{X}_{t_{i-1}}^{\mathsf{c}}\mymid {X}_{t_{i-1}}^{\mathsf{s}}}({x}_{i-1}^{\mathsf{c}}\mymid {x}_{i-1}^{\mathsf{s}}) \mathrm{d}{x}_{i-1}^{\mathsf{c}}}{\int p_{{X}_{t_i}^{\mathsf{sc}}\mymid {X}_{t_{i-1}}}({x}_{i}^{\mathsf{sc}}\mymid {x}_{i-1})p_{{X}_{t_{i-1}}^{\mathsf{c}}\mymid {X}_{t_{i-1}}^{\mathsf{s}}}({x}_{i-1}^{\mathsf{c}}\mymid {x}_{i-1}^{\mathsf{s}}) \mathrm{d}{x}_{i-1}^{\mathsf{c}}}\notag\\
&\qquad \cdot\int p_{\widetilde{X}_{i}^{\mathsf{sc}}\mymid \widetilde{X}_{i-1},Y,Z}({x}_{i}^{\mathsf{sc}}\mymid {x}_{i-1},y,z)p_{{X}_{t_{i-1}}^{\mathsf{c}}\mymid {X}_{t_{i-1}}^{\mathsf{s}}}({x}_{i-1}^{\mathsf{c}}\mymid {x}_{i-1}^{\mathsf{s}}) \mathrm{d}{x}_{i-1}^{\mathsf{c}}\mathrm{d} {x}_i^{\mathsf{sc}}\notag\\
&\quad \le \iint \log \frac{p_{\widetilde{X}_{i}^{\mathsf{sc}}\mymid \widetilde{X}_{{i-1}},Y,Z}({x}_{i}^{\mathsf{sc}}\mymid {x}_{i-1},y,z)p_{{X}_{t_{i-1}}^{\mathsf{c}}\mymid {X}_{t_{i-1}}^{\mathsf{s}}}({x}_{i-1}^{\mathsf{c}}\mymid {x}_{i-1}^{\mathsf{s}})}{p_{{X}_{t_i}^{\mathsf{sc}}\mymid {X}_{t_{i-1}}}({x}_{i}^{\mathsf{sc}}\mymid {x}_{i-1})p_{{X}_{t_{i-1}}^{\mathsf{c}}\mymid {X}_{t_{i-1}}^{\mathsf{s}}}({x}_{i-1}^{\mathsf{c}}\mymid {x}_{i-1}^{\mathsf{s}}) }\notag\\
&\qquad\qquad\times p_{\widetilde{X}_{i}^{\mathsf{sc}}\mymid \widetilde{X}_{{i-1}},Y,Z}({x}_{i}^{\mathsf{sc}}\mymid {x}_{i-1},y,z)p_{{X}_{t_{i-1}}^{\mathsf{c}}\mymid {X}_{t_{i-1}}^{\mathsf{s}}}({x}_{i-1}^{\mathsf{c}}\mymid {x}_{i-1}^{\mathsf{s}}) \mathrm{d}{x}_{i-1}^{\mathsf{c}} \mathrm{d} {x}_i^{\mathsf{sc}} \notag\\
&\quad = \mathbb{E}_{{x}_{i-1}^{\mathsf{c}}\sim X_{t_{i-1}}^{\mathsf{c}}\mymid X_{t_{i-1}}^{\mathsf{s}}}\big[\mathsf{KL}(p_{\widetilde{X}_{i}^{\mathsf{sc}}\mymid \widetilde{X}_{{i-1}},Y,Z}(\cdot\mymid  {x}_{i-1},y,z)\,\Vert\, p_{{X}_{t_i}^{\mathsf{sc}}\mymid {X}_{t_{i-1}}}(\cdot\mymid {x}_{i-1}) )\big].
\end{align}
Substituting \eqref{eq:proof-22} and \eqref{eq:proof-27} into \eqref{eq:proof-26}, we have
\begin{align}\label{eq:proof-28}
    &\int \log \frac{\int p_{\widetilde{X}_{i}^{\mathsf{sc}}\mymid \widetilde{X}_{{i-1}},Y,Z}({x}_{i}^{\mathsf{sc}}\mymid {x}_{i-1},y,z)p_{{X}_{t_{i-1}}^{\mathsf{c}}\mymid {X}_{t_{i-1}}^{\mathsf{s}}}({x}_{i-1}^{\mathsf{c}}\mymid {x}_{i-1}^{\mathsf{s}}) \mathrm{d}{x}_{i-1}^{\mathsf{c}}}{\int p_{{X}_{t_i}^{\mathsf{sc}}\mymid {X}_{t_{i-1}},Y,Z}({x}_{i}^{\mathsf{sc}}\mymid {x}_{i-1})p_{{X}_{t_{i-1}}^{\mathsf{c}}\mymid {X}_{t_{i-1}}^{\mathsf{s}}}({x}_{i-1}^{\mathsf{c}}\mymid {x}_{i-1}^{\mathsf{s}}) \mathrm{d}{x}_{i-1}^{\mathsf{c}}} p_{\widetilde{X}_{i}^{\mathsf{sc}}\mymid \widetilde{X}_{{i-1}}^{\mathsf{s}},Y,Z}({x}_i^{\mathsf{sc}}\mymid {x}_{i-1}^{\mathsf{s}},y,z)\mathrm{d} {x}_i^{\mathsf{sc}}\notag\\
    &\qquad  \to \mathbb{E}_{{x}_{i-1}^{\mathsf{c}}\sim X_{t_{i-1}}^{\mathsf{c}}\mymid X_{t_{i-1}}^{\mathsf{s}}}\big[\mathsf{KL}(p_{\widetilde{X}_{i}^{\mathsf{sc}}\mymid \widetilde{X}_{{i-1}},Y,Z}(\cdot\mymid  {x}_{i-1},y,z)\,\Vert\, p_{{X}_{t_i}^{\mathsf{sc}}\mymid {X}_{t_{i-1}}}(\cdot\mymid {x}_{i-1}) )\big].
\end{align}
Combining  \eqref{eq:proof-20}, \eqref{eq:proof-21} and \eqref{eq:proof-28}, we have
\begin{align}\label{eq:proof-23}
&\lim \mathsf{TV}\bigg( p_{\widetilde{X}_k^{\mathsf{sc}}\mymid Y,Z}, p_{\overline{X}_{k}^{\mathsf{sc}}\mymid Y,Z}\bigg)^2  \notag\\
&\quad\le \lim \sum_{i=1}^k\mathbb{E}_{{x}_{i-1}^{\mathsf{s}}\sim \widetilde{X}_{i-1}^{\mathsf{s}}\mymid Y,Z}\bigg[\mathbb{E}_{{x}_{i-1}^{\mathsf{c}}\sim X_{t_{i-1}}^{\mathsf{c}}\mymid X_{t_{i-1}}^{\mathsf{s}}}\big[\mathsf{KL}(p_{\widetilde{X}_{i}^{\mathsf{sc}}\mymid \widetilde{X}_{{i-1}},Y,Z}(\cdot\mymid  {x}_{i-1},y,z)\,\Vert\, p_{{X}_{t_i}^{\mathsf{sc}}\mymid {X}_{t_{i-1}}}(\cdot\mymid {x}_{i-1}))\big]\bigg]\notag\\
&\qquad +\lim\sum_{i=1}^k\mathbb{E}_{{x}_{i-1}^{\mathsf{s}}\sim \widetilde{X}_{i-1}^{\mathsf{s}}\mymid Y,Z}\bigg[\int \log \frac{\int p_{\widetilde{X}_{i}^{\mathsf{sc}}\mymid \widetilde{X}_{{i-1}},Y,Z}({x}_{i}^{\mathsf{sc}}\mymid {x}_{i-1},y,z)p_{\widetilde{X}_{i-1}^{\mathsf{c}}\mymid \widetilde{X}_{{i-1}}^{\mathsf{s}},Y,Z}({x}_{i-1}^{\mathsf{c}}\mymid {x}_{i-1}^{\mathsf{s}},y,z) \mathrm{d}{x}_{i-1}^{\mathsf{c}}}{\int p_{\widetilde{X}_{i}^{\mathsf{sc}}\mymid \widetilde{X}_{{i-1}},Y,Z}({x}_{i}^{\mathsf{sc}}\mymid {x}_{i-1},y,z)p_{{X}_{t_{i-1}}^{\mathsf{c}}\mymid {X}_{t_{i-1}}^{\mathsf{s}},Y,Z}({x}_{i-1}^{\mathsf{c}}\mymid {x}_{i-1}^{\mathsf{s}},y,z) \mathrm{d}{x}_{i-1}^{\mathsf{c}}}\notag\\
&\qquad\qquad\qquad\qquad\qquad\qquad\qquad\qquad\times p_{\widetilde{X}_{i}^{\mathsf{sc}}\mymid \widetilde{X}_{{i-1}}^{\mathsf{s}},Y,Z}({x}_i^{\mathsf{sc}}\mymid {x}_{i-1}^{\mathsf{s}},y,z)\mathrm{d} {x}_i^{\mathsf{sc}}\bigg].
\end{align}

\paragraph{Step 4. control error terms.}
Particularly, with $\eta_0 = 1$ which corresponds to DDPM, we have the following result, whose proof is postponed to Appendix \ref{subsec:proof-lem-distri-xsc}.
\begin{lemma}\label{lem:distri-xsc}
Suppose that $\eta_0=1$ and $\delta$ small enough. For any $x$,
\begin{align}
\mathsf{KL}\Big(p_{V_{\Stci{t_i}}^{\top}\widehat{X}_{t_i}\mid \widehat{X}_{t_{i-1}} = x} \parallel p_{V_{\Stci{t_i}}^{\top}X_{t_i}\mid {X}_{t_{i-1}} = x}\Big) = o(\delta_{i}).
\end{align}
\end{lemma}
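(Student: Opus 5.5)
The plan is to compare two one-step Gaussian-type kernels on the coordinates $\Stci{t_i}$. With $\eta_0=1$, the update \eqref{eq:ddim-prior-dominated} is the SDE-DPMSolver-1 step \eqref{eq:sde-dpm-solver-1}. Conditioned on $\widehat X_{t_{i-1}}=x$, the vector $V_{\Stci{t_i}}^\top \widehat X_{t_i}$ is therefore exactly Gaussian:
\begin{align*}
\text{mean } m(x)=\sigma_{t_i}V_{\Stci{t_i}}^\top\Big(e^{-\delta_i}\tfrac{x}{\sigma_{t_{i-1}}}+e^{\lambda_{t_i}}(1-e^{-2\delta_i})\mu_{t_{i-1}}(x)\Big),\qquad \text{covariance } \sigma_{t_i}^2(1-e^{-2\delta_i})I .
\end{align*}
The target $p_{V_{\Stci{t_i}}^\top X_{t_i}\mid X_{t_{i-1}}=x}$ is the projection of the true reverse transition of the forward process. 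By Bayes' rule its density is proportional to
\begin{align*}
p_{X_{t_i}}(x')\,\exp\Big(-\big\|x-\tfrac{\alpha_{t_{i-1}}}{\alpha_{t_i}}x'\big\|^2\big/\big(2\widetilde\sigma_i^2\big)\Big),
\end{align*}
where $\widetilde\sigma_i^2$ is the forward transition variance. The proof then reduces to showing that this reverse kernel is Gaussian up to $o(\delta_i)$ in KL, and that its mean and covariance agree with $m(x)$ and $\sigma_{t_i}^2(1-e^{-2\delta_i})I$ to the required order. Note that $\widetilde\sigma_i^2$ is of order $\delta_i$ in the natural $\lambda$-scaling.

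\textbf{Step 1 (Tweedie expansion of the true kernel).} Write $x'=\frac{\alpha_{t_i}}{\alpha_{t_{i-1}}}x+w$ and expand $\log p_{X_{t_i}}$ around the base point to second order. The linear term contributes the score $s_{t_i}$, which Tweedie \eqref{eq:data-noise-predictor-relation} expresses through $\mu$. The quadratic term is the Hessian, i.e.\ the posterior covariance $\mathrm{Cov}(X_0\mid X_t)$ scaled by $\alpha_t^2/\sigma_t^4$. Since $X_0$ has bounded support, all derivatives of $\log p_{X_t}$ are bounded uniformly on compacts for $t$ bounded away from $0$, so the cubic remainder is $O(\|w\|^3)=O(\delta_i^{3/2})$ on the typical set. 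Tail contributions are controlled by Gaussian concentration of $w$. After completing the square, the true kernel is Gaussian with mean $m^\star(x)$ and covariance $\widetilde\sigma_i'^2(I+O(\delta_i))$, up to a multiplicative density error $1+O(\delta_i^{3/2})$.

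\textbf{Step 2 (compare means).} Use continuity of $t\mapsto\mu_t(x)$ in $\lambda$ (again from bounded support) to replace $\mu_{t_i}$ by $\mu_{t_{i-1}}$ at cost $O(\delta_i)$. Then check, using the exact forward coefficients, that $m^\star(x)-m(x)=O(\delta_i^{3/2})$; the DPMSolver coefficients match the first-order exponential-integrator expansion exactly. The Gaussian KL formula then gives
\begin{align*}
\|m-m^\star\|^2/(\delta_i\sigma_{t_i}^2)=O(\delta_i^2),
\end{align*}
while the covariance mismatch of relative size $O(\delta_i)$ contributes $O(\delta_i^2)$ per coordinate. Adding the non-Gaussian remainder of Step 1 gives a total of $O(\delta_i^{3/2})=o(\delta_i)$.

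A subtlety is that we project onto $\Stci{t_i}$ while conditioning on the full $x$. Because the forward noise is isotropic and, in Theorem~\ref{thm:main}, all nonzero singular values are equal, both kernels factor consistently after rotating by $V$. Marginalizing a Gaussian to a subset of coordinates cannot increase KL (data processing), so the full-dimensional bound suffices.

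\textbf{Main obstacle.} The delicate step is Step 1. We need the higher-order expansion of $\log p_{X_{t_i}}$ with constants uniform in $x$ over the region where $x$ is actually visited. When $t_i$ is small, the curvature of $\log p_{X_t}$ scales like $\sigma_t^{-2}$, which blows up. I would handle this by working in the rescaled variable $x/\sigma_t$, in which all bounds depend only on $e^{\lambda}$ and the support diameter. Since $t_M$ is fixed before $\delta\to0$ in the order of limits in \eqref{eq:thm-main}, these constants stay finite. The remainder is therefore $C(t_M)\delta_i^{3/2}=o(\delta_i)$.
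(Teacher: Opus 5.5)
Your proposal is correct and follows essentially the paper's argument. The paper also introduces the full-dimensional $\eta_0=1$ Gaussian kernel $\widehat p_i$, whose marginal on $\Stci{t_i}$ coincides with the sampler's, and reduces by data processing to $\mathsf{KL}(\widehat p_i\,\Vert\,p_{X_{t_i}\mid X_{t_{i-1}}})$. It then writes the true reverse kernel via Bayes' rule plus a Taylor expansion of the log-marginal. The only difference is bookkeeping: the paper shows the likelihood ratio minus one is $O(\delta_i)$ and uses $\mathsf{KL}\approx\frac12\chi^2$ to get $O(\delta_i^2)$, whereas your Gaussianization yields the cruder but sufficient $O(\delta_i^{3/2})$.

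One small correction: the true reverse kernel does not factor across singular directions, and the equal-singular-value assumption is not needed for this step. The projection works simply because the $\eta_0=1$ update acts coordinatewise in the $V$-basis with isotropic noise, so the data-processing inequality applies exactly as you use it.
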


By using Lemma \ref{lem:distri-xsc}, and notice that $\widetilde{X}_{i}^{\mathsf{sc}}\mymid \widetilde{X}_{i-1},Y,Z$ shares the same transition probability as $\widehat{X}_{t_i}^{\mathsf{sc}}\mymid \widehat{X}_{t_{i-1}}$, we have
\begin{align}\label{eq:proof-19}
    &\sum_{i=1}^k\mathbb{E}_{{x}_{i-1}^{\mathsf{s}}\sim \widetilde{X}_{i-1}^{\mathsf{s}}\mymid Y,Z}\bigg[\mathbb{E}_{{x}_{i-1}^{\mathsf{c}}\sim X_{t_{i-1}}^{\mathsf{c}}\mymid X_{t_{i-1}}^{\mathsf{s}}}\big[\mathsf{KL}(p_{\widetilde{X}_{i}^{\mathsf{sc}}\mymid \widetilde{X}_{{i-1}},Y,Z}(\cdot\mymid  {x}_{i-1},y,z)\,\Vert\, p_{{X}_{t_i}^{\mathsf{sc}}\mymid {X}_{t_{i-1}}}(\cdot\mymid {x}_{i-1}))\big]\bigg]\notag\\
        &\quad = \sum_{i=1}^k\mathbb{E}_{{x}_{i-1}^{\mathsf{s}}\sim \widetilde{X}_{i-1}^{\mathsf{s}}\mymid Y,Z; ~{x}_{i-1}^{\mathsf{c}}\sim X_{t_{i-1}}^{\mathsf{c}}\mymid X_{t_{i-1}}^{\mathsf{s}}={x}_{i-1}^{\mathsf{s}}}\bigg[\mathsf{KL}(p_{\widehat{X}_{t_i}^{\mathsf{sc}}\mymid \widehat{X}_{t_{i-1}}}(\cdot\mymid  {x}_{i-1})\,\Vert\, p_{{X}_{t_i}^{\mathsf{sc}}\mymid {X}_{t_{i-1}}}(\cdot\mymid {x}_{i-1}))\bigg] \notag\\
        &\quad = \sum_{i=1}^k o(\delta_i)\to 0,~\mathrm{as}~ \delta \to 0.
\end{align}
The second term describes the influence of the sampling error of $x_{i-1}^{\mathsf{c}}$ on the distribution of $x_i^{\mathsf{sc}}$.
It is given by the following lemma, whose proof is postponed to Section \ref{subsec:proof-lem-distri-sc-conds}.
\begin{lemma}\label{lem:distri-sc-conds}
If $p_{\widetilde{X}_{i-1}^{\mathsf{c}}\mymid \widetilde{X}_{i-1}^{\mathsf{s}}}({x}_{i-1}^{\mathsf{c}}\mymid {x}_{i-1}^{\mathsf{s}})\to p_{{X}_{t_{i-1}}^{\mathsf{c}}\mymid {X}_{t_{i-1}}^{\mathsf{s}}}({x}_{i-1}^{\mathsf{c}}\mymid {x}_{i-1}^{\mathsf{s}})$ in distribution, then we have
\begin{align}\label{eq:lem-distri-sc-conds}
    &\sum_{i=1}^k\mathbb{E}_{{x}_{i-1}^{\mathsf{s}}\sim \widetilde{X}_{i-1}^{\mathsf{s}}\mymid Y,Z}\bigg[\int \log \frac{\int p_{\widetilde{X}_{i}^{\mathsf{sc}}\mymid \widetilde{X}_{{i-1}}}({x}_{i}^{\mathsf{sc}}\mymid {x}_{i-1})p_{\widetilde{X}_{i-1}^{\mathsf{c}}\mymid \widetilde{X}_{{i-1}}^{\mathsf{s}},Y,Z}({x}_{i-1}^{\mathsf{c}}\mymid {x}_{i-1}^{\mathsf{s}},y,z) \mathrm{d}{x}_{i-1}^{\mathsf{c}}}{\int p_{\widetilde{X}_{i}^{\mathsf{sc}}\mymid \widetilde{X}_{{i-1}}}({x}_{i}^{\mathsf{sc}}\mymid {x}_{i-1})p_{{X}_{t_{i-1}}^{\mathsf{c}}\mymid {X}_{t_{i-1}}^{\mathsf{s}}}({x}_{i-1}^{\mathsf{c}}\mymid {x}_{i-1}^{\mathsf{s}})  \mathrm{d}{x}_{i-1}^{\mathsf{c}}}\notag\\
&\qquad\qquad\qquad\qquad\qquad\qquad\qquad\qquad\times p_{\widetilde{X}_{i}^{\mathsf{sc}}\mymid \widetilde{X}_{{i-1}}^{\mathsf{s}},Y,Z}({x}_i^{\mathsf{sc}}\mymid {x}_{i-1}^{\mathsf{s}},y,z)\mathrm{d} {x}_i^{\mathsf{sc}}\bigg] \to 0.
\end{align}
\end{lemma}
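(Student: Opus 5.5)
Each summand in \eqref{eq:lem-distri-sc-conds} is a KL divergence between two mixtures over $x_{i-1}^{\mathsf{c}}$ of the same transition kernel. Write $K_i(x_i^{\mathsf{sc}}\mid x_{i-1}^{\mathsf{c}}) \coloneqq p_{\widetilde{X}_{i}^{\mathsf{sc}}\mymid \widetilde{X}_{i-1}}(x_i^{\mathsf{sc}}\mid x_{i-1}^{\mathsf{s}},x_{i-1}^{\mathsf{c}})$, with $x_{i-1}^{\mathsf{s}}$ held fixed. Then the numerator is $\int K_i\, \widetilde{q}_{i-1}$ and the denominator is $\int K_i\, q_{i-1}$, where $\widetilde{q}_{i-1}$ and $q_{i-1}$ are the sampled and true conditional laws of the unobserved block. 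By the data processing inequality, each summand is at most $\mathsf{KL}(\widetilde{q}_{i-1}\Vert q_{i-1})$. That bound is too crude to sum over $k=\Theta(\delta^{-1})$ steps, so I would exploit the specific structure of $K_i$ instead.

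Since $\eta_0=1$, \eqref{eq:def-auxiliary-sequence} shows that $K_i$ is Gaussian with covariance $\sigma_{t_i}^2(1-e^{-2\delta_i})I$. Its mean is $\frac{\alpha_{t_i}}{\alpha_{t_{i-1}}}x_{i-1}^{\mathsf{sc}}+c_i\nabla_{\mathsf{sc}}\log p_{\lambda_{i-1}}(x_{i-1}^{\mathsf{c}},x_{i-1}^{\mathsf{sc}})$ with $c_i=\Theta(\delta_i)$, and the only dependence on $x_{i-1}^{\mathsf{c}}$ is through this drift. Both mixtures therefore share the same noise and differ only in the distribution of a drift term of size $O(\delta_i)$. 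By bounded support, this drift is bounded by a constant $C$ uniformly in $x_{i-1}^{\mathsf{c}}$, because the score is an affine function of a conditional mean of $X_0$.

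The key estimate is a second-order expansion of the mixture log-ratio in $\delta_i$. Write the numerator mixture as $\varphi(x_i^{\mathsf{sc}}-m_0)\,\mathbb{E}_{\widetilde{q}}[\exp(\langle w,g\rangle/s^2-|g|^2/(2s^2))]$, where $g=c_i\nabla\log p$, $s^2=\Theta(\delta_i)$ and $w=x_i^{\mathsf{sc}}-m_0$; the denominator has the same form with $q$ in place of $\widetilde{q}$. Since $|g|=O(\delta_i)$ and $|w|=O(\sqrt{\delta_i})$ typically, the exponent is $O(\sqrt{\delta_i})$. Expanding the logarithm of the ratio, the first-order term is $\langle w,\mathbb{E}_{\widetilde{q}}g-\mathbb{E}_q g\rangle/s^2$. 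This term is not mean zero under the numerator law, but its expectation equals $|\mathbb{E}_{\widetilde{q}}g-\mathbb{E}_q g|^2/s^2$ up to higher order, which equals the standard KL between Gaussians with shifted means. The outcome should be a per-step bound
\[
\text{summand}_i \lesssim \frac{c_i^2}{\delta_i}\,\bigl|\mathbb{E}_{\widetilde{q}_{i-1}}\nabla\log p-\mathbb{E}_{q_{i-1}}\nabla\log p\bigr|^2+O(\delta_i^{2}) \lesssim \delta_i\, C^2\,\mathsf{TV}(\widetilde{q}_{i-1},q_{i-1})^2+O(\delta_i^2),
\]
where the last step uses boundedness of the score on the support. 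Summing over $i$ with $\sum_i\delta_i=\Theta(1)$ gives a total of $O\bigl(\max_i \mathbb{E}\,\mathsf{TV}(\widetilde{q}_{i-1},q_{i-1})^2\bigr)+O(\delta)$.

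To finish, I would invoke the hypothesis, made quantitative through \eqref{eq:distri-xc} and Lemma~\ref{lem:distri-xc}, that $\widetilde{q}_{i-1}\to q_{i-1}$ in total variation uniformly over $i$ and in expectation over $x_{i-1}^{\mathsf{s}}\sim\widetilde{X}_{i-1}^{\mathsf{s}}\mid Y,Z$. With bounded test functions, convergence in distribution upgrades to convergence of $\mathbb{E}_{\widetilde{q}}\nabla\log p$, which suffices for the display above.

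The main obstacle is making the Taylor expansion rigorous uniformly in $i$. The Gaussian factor $w$ is unbounded, and the score $\nabla\log p_{\lambda_{i-1}}$ blows up like $\sigma_{t}^{-2}$ as $t\to 0$. I would handle the first issue by truncating to $|w|\le \sqrt{\delta_i}\log(1/\delta)$, controlling the tail with Gaussian concentration together with the crude data-processing bound on the complement. I would handle the second by using the outer limit $t_M\to 0$ taken last, so that on $[t_M,t_0]$ the score stays bounded by a constant depending on $t_M$ and the support radius.
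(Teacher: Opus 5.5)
Your proposal follows essentially the same route as the paper's proof. The kernel $p_{\widetilde{X}_i^{\mathsf{sc}}\mid\widetilde{X}_{i-1}}$ is Gaussian with an $x_{i-1}^{\mathsf{c}}$-dependent mean shift of size $O(\delta_i)$ and variance $\Theta(\delta_i)$. A second-order expansion of the mixture log-ratio then gives a per-step error of order $\delta_i$ times a discrepancy between the sampled and true conditional laws of $x_{i-1}^{\mathsf{c}}$. Summing with $\sum_i\delta_i=O(1)$ and invoking Lemma~\ref{lem:distri-xc} via \eqref{eq:distri-xc} finishes, exactly as in the paper.

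The only difference is the form of the per-step bound. The paper settles for the linear bound $\delta_i\|\mathbb{E}[\overline{\Delta}(X_{t_{i-1}}^{\mathsf{c}})-\overline{\Delta}(\widetilde{X}_{i-1}^{\mathsf{c}})]\|_2$ plus second-moment differences, which already suffices. Your quadratic $\delta_i\,\mathsf{TV}^2$ bound is also correct, but your stated mechanism is slightly off. Under the numerator law, the first-order term has mean $\langle\mathbb{E}_{\widetilde q}g,\mathbb{E}_{\widetilde q}g-\mathbb{E}_q g\rangle/s^2$, not $|\mathbb{E}_{\widetilde q}g-\mathbb{E}_q g|^2/s^2$. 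The quadratic form only appears after you combine this term with the second-order cumulant terms.
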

Combining \eqref{eq:proof-19} and Lemma \ref{lem:distri-sc-conds} together with \eqref{eq:proof-23}, we have
\begin{align}
\mathsf{TV}\big( p_{\widetilde{X}_k^{\mathsf{sc}}\mymid Y,Z}, p_{\overline{X}_{k}^{\mathsf{sc}}\mymid Y,Z}\big) \to 0.
\end{align}
Thus we complete the proof.

\section{Proof of auxiliary lemmas and facts}
\label{sec:proof-thm-main}

\subsection{Proof of \eqref{eq:DDIM-St}}
\label{app:proof-eq-ddim-st}

We first present the conclusion as the following lemma, and then prove it.
\begin{lemma}\label{lem:eq-ddim-st}
    The update rule \eqref{eq:DDIM-St} is equivalent to applying DDIM with $\eta=0$ by taking \eqref{eq:def-xhat-t} as forward process.
    That is,
    \begin{align}\label{eq:lem-ddim-st}
    \frac{v_s^{\top}\widehat{X}_{t_{i+1}}}{\widehat{\sigma}_{t_{i+1}}} = \frac{v_s^{\top}\widehat{X}_{t_{i}}}{\widehat{\sigma}_{t_i}}  +  {\rm e}^{\widehat{\lambda}_{t_i}}(1-{\rm e}^{-\widehat{\delta}_{{i+1}}})v_s^{\top}\mu_{t_i}(\widehat{X}_{t_i}),
    \end{align}
    where $\widehat{\alpha}_{t_i}=\alpha_{t_i}$, $\widehat{\sigma}_{t_i}^2=\sigma_{t_i}^2 - \alpha_{t_i}^2\sigma^2\Sigma_{ss}^{-1}$, $\widehat{\lambda}_{t_i} = \log(\widehat{\alpha}_{t_i}/\widehat{\sigma}_{t_i})$ and $\widehat{\delta}_{i+1} = \widehat{\lambda}_{t_{i+1}}-\widehat{\lambda}_{t_i}$, and $v_s^{\top}\mu_{t_i}(\widehat{X}_{t_i})=\Sigma_s^{-1}u_s^{\top}Y$.
\end{lemma}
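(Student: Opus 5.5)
The plan is a direct computation, with no appeal to the convergence machinery. I will treat $\xi_{t,s}$ from \eqref{eq:def-xhat-t} as a one-dimensional forward diffusion with schedule $(\widehat{\alpha}_t,\widehat{\sigma}_t)=(\alpha_t,\sqrt{\sigma_t^2-\alpha_t^2\sigma^2\Sigma_{ss}^{-2}})$. I read the $\Sigma_{ss}^{-1}$ in the statement as the intended $\Sigma_{ss}^{-2}$.

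Well-definedness comes first. For $s\in\mathcal{S}_{t}^{\mathsf{meas}}$ we have $\alpha_t\sigma<\Sigma_{ss}\sigma_t$, so $\widehat{\sigma}_t>0$ and $\widehat{\lambda}_t=\log(\widehat{\alpha}_t/\widehat{\sigma}_t)$ is finite. Since $\widehat{\alpha}_t/\widehat{\sigma}_t=\big((\sigma_t/\alpha_t)^2-\sigma^2\Sigma_{ss}^{-2}\big)^{-1/2}$ and $\sigma_t/\alpha_t$ increases in $t$, $\widehat{\lambda}_t$ is strictly decreasing in $t$, just like $\lambda_t$. Hence $\widehat{\delta}_{i+1}=\widehat{\lambda}_{t_{i+1}}-\widehat{\lambda}_{t_i}>0$ along the reverse time grid, and the auxiliary process is a legitimate diffusion to which the DDIM recursion applies.

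Next I identify the data predictor. Conditionally on $Y$, the clean variable $\xi_{0,s}=\Sigma_{ss}^{-1}u_s^\top Y$ of this auxiliary process is deterministic. Its data predictor is therefore $\mathbb{E}[\xi_{0,s}\mid \xi_{t,s}, Y]=\Sigma_{ss}^{-1}u_s^\top Y$ for every $t$. This justifies the substitution $v_s^\top\mu_{t_i}(\widehat{X}_{t_i})=\Sigma_{ss}^{-1}u_s^\top Y$ in the lemma.

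With $\eta=0$, the deterministic DDIM step for this schedule reads $x_{i+1}/\widehat{\sigma}_{t_{i+1}} = x_i/\widehat{\sigma}_{t_i} + (e^{\widehat{\lambda}_{t_{i+1}}}-e^{\widehat{\lambda}_{t_i}})\,\mu$. The coefficient equals $e^{\widehat{\lambda}_{t_{i+1}}}(1-e^{-\widehat{\delta}_{i+1}})$, which is the coefficient in \eqref{eq:lem-ddim-st}; I read the prefactor there as $e^{\widehat{\lambda}_{t_{i+1}}}$.

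The equivalence then follows by induction with the noise $Z_s$ held fixed across steps, as Algorithm~\ref{alg:main} does. Suppose $v_s^\top\widehat{X}_{t_i}=\alpha_{t_i}\xi_{0,s}+\widehat{\sigma}_{t_i}Z_s$, which is exactly \eqref{eq:DDIM-St} at time $t_i$. Dividing by $\widehat{\sigma}_{t_i}$ gives $e^{\widehat{\lambda}_{t_i}}\xi_{0,s}+Z_s$. Adding $(e^{\widehat{\lambda}_{t_{i+1}}}-e^{\widehat{\lambda}_{t_i}})\xi_{0,s}$ yields $e^{\widehat{\lambda}_{t_{i+1}}}\xi_{0,s}+Z_s$. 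Multiplying by $\widehat{\sigma}_{t_{i+1}}$ recovers $\alpha_{t_{i+1}}\xi_{0,s}+\widehat{\sigma}_{t_{i+1}}Z_s$, which is \eqref{eq:DDIM-St} at $t_{i+1}$.

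The base case is a direction that first enters $\mathcal{S}^{\mathsf{meas}}$, either at initialization or at a later time $t_i$. There the algorithm sets the coordinate by \eqref{eq:DDIM-St}, so the inductive hypothesis holds from that point on. Because the DDIM step reproduces the closed form exactly, there is no discretization error, and \eqref{eq:DDIM-St} has the marginal law of $v_s^\top X_{t_i}$ conditional on $Y$, as noted after \eqref{eq:def-xhat-t}.

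The main obstacle is bookkeeping rather than analysis. The paper's statements mix sign conventions: $\delta_{i+1}=\lambda_{t_i}-\lambda_{t_{i+1}}$ in \eqref{eq:defn-delta-i}, but $\widehat{\delta}_{i+1}=\widehat{\lambda}_{t_{i+1}}-\widehat{\lambda}_{t_i}$ here. Before running the induction I will fix one convention and check that the $\eta=0$ coefficient reduces to $e^{\widehat{\lambda}_{t_{i+1}}}-e^{\widehat{\lambda}_{t_i}}$. I will also check that the common $Z_s$ is reused across steps; otherwise only the marginal laws, not the pathwise identity, would coincide.
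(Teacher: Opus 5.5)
Your proposal is correct and follows the paper's proof. The paper likewise multiplies the $\eta=0$ update by $\widehat{\sigma}_{t_{i+1}}$ to obtain $v_s^\top\widehat{X}_{t_{i+1}}=\frac{\widehat{\sigma}_{t_{i+1}}}{\widehat{\sigma}_{t_i}}v_s^\top\widehat{X}_{t_i}+\big(\widehat{\alpha}_{t_{i+1}}-\frac{\widehat{\sigma}_{t_{i+1}}\widehat{\alpha}_{t_i}}{\widehat{\sigma}_{t_i}}\big)\Sigma_{ss}^{-1}u_s^\top Y$, then substitutes the closed form with the same $Z_s$, which is exactly your one-step induction. Your readings of $\Sigma_{ss}^{-2}$ and of the prefactor as $e^{\widehat{\lambda}_{t_{i+1}}}$ are what the paper's ``by simplification'' step implicitly uses, and your added checks (positivity of $\widehat{\sigma}_t$, monotonicity of $\widehat{\lambda}_t$, determinism of $\xi_{0,s}$ given $Y$) are harmless extras.
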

\begin{proof}
By simplification, the update rule \eqref{eq:lem-ddim-st} is equivalent to
\begin{align}\label{eq:proof-lem-ddim-st-temp-1}
v_s^{\top}\widehat{X}_{t_{i+1}} = \frac{\widehat{\sigma}_{t_{i+1}}}{\widehat{\sigma}_{t_i}} v_s^{\top}\widehat{X}_{t_{i}} + \left(\widehat{\alpha}_{t_{i+1}} - \frac{\widehat{\sigma}_{t_{i+1}}\widehat{\alpha}_{t_i}}{\widehat{\sigma}_{t_i}}\right) v_s^{\top}\mu_{t_i}(\widehat{X}_{t_i}).
\end{align}
Assume that $v_s^{\top}\widehat{X}_{t_i} =\alpha_{t_i} \Sigma_{s}^{-1}U^{\top}Y + (\sigma_{t_i}^2 - \alpha_{t_i}^2\sigma^2\Sigma_{\mathcal{S}_{t_i}}^{-2})^{1/2} v_{s}^{\top}Z$.
Instituting into \eqref{eq:proof-lem-ddim-st-temp-1} leads to
\begin{align*}
v_s^{\top}\widehat{X}_{t_{i+1}} &= \frac{\widehat{\sigma}_{t_{i+1}}}{\widehat{\sigma}_{t_i}} \left(\alpha_{t_i} \Sigma_{s}^{-1}U^{\top}Y + (\sigma_{t_i}^2 - \alpha_{t_i}^2\sigma^2\Sigma_{\mathcal{S}_{t_i}}^{-2})^{1/2} v_{s}^{\top}Z\right) + \left(\widehat{\alpha}_{t_{i+1}} - \frac{\widehat{\sigma}_{t_{i+1}}\widehat{\alpha}_{t_i}}{\widehat{\sigma}_{t_i}}\right) \Sigma_{s}^{-1}u_s^{\top}Y\notag\\
&=\widehat{\alpha}_{t_{i+1}}\Sigma_{s}^{-1}u_s^{\top}Y + \frac{\widehat{\sigma}_{t_{i+1}}}{\widehat{\sigma}_{t_i}}(\sigma_{t_{i}}^2-\alpha_{t_{i}}^2\sigma^2\Sigma_{s}^{-2})^{\frac12}v_{s}^{\top}Z = {\alpha}_{t_{i+1}}\Sigma_{s}^{-1}u_s^{\top}Y + (\sigma_{t_{i+1}}^2-\alpha_{t_{i+1}}^2\sigma^2\Sigma_{s}^{-2})^{\frac12}v_{s}^{\top}Z,
\end{align*}
where the last equation arises from the definition of $\widehat{\sigma}_{t_i}$ and $\widehat{\alpha}_{t_{i}}$.
\end{proof}

\subsection{Preliminaries}

Throughout the proof, we redefine $\eta_1$ as $\eta$ for ease of notations.
We begin with approximating the update rule \eqref{eq:ddim-prior-dominated} for $\widehat{X}_{t_i}^{\mathsf{c}}$.
We have
\begin{align}\label{eq:proof-update-0}
\widehat{X}_{t_{i+1}}^{\mathsf{c}} &= \frac{\sigma_{t_{i+1}}}{\sigma_{t_{i}}}\sqrt{1-\eta(1 - e^{-2\delta_{i+1}})}\widehat{X}_{t_{i}}^{\mathsf{c}} + \alpha_{t_{i+1}}\big(1 - e^{-\delta_{i+1}}\sqrt{1-\eta(1 - e^{-2\delta_{i+1}})}\big)\mu_{t_{i}}^{\mathsf{c}}(\widehat{X}_{t_i})\notag\\
&\qquad + \sigma_{t_{i+1}}\sqrt{\eta(1 - e^{-2\delta_{i+1}})} Z_{t_i}^{\mathsf{c}}.
\end{align}
For the first term, we have
\begin{align}\label{eq:proof-update-1}
\frac{\sigma_{t_{i+1}}}{\sigma_{t_{i}}}\sqrt{1-\eta(1 - e^{-2\delta_{i+1}})}\widehat{X}_{t_{i}}^{\mathsf{c}} 
&= 
\sqrt{1-\eta(1 - e^{-2\delta_{i+1}})}\widehat{X}_{t_{i}}^{\mathsf{c}} + \left(\frac{\sigma_{t_{i+1}}}{\sigma_{t_{i}}}-1\right)\sqrt{1-\eta(1 - e^{-2\delta_{i+1}})}\widehat{X}_{t_{i}}^{\mathsf{c}}\notag\\
& \eqqcolon \sqrt{1-\eta(1 - e^{-2\delta_{i+1}})}\widehat{X}_{t_{i}}^{\mathsf{c}} + e_{t_i,1}(\widehat{X}_{t_{i}}^{\mathsf{c}} ).
\end{align}
For the second term, we have
\begin{align}\label{eq:proof-update-2}
&\alpha_{t_{i+1}}\big(1 - e^{-\delta_{i+1}}\sqrt{1-\eta(1 - e^{-2\delta_{i+1}})}\big)\mu_{t_{i}}^{\mathsf{c}}(\widehat{X}_{t_i}) \notag\\
&\quad = \alpha_{t_{i}}\big(1 - \sqrt{1-\eta(1 - e^{-2\delta_{i+1}})}\big)\mu_{t_{i}}^{\mathsf{c}}(\widehat{X}_{t_i})\notag\\
&\qquad +  \underbrace{\big(\left(\alpha_{t_{i+1}}-\alpha_{t_i}\right) + \left(\alpha_{t_{i+1}}e^{-\delta_{i+1}}-\alpha_{t_i}\right)\sqrt{1-\eta(1 - e^{-2\delta_{i+1}})}\big)\mu_{t_{i}}^{\mathsf{c}}(\widehat{X}_{t_i})}_{e_{t_i,2}(\widehat{X}_{t_{i}} )}\notag\\
&\quad = \big(1 - e^{-\delta_{i+1}}\sqrt{1-\eta(1 - e^{-2\delta_{i+1}})}\big)\left(\widehat{X}_{t_{i}}^{\mathsf{c}} - \sigma_{t_{i}}\epsilon_{t_{i}}^{\mathsf{c}}(\widehat{X}_{t_{i}} )\right) + e_{t_i,2}(\widehat{X}_{t_{i}})\notag\\
&\quad =\big(1 - \sqrt{1-\eta(1 - e^{-2\delta_{i+1}})}\big)\left(\widehat{X}_{t_{i}}^{\mathsf{c}} - \sigma_{t_{i+1}}\epsilon_{t_{i+1}}^{\mathsf{c}}(\widehat{X}_{t_{i+1}}^{\mathsf{est}} )\right) + e_{t_i,2}(\widehat{X}_{t_{i}} ) \notag\\
&\qquad + \underbrace{\big(1 - \sqrt{1-\eta(1 - e^{-2\delta_{i+1}})}\big)\big(\sigma_{t_{i}}\epsilon_{t_{i}}^{\mathsf{c}}(\widehat{X}_{t_{i}} )-\sigma_{t_{i+1}}\epsilon_{t_{i+1}}^{\mathsf{c}}(\widehat{X}_{t_{i+1}}^{\mathsf{est}} )\big)}_{e_{t_i,3}(\widehat{X}_{t_{i}},\widehat{X}_{t_{i+1}}^{\mathsf{s}} )},
\end{align}
where $\widehat{X}_{t_{i+1}}^{\mathsf{est}}$ is defined by $V_{\mathcal{S}^{\mathsf{c}}}^{\top}\widehat{X}_{t_{i+1}}^{\mathsf{est}} = \widehat{X}_{t_{i}}^{\mathsf{c}}$ and $V_{\mathcal{S}}^{\top}\widehat{X}_{t_{i+1}}^{\mathsf{est}} = \widehat{X}_{t_{i+1}}^{\mathsf{s}}$.
Note that
\begin{align}\label{eq:def-epsc}
\epsilon_{t_{i}}^{\mathsf{c}}(x)
&\coloneqq \frac{1}{\sigma_{t_i}}\mathbb{E}[x^{\mathsf{c}}-\alpha_{t_i}X_{0}^{\mathsf{c}}\mymid X_{t_i}=x] = -\sigma_{t_i}\nabla_{x^{\mathsf{c}}} \log p_{X_{t_i}^{\mathsf{c}}|X_{t_i}^{\mathsf{s}}}(x^{\mathsf{c}}|x^{\mathsf{s}}),
\end{align}
where $X_{t_i}\overset{\text{d}}{=}\alpha_{t_i} X_0 + \sigma_{t_i}\mathcal{N}(0,I_d)$, and the second equation holds because
\begin{align}\label{eq:nablap-exp}
    \nabla_{x^{\mathsf{c}}} \log p_{X_{t_i}^{\mathsf{c}}|X_{t_i}^{\mathsf{s}}}(x^{\mathsf{c}}|x^{\mathsf{s}}) = \nabla_{x^{\mathsf{c}}} \log p_{X_{t_i}}(x) = -\frac{1}{\sigma_{t_i}^2}\mathbb{E}[x^{\mathsf{c}}-\alpha_{t_i}X_{0}^{\mathsf{c}}|X_{t_i}=x].
\end{align}
Combining \eqref{eq:proof-update-1} and \eqref{eq:proof-update-2} together with \eqref{eq:proof-update-0}, we have
\begin{align}\label{eq:proof-update-3}
    \widehat{X}_{t_{i+1}}^{\mathsf{c}} &= \widehat{X}_{t_{i}}^{\mathsf{c}} - \sigma_{t_{i+1}}\big(1 - \sqrt{1-\eta(1 - e^{-2\delta_{i+1}})}\big)\epsilon_{t_{i+1}}^{\mathsf{c}}(\widehat{X}_{t_{i+1}}^{\mathsf{est}} ) + \sigma_{t_{i+1}}\sqrt{\eta(1 - e^{-2\delta_{i+1}})} Z_{t_i}^{\mathsf{c}}\notag\\
    &\quad  + e_{t_i,1}(\widehat{X}_{t_{i}}^{\mathsf{c}} )+ e_{t_i,2}(\widehat{X}_{t_{i}} )+ e_{t_i,3}(\widehat{X}_{t_{i}},\widehat{X}_{t_{i+1}}^{\mathsf{s}})\notag\\
    &\eqqcolon \widehat{X}_{t_{i}}^{\mathsf{c}} - \frac{\eta(1 - e^{-2\delta_{i+1}})}{2}\sigma_{t_{i+1}}\epsilon_{t_{i+1}}^{\mathsf{c}}(\widehat{X}_{t_{i+1}}^{\mathsf{est}})+ \sigma_{t_{i+1}}\sqrt{\eta(1 - e^{-2\delta_{i+1}})} Z_{t_i}^{\mathsf{c}} + e_{t_i}(\widehat{X}_{t_{i}},\widehat{X}_{t_{i+1}}^{\mathsf{s}})
\end{align}
where $ e_{t_i}(\widehat{X}_{t_{i}},\widehat{X}_{t_{i+1}}^{\mathsf{s}} )$ is defined as
\begin{align}\label{eq:def-et}
e_{t_i}(\widehat{X}_{t_{i}},\widehat{X}_{t_{i+1}}^{\mathsf{s}} )
&\coloneqq e_{t_i,1}(\widehat{X}_{t_{i}}^{\mathsf{c}} ) + e_{t_i,2}(\widehat{X}_{t_{i}} )+ e_{t_i,3}(\widehat{X}_{t_{i}},\widehat{X}_{t_{i+1}}^{\mathsf{s}} )\notag\\
&\qquad  + \Bigg(1 - \sqrt{1-\eta(1 - e^{-2\delta_{i+1}})} - \frac{\eta(1 - e^{-2\delta_{i+1}})}{2}\Bigg)\sigma_{t_{i+1}}\epsilon_{t_{i+1}}^{\mathsf{c}}(\widehat{X}_{t_{i+1}}^{\mathsf{est}}).
\end{align}
We have the following lemma, whose proof is postponed to Appendix \ref{app:proof-lem-boundet}.
\begin{lemma}\label{lem:proof-boundet}
In \eqref{eq:proof-update-3}, we have
\begin{align*}
\mathbb{E}_{\widehat{X}_{t_{i+1}}^{\mathsf{sc}}|\widehat{X}_{t_{i}},Y,Z}[\|e_{t_i}(\widehat{X}_{t_{i}},\widehat{X}_{t_{i+1}}^{\mathsf{sc}} )\|_2^2] \lesssim \bigg(\|\widehat{X}_{t_i}^{\mathsf{c}}\|_2 +\|\widehat{X}_{t_i}^{\mathsf{sc}}\|_2 + \|Z\|_2+ 1\bigg)^2(\delta_{{i+1}}^2+\gamma_i^4)+\delta_{{i+1}}\gamma_i^2.
\end{align*}
\end{lemma}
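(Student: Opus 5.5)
The plan is to bound each of the four pieces of $e_{t_i}$ in \eqref{eq:def-et} separately and then combine them with $(a+b+c+d)^2\le 4(a^2+b^2+c^2+d^2)$. Throughout I will use that $X_0$ has bounded support, so $\|\mu_{t_i}(\cdot)\|_2\lesssim 1$. Tweedie's formula \eqref{eq:data-noise-predictor-relation} then gives $\sigma_{t}\|\epsilon^{\mathsf{c}}_{t}(x)\|_2\le \|x^{\mathsf{c}}\|_2+\alpha_t\|\mu_t(x)\|_2\lesssim \|x^{\mathsf{c}}\|_2+1$. I will also need the elementary consequences of $\lambda_{t_i}-\lambda_{t_{i+1}}=\delta_{i+1}\le\delta$, valid on the bounded range of times under consideration with $\alpha_t^2+\sigma_t^2$ comparable to $1$:
\[
|\sigma_{t_{i+1}}/\sigma_{t_i}-1|\lesssim\delta_{i+1},\qquad |\alpha_{t_{i+1}}-\alpha_{t_i}|\lesssim\delta_{i+1},\qquad |\alpha_{t_{i+1}}e^{-\delta_{i+1}}-\alpha_{t_i}|\lesssim\delta_{i+1}.
\]
Writing $x\coloneqq\eta(1-e^{-2\delta_{i+1}})\asymp\gamma_i/\sigma_{t_{i+1}}^2$ (cf.~\eqref{eq:proof-bound-gamma}), I will use the Taylor bounds $|1-\sqrt{1-x}|\lesssim x$ and $|1-\sqrt{1-x}-x/2|\lesssim x^2$.

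\textbf{The terms $e_{t_i,1}$, $e_{t_i,2}$, and the last term of \eqref{eq:def-et}.} The first two are immediate. For $e_{t_i,1}$, $\|e_{t_i,1}\|_2\lesssim\delta_{i+1}\|\widehat X^{\mathsf{c}}_{t_i}\|_2$. For $e_{t_i,2}$, $\|e_{t_i,2}\|_2\lesssim\delta_{i+1}$. The last term of \eqref{eq:def-et} is at most $x^2(\|\widehat X^{\mathsf{c}}_{t_i}\|_2+1)\lesssim\gamma_i^2(\|\widehat X^{\mathsf{c}}_{t_i}\|_2+1)$, because $\widehat X^{\mathsf{est}}_{t_{i+1}}$ has $\mathsf{c}$-part $\widehat X^{\mathsf{c}}_{t_i}$. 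Squaring, these three pieces give the $(\cdots)^2(\delta_{i+1}^2+\gamma_i^4)$ contribution.

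\textbf{The term $e_{t_i,3}$, which is the main obstacle.} It is bounded by $x$ times $\|\sigma_{t_i}\epsilon^{\mathsf{c}}_{t_i}(\widehat X_{t_i})-\sigma_{t_{i+1}}\epsilon^{\mathsf{c}}_{t_{i+1}}(\widehat X^{\mathsf{est}}_{t_{i+1}})\|_2$. Using $\sigma_t\epsilon^{\mathsf{c}}_t(z)=z^{\mathsf{c}}-\alpha_t\mu^{\mathsf{c}}_t(z)$, the $\mathsf{c}$-parts cancel exactly, since both arguments share $\widehat X^{\mathsf{c}}_{t_i}$. What remains is
\[
\alpha_{t_{i+1}}\mu^{\mathsf{c}}_{t_{i+1}}(\widehat X^{\mathsf{est}}_{t_{i+1}})-\alpha_{t_i}\mu^{\mathsf{c}}_{t_i}(\widehat X_{t_i}).
\]
The change in $\alpha$ contributes $O(\delta_{i+1})$. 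The change in $\mu$ comes from the shift in time and from the shift of the $\mathsf{s}$-coordinates, $\widehat X^{\mathsf{s}}_{t_{i+1}}-\widehat X^{\mathsf{s}}_{t_i}$.

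To control this I will bound the Jacobian of the posterior mean by the posterior covariance: $\nabla_z\mu_t(z)=(\alpha_t/\sigma_t^2)\,\mathrm{Cov}(X_0\mid X_t=z)$, which is $O(1)$ by bounded support, on the range where $\sigma_t$ is bounded below, and similarly for $\partial_t\mu_t$. The hard part is that the $\mathsf{s}$-increment is not $O(\delta_{i+1})$ pathwise. It consists of a drift of size $\lesssim\delta_{i+1}(\|\widehat X^{\mathsf{sc}}_{t_i}\|_2+\|Z\|_2+1)$ plus a fresh Gaussian term of size $\sigma_{t_{i+1}}\sqrt{1-e^{-2\delta_{i+1}}}$ in the $\mathsf{sc}$ coordinates (since $\eta_0=1$). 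The $\mathsf{st}$ coordinates are deterministic given $Y,Z$ and move by $O(\delta_{i+1}(\|Z\|_2+1))$. Taking $\mathbb E_{\widehat X^{\mathsf{sc}}_{t_{i+1}}\mid\widehat X_{t_i},Y,Z}$ of the square, the Gaussian part contributes $\lesssim\delta_{i+1}$ in second moment.

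Hence
\[
\mathbb E\|e_{t_i,3}\|_2^2\lesssim x^2\big(\delta_{i+1}^2(\cdots)^2+\delta_{i+1}\big)\lesssim\gamma_i^2\delta_{i+1}^2(\cdots)^2+\delta_{i+1}\gamma_i^2.
\]
The first of these is absorbed into $(\cdots)^2\delta_{i+1}^2$ once $\gamma_i\lesssim1$, which holds since $\eta\delta\to0$. Collecting all four pieces yields the claimed bound. The delicate point is justifying the Lipschitz property of $\mu$ near $t_M$ where $\sigma_t$ is small. If a uniform bound fails there, I would fall back on the crude estimate $\|\mu\|_2\lesssim1$. That gives $\mathbb E\|e_{t_i,3}\|_2^2\lesssim\gamma_i^2$ on the final steps, which I would then have to absorb into $\delta_{i+1}\gamma_i^2$ by assuming $t_M$ is fixed while $\delta\to0$.
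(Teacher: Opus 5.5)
Your proposal is correct and follows the paper's proof essentially step for step. It uses the same four-way split of $e_{t_i}$ and the same exact cancellation of the $\mathsf{c}$-parts in $e_{t_i,3}$, leaving only the increment of $\alpha_t\mu^{\mathsf{c}}_t$. It controls that increment with the same kind of Lipschitz bound in $(\alpha_t,\sigma_t,x^{\mathsf{s}})$: the paper perturbs the posterior density $q_{\lambda_i}(x_0\mid x')$ and integrates bounded $x_0$ against it, which is equivalent to your covariance form of the Jacobian. Note that the time derivative of $\mu_t$ is really $O(\|z\|_2+1)$ rather than $O(1)$, but your bracket absorbs this. Finally, the Gaussian part of the $\mathsf{sc}$ increment produces the $\delta_{i+1}\gamma_i^2$ term, exactly as in the paper.

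Drop your closing fallback: it is unnecessary, and it would not work as stated, since $\gamma_i^2$ cannot be absorbed into $\delta_{i+1}\gamma_i^2$. Because $t_M$ is held fixed while $\delta\to0$, we have $\sigma_t\ge\sigma_{t_M}>0$ on the whole grid, so all the $\sigma_t^{-k}$ factors are uniform constants, which is exactly what the paper implicitly uses.
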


Next, we rewrite the update rule \eqref{eq:ddim-prior-dominated} for $\widehat{X}_{t_i}^{\mathsf{sc}}$.
When $\eta_0=1$, it is equivalent to
\begin{align}\label{eq:proof-alg-xsc}
\widehat{X}_{t_{i+1}}^{\mathsf{sc}} &= \frac{\sigma_{t_{i+1}}}{\sigma_{t_{i}}} e^{-\delta_{i+1}} \widehat{X}_{t_{i}}^{\mathsf{sc}}+ \sigma_{t_{i+1}}e^{\lambda_{t_{i+1}}}\big(1 - e^{-2\delta_{i+1}}\big)\mu_{t_{i}}^{\mathsf{sc}}(\widehat{X}_{t_i}) + \sigma_{t_{i+1}}\sqrt{1 - e^{-2\delta_{i+1}}} Z_{t_i}^{\mathsf{sc}}\notag\\
&=\frac{\sigma_{t_{i+1}}}{\sigma_{t_{i}}} e^{-\delta_{i+1}} \widehat{X}_{t_{i}}^{\mathsf{sc}}+ \frac{\alpha_{t_{i+1}}}{\alpha_{t_i}}\big(1 - e^{-2\delta_{i+1}}\big)(\widehat{X}_{t_{i}}^{\mathsf{sc}}-\sigma_{t_i}\epsilon_{t_{i}}^{\mathsf{sc}}) + \sigma_{t_{i+1}}\sqrt{1 - e^{-2\delta_{i+1}}} Z_{t_i}^{\mathsf{sc}}\notag\\
&=\left(\frac{\sigma_{t_{i+1}}}{\sigma_{t_{i}}} e^{-\delta_{i+1}} + \frac{\alpha_{t_{i+1}}}{\alpha_{t_i}} - \frac{\alpha_{t_{i+1}}}{\alpha_{t_i}}e^{-2\delta_{i+1}}\right)\widehat{X}_{t_{i}}^{\mathsf{sc}} - \frac{\alpha_{t_{i+1}}}{\alpha_{t_i}}\big(1 - e^{-2\delta_{i+1}}\big)\sigma_{t_i}\epsilon_{t_{i}}^{\mathsf{sc}} + \sigma_{t_{i+1}}\sqrt{1 - e^{-2\delta_{i+1}}} Z_{t_i}^{\mathsf{sc}}\notag\\
&=\frac{\alpha_{t_{i+1}}}{\alpha_{t_i}}\widehat{X}_{t_{i}}^{\mathsf{sc}} - \alpha_{t_{i+1}}\left({\rm e}^{-\lambda_{t_i}}-{\rm e}^{-2\lambda_{t_{i+1}}+\lambda_{t_{i}}}\right)\epsilon_{t_{i}}^{\mathsf{sc}} + \sigma_{t_{i+1}}\sqrt{1 - e^{-2\delta_{i+1}}} Z_{t_i}^{\mathsf{sc}}\notag\\
&=\frac{\alpha_{t_{i+1}}}{\alpha_{t_i}}\widehat{X}_{t_{i}}^{\mathsf{sc}} - \frac{\alpha_{t_{i+1}}\sigma_{t_i}}{\alpha_{t_i}}\left(1-{\rm e}^{-2\delta_{i+1}}\right)\epsilon_{t_{i}}^{\mathsf{sc}} + \sigma_{t_{i+1}}\sqrt{1 - e^{-2\delta_{i+1}}} Z_{t_i}^{\mathsf{sc}}.
\end{align}


\subsection{Proof of Lemma \ref{lem:convergence-auxiliary-sequence}}
\label{subsec:proof-lem-convergence-auxiliary-sequence}

Combining \eqref{eq:DDIM-St}, \eqref{eq:proof-update-3} and \eqref{eq:proof-alg-xsc}, the update rule of $\widehat{X}_{t_k}$ can be rewritten as 
\begin{align}
\widehat{X}_{t_{k+1}}^{\mathsf{st}} &=\alpha_{t_{k+1}}\overline{Y}_{t_{k+1}} + \widehat{\sigma}_{t_{k+1}}\overline{Z}_{t_{k+1}},\notag\\
\widehat{X}_{t_{k+1}}^{\mathsf{sc}}&=\frac{\alpha_{t_{k+1}}}{\alpha_{t_k}}\widehat{X}_{t_{k}}^{\mathsf{sc}} + \alpha_{t_{k+1}}\left({\rm e}^{-\lambda_{t_k}}-{\rm e}^{-2\lambda_{t_{k+1}}+\lambda_{t_{k}}}\right)\frac{\nabla_{\widehat{X}_{t_k}^{\mathsf{sc}}} \log p_{\lambda_k}(\widehat{X}_{t_k}^{\mathsf{c}},\widehat{X}_{t_k}^{\mathsf{sc}})}{\sigma_{t_k}} + \sigma_{t_{k+1}}\sqrt{1 - e^{-2\delta_{k+1}}} Z_{t_k}^{\mathsf{sc}},\notag\\
\widehat{X}_{t_{k+1}}^{\mathsf{c}} &=
    \widehat{X}_{t_{k}}^{\mathsf{c}} +\gamma_k\nabla \log p_{\lambda_{k+1}}(\widehat{X}_{t_k}^{\mathsf{c}},\widehat{X}_{t_{k+1}}^{\mathsf{sc}}) + \sigma_{t_{k+1}}\sqrt{\eta(1 - e^{-2\delta_{k+1}})} Z_{t_k}^{\mathsf{c}} + e_{t_k}(\widehat{X}_{t_{k}},\widehat{X}_{t_{k+1}}^{\mathsf{sc}}).
\end{align}

The KL divergence between $\widehat{X}_{t_{0:M}}$ and $\widetilde{X}_{0:M}$ conditoned on $Y$ and $Z$ can be decomposed as
\begin{align*}
\mathsf{KL}(p_{\widetilde{X}_{0:M}\mymid Y,Z}\Vert p_{\widehat{X}_{t_{0:M}}\mymid Y,Z})
&= \mathsf{KL}(p_{\widetilde{X}_{0}\mymid Y,Z}\Vert p_{\widehat{X}_{t_{0}}\mymid Y,Z}) + 
\sum_{i=0}^{M-1} \mathbb{E}_{x\sim p_{\widetilde{X}_{t_{i}}\mymid Y,Z}} \mathsf{KL}\big(p_{\widetilde{X}_{i+1}^{\mathsf{sc}}|\widetilde{X}_{i}}(\cdot|x) \Vert p_{\widehat{X}_{t_{i+1}}^{\mathsf{sc}}|\widehat{X}_{t_i}}(\cdot|x)\big)\notag\\
&\quad +\sum_{i=0}^{M-1} \mathbb{E}_{x,x'\sim p_{\widetilde{X}_{i+1}^{\mathsf{sc}},\widetilde{X}_i\mymid Y,Z}} \mathsf{KL}\big(p_{\widetilde{X}_{i+1}^{\mathsf{c}}|\widetilde{X}_{i+1}^{\mathsf{sc}},\widetilde{X}_{i}}(\cdot|x,x') \Vert p_{\widehat{X}_{t_{i+1}}^{\mathsf{c}}|\widehat{X}_{t_{i+1}}^{\mathsf{sc}},\widehat{X}_{t_i}}(\cdot|x,x')\big)\notag\\
&=\sum_{i=0}^{M-1} \mathbb{E}_{x,x'\sim p_{\widetilde{X}_{i+1}^{\mathsf{sc}},\widetilde{X}_i\mymid Y,Z}} \mathsf{KL}\big(p_{\widetilde{X}_{i+1}^{\mathsf{c}}|\widetilde{X}_{i+1}^{\mathsf{sc}},\widetilde{X}_{i}}(\cdot|x,x') \Vert p_{\widehat{X}_{t_{i+1}}^{\mathsf{c}}|\widehat{X}_{t_{i+1}}^{\mathsf{sc}},\widehat{X}_{t_i}}(\cdot|x,x')\big).
\end{align*}
Note that the both of the above conditional distributions are Gaussian ditributions with variance $2\gamma_i$ and mean vector $(x')^{\mathsf{c}}+\gamma_i\nabla \log p_{\lambda_{i+1}}((x')^{\mathsf{c}},x^{\mathsf{sc}})+e_{t_i}(x',x^{\mathsf{sc}})$ and $(x')^{\mathsf{c}}+\gamma_i\nabla \log p_{\lambda_{i+1}}((x')^{\mathsf{c}},x^{\mathsf{sc}})$, respectively.
Combining with Lemma \ref{lem:proof-boundet}, we finally get
\begin{align*}
&\mathbb{E}_{x,x'\sim p_{\widetilde{X}_{i+1}^{\mathsf{sc}},\widetilde{X}_i\mymid Y,Z}} \big[\mathsf{KL}\big(p_{\widetilde{X}_{i+1}^{\mathsf{c}}|\widetilde{X}_{i+1}^{\mathsf{sc}},\widetilde{X}_{i}}(\cdot|x,x') \Vert p_{\widehat{X}_{t_{i+1}}^{\mathsf{c}}|\widehat{X}_{t_{i+1}}^{\mathsf{sc}},\widehat{X}_{t_i}}(\cdot|x,x')\big)\big]\notag\\
&\quad = \frac{\mathbb{E}_{x,x'\sim p_{\widetilde{X}_{i+1}^{\mathsf{sc}},\widetilde{X}_i}\mymid Y,Z} \big[\|e_{t_i}(x', x)\|_2^2\big]}{4\gamma_i}\notag\\
&\quad \lesssim  \frac{1}{\gamma_i}\left(\mathbb{E}\bigg[\bigg(\|\widetilde{X}_{i}^{\mathsf{c}}\|_2 +\|\widetilde{X}_{i}^{\mathsf{sc}}\|_2 + \|Z\|_2+ 1\bigg)^2\bigg]\right)(\delta_{{i+1}}^2+\gamma_i^4) + \delta_{{i+1}}\gamma_i.
\end{align*}
Summarizing from $i=0$ to $M-1$ yields
\begin{align*}
&\sum_{i=0}^{M-1}\mathbb{E}_{x,x'\sim p_{\widetilde{X}_{i+1}^{\mathsf{sc}},\widetilde{X}_i\mymid Y,Z}} \big[\mathsf{KL}\big(p_{\widetilde{X}_{i+1}^{\mathsf{c}}|\widetilde{X}_{i+1}^{\mathsf{sc}},\widetilde{X}_{i}}(\cdot|x,x') \Vert p_{\widehat{X}_{t_{i+1}}^{\mathsf{c}}|\widehat{X}_{t_{i+1}}^{\mathsf{sc}},\widehat{X}_{t_i}}(\cdot|x,x')\big)\big]\notag\\
& \quad\overset{\text{(i)}}{\lesssim}  \max\frac{\delta_{{i+1}}}{\gamma_i} + \eta \max\gamma_i^2 + \max\gamma_i \overset{\text{(ii)}}{\lesssim}  \frac{1}{\eta} + \eta \max\gamma_i^2 + \max\gamma_i\notag\\
&\quad =O\big(\eta^{-1}\big) + O(\eta^3\delta^2) + O(\eta\delta),
\end{align*}
where (i) uses $\sum_{i=0}^{M-1}\delta_{{i+1}} = O(1)$, $\sum_{i=0}^{M-1}\gamma_{i} = O(\eta)$, and $\mathbb{E}[\|\widetilde{X}_{i}^{\mathsf{c}}\|_2^2+\|\widetilde{X}_{i}^{\mathsf{sc}}\|_2^2+\|Z\|_2^2]=O(1)$, and (ii) uses \eqref{eq:proof-bound-gamma}.
As $\eta^3\delta^2\to 0$ and $\eta\to\infty$, which is satisfied by $\eta=\Theta(\delta^{-7/12})$, we have $\mathsf{KL}\big(p_{\widetilde{X}_{0:M}\mymid Y,Z}\Vert p_{\widehat{X}_{t_{0:M}}\mymid Y,Z}\big)\to 0$ and complete the proof.

Next, we intend to prove that $p_{\overline{X}_{k} \mymid V_{\Sti{t_k}}^{\top}\overline{X}_{k}, v_{s}^{\top}\overline{X}_{k_{\tau_s}}} = p_{X_{t_k} \mymid V_{\Sti{t_k}}^{\top}X_{t_k}, v_{s}^{\top}X_{\tau_s}}$.
When $\sigma_{t_k}>\alpha_{t_k} \Sigma_{ss}^{-1}\sigma$, we have $\Sti{t_k}=\mathcal{S}$ and $\Stci{t_k}$ is empty.
Thus we have
\begin{align*}
p_{\overline{X}_{k} \mymid V_{\Sti{t_k}}^{\top}\overline{X}_{k}, v_{s}^{\top}\overline{X}_{k_{\tau_s}}} =p_{\overline{X}_{k} \mymid V_{\Sti{t_k}}^{\top}\overline{X}_{k}} = p_{X_{t_k} \mymid V_{\Sti{t_k}}^{\top}X_{t_k}}=p_{{X}_{t_k} \mymid V_{\Sti{t_k}}^{\top}{X}_{t_k}, v_{s}^{\top}{X}_{{\tau_s}}} ,
\end{align*}
which makes Lemma \ref{lem:convergence-auxiliary-sequence} hold obviously.
In particular, this leads to the following equation when $t_i=\tau_s$:
\begin{align*}
p_{\overline{X}_{k_{\tau_s}} \mymid V_{\mathcal{S}}^{\top}\overline{X}_{k_{\tau_s}}}= p_{X_{\tau_s} \mymid V_{\mathcal{S}}^{\top}X_{\tau_s}}.
\end{align*}
When $\sigma_{t_k}\le\alpha_{t_k} \Sigma_{ss}^{-1}\sigma$, we have $\Stci{t_k}=\mathcal{S}$ and $\Sti{t_k}$ is empty.
Thus we have
\begin{align*}
p_{\overline{X}_{k} \mymid \overline{X}_{{k-1}}} =p_{\overline{X}_{k}^{\mathsf{c}} \mymid \overline{X}_{k}^{\mathsf{s}}}p_{\overline{X}_{k}^{\mathsf{s}} \mymid \overline{X}_{{k-1}}} = p_{{X}_{t_k}^{\mathsf{c}} \mymid {X}_{t_k}^{\mathsf{s}}}p_{{X}_{t_k}^{\mathsf{s}} \mymid {X}_{t_{k-1}}} =p_{{X}_{t_k} \mymid {X}_{t_{k-1}}}.
\end{align*}
Therefore we have
\begin{align*}
p_{\overline{X}_{k} \mymid V_{\Sti{t_k}}^{\top}\overline{X}_{k}, v_{s}^{\top}\overline{X}_{k_{\tau_s}}}(x_k\mymid \cdot, x_{\tau_s}^{\mathsf{s}})  
&= p_{\overline{X}_{k} \mymid V_{\mathcal{S}}^{\top}\overline{X}_{k_{\tau_s}}}(x_k\mymid x_{\tau_s}^{\mathsf{s}}) \notag\\
&= \int \prod_{j=k_{\tau_s}+1}^{k} p_{\overline{X}_{j} \mymid \overline{X}_{{j-1}}}(x_j\mymid x_{j-1}) p_{\overline{X}_{k_{\tau_s}} \mymid V_{\mathcal{S}}^{\top}\overline{X}_{k_{\tau_s}}}(x_{\tau_s}\mymid x_{\tau_s}^{\mathsf{s}}) \prod_{j=k_{\tau_s}+1}^{k-1}\mathrm{d} x_j\mathrm{d} x_{\tau_s}^{\mathsf{c}}\notag\\
&= \int \prod_{j=k_{\tau_s}+1}^{k} p_{{X}_{t_j} \mymid {X}_{t_{j-1}}}(x_j\mymid x_{j-1}) p_{{X}_{\tau_s} \mymid V_{\mathcal{S}}^{\top}{X}_{\tau_s}}(x_{\tau_s}\mymid x_{\tau_s}^{\mathsf{s}}) \prod_{j=k_{\tau_s}+1}^{k-1}\mathrm{d} x_j\mathrm{d} x_{\tau_s}^{\mathsf{c}}\notag\\
&=p_{{X}_{t_k} \mymid V_{\mathcal{S}}^{\top}{X}_{\tau_s}}(x_k\mymid x_{\tau_s}^{\mathsf{s}})=p_{{X}_{t_k} \mymid V_{\Sti{t_k}}^{\top}{X}_{t_k}, v_{s}^{\top}{X}_{\tau_s}}(x_k\mymid \cdot, x_{\tau_s}^{\mathsf{s}}).
\end{align*}

\subsection{Proof of Lemma \ref{lem:distri-xc}}
\label{subsec:proof-lem-distri-xc}

According to Lemma \ref{lem:convergence-auxiliary-sequence}, it sufficies to prove that
\begin{align}\label{eq:proof-lem-distri-sc-5}
\mathbb{E}_{V_{\mathcal{S}}^{\top}\widetilde{X}_k\mymid Y,Z}\big[\mathsf{TV}(p_{V_{\mathcal{S}^{\mathsf{c}}}^{\top}\widetilde{X}_k|V_{\mathcal{S}}^{\top}\widetilde{X}_k,Y,Z},\Vert p_{V_{\mathcal{S}^{\mathsf{c}}}^{\top}{X}_{t_k}|V_{\mathcal{S}}^{\top}{X}_{t_k}} )\big] \to 0\qquad \mathsf{as}\qquad \delta\eta\log\frac{1}{\delta} \to 0\quad \mathrm{and}\quad \frac{\delta\eta^2}{\log\frac{1}{\delta}}\to\infty.
\end{align}
Then by using Lemma \ref{lem:convergence-auxiliary-sequence}, we have
\begin{align}
&\mathbb{E}_{x_k^{\mathsf{s}}\sim \widehat{X}_{t_k}^{\mathsf{s}}}\big[\mathsf{TV}(p_{V_{\mathcal{S}^{\mathsf{c}}}^{\top}\widehat{X}_{t_k}|V_{\mathcal{S}}^{\top}\widehat{X}_{t_k}}(\cdot\mymid x_k^{\mathsf{s}}), p_{V_{\mathcal{S}^{\mathsf{c}}}^{\top}X_{t_k}|V_{\mathcal{S}}^{\top}X_{t_k}}(\cdot\mymid x_k^{\mathsf{s}}) ) \big]\notag\\
&\quad= \mathbb{E}_{x_k^{\mathsf{s}}\sim \widehat{X}_{t_k}^{\mathsf{s}}}\big[\mathsf{TV}(p_{\widehat{X}_{t_k}^{\mathsf{c}}|\widehat{X}_{t_k}^{\mathsf{s}}}(\cdot\mymid x_k^{\mathsf{s}}), p_{X_{t_k}^{\mathsf{c}}|X_{t_k}^{\mathsf{s}}}(\cdot\mymid x_k^{\mathsf{s}}) ) \big]\notag\\
&\quad= \mathbb{E}_{Y,Z}\Big[\mathbb{E}_{x_k^{\mathsf{s}}\sim \widehat{X}_{t_k}^{\mathsf{s}}\mymid Y,Z}\big[\mathsf{TV}(p_{\widehat{X}_{t_k}^{\mathsf{c}}|\widehat{X}_{t_k}^{\mathsf{s}},Y,Z}(\cdot\mymid x_k^{\mathsf{s}},y,z), p_{X_{t_k}^{\mathsf{c}}|X_{t_k}^{\mathsf{s}}}(\cdot\mymid x_k^{\mathsf{s}}) ) \big]\Big]\notag\\
&\quad\le\mathbb{E}_{Y,Z}\Big[\mathbb{E}_{x_k^{\mathsf{s}}\sim \widehat{X}_{t_k}^{\mathsf{s}}\mymid Y,Z}\big[\mathsf{TV}(p_{\widehat{X}_{t_k}^{\mathsf{c}}|\widehat{X}_{t_k}^{\mathsf{s}},Y,Z}(\cdot\mymid x_k^{\mathsf{s}},y,z), p_{\widetilde{X}_{k}^{\mathsf{c}}|\widetilde{X}_{k}^{\mathsf{s}},Y,Z}(\cdot\mymid x_k^{\mathsf{s}},y,z) ) \big]\Big]\notag\\
&\qquad  + \mathbb{E}_{Y,Z}\Big[\mathbb{E}_{x_k^{\mathsf{s}}\sim \widehat{X}_{t_k}^{\mathsf{s}}\mymid Y,Z}\big[\mathsf{TV}(p_{\widetilde{X}_{k}^{\mathsf{c}}|\widetilde{X}_{k}^{\mathsf{s}},Y,Z}(\cdot\mymid x_k^{\mathsf{s}},y,z), p_{X_{t_k}^{\mathsf{c}}|X_{t_k}^{\mathsf{s}}}(\cdot\mymid x_k^{\mathsf{s}}) ) \big]\Big]\notag\\
&\quad\le \mathbb{E}_{Y,Z}\Big[\mathsf{TV}(p_{\widehat{X}_{t_k}\mymid Y,Z}, p_{\widetilde{X}_{k}\mymid Y,Z} ) + \mathsf{TV}(p_{\widehat{X}_{t_k}^{\mathsf{s}}\mymid Y,Z}, p_{\widetilde{X}_{k}^{\mathsf{s}}\mymid Y,Z} )\Big] \notag\\
&\qquad + \mathbb{E}_{Y,Z}\Big[\mathbb{E}_{x_k^{\mathsf{s}}\sim \widehat{X}_{t_k}^{\mathsf{s}}\mymid Y,Z}\big[\mathsf{TV}(p_{\widetilde{X}_{k}^{\mathsf{c}}|\widetilde{X}_{k}^{\mathsf{s}},Y,Z}(\cdot\mymid x_k^{\mathsf{s}},y,z), p_{X_{t_k}^{\mathsf{c}}|X_{t_k}^{\mathsf{s}}}(\cdot\mymid x_k^{\mathsf{s}}) )\big]\Big]\notag\\
&\quad\le \mathbb{E}_{Y,Z}\Big[\mathsf{TV}(p_{\widehat{X}_{t_k}\mymid Y,Z}, p_{\widetilde{X}_{k}\mymid Y,Z} ) + 2\mathsf{TV}(p_{\widehat{X}_{t_k}^{\mathsf{s}}\mymid Y,Z}, p_{\widetilde{X}_{k}^{\mathsf{s}}\mymid Y,Z} )\Big] \notag\\
&\qquad + \mathbb{E}_{Y,Z}\Big[\mathbb{E}_{x_k^{\mathsf{s}}\sim \widetilde{X}_{k}^{\mathsf{s}}\mymid Y,Z}\big[\mathsf{TV}(p_{\widetilde{X}_{k}^{\mathsf{c}}|\widetilde{X}_{k}^{\mathsf{s}},Y,Z}(\cdot\mymid x_k^{\mathsf{s}},y,z), p_{X_{t_k}^{\mathsf{c}}|X_{t_k}^{\mathsf{s}}}(\cdot\mymid x_k^{\mathsf{s}}) )\big]\Big]\to 0.
\end{align}

We introduce an auxiliary sequence as below: for $k-N\le i < k$, with initilization $\breve{X}_{k-N}\sim p_{\widetilde{X}_{k-N}}$, and iterates as
\begin{subequations}\label{eq:def-breveX}
    \begin{align}
    \breve{X}_{i+1}^{\mathsf{st}} &=\widetilde{X}_{i+1}^{\mathsf{st}},\\
\breve{X}_{i+1}^{\mathsf{sc}} &= \frac{\alpha_{t_{i+1}}}{\alpha_{t_i}}\breve{X}_{{i}}^{\mathsf{sc}} + \frac{\alpha_{t_{i+1}}\sigma_{t_k}^2}{\alpha_{t_{i}}}\left(1-{\rm e}^{-2\delta_{i+1}}\right)\nabla_{\breve{X}_{i}^{\mathsf{sc}}} \log p_{\lambda_i}(\breve{X}_{i-N}^{\mathsf{c}},\breve{X}_{i}^{\mathsf{sc}}) + \sigma_{t_{i+1}}\sqrt{1 - e^{-2\delta_{i+1}}} Z_{i}^{\mathsf{sc}},\label{eq:def-breveX-sc}\\
\breve{X}_{i+1}^{\mathsf{c}} &= \breve{X}_{i}^{\mathsf{c}} +  \gamma_i \nabla_{\breve{X}_i^{\mathsf{c}}} \log p_{\lambda_{i+1}}(\breve{X}_{i}^{\mathsf{c}},\breve{X}_{i+1}^{\mathsf{sc}}) + \sqrt{2\gamma_i} Z_i^{\mathsf{c}}.\label{eq:def-breveX-c}
\end{align}
\end{subequations}
We choose $N$ satisfying $\sum_{i=k-N}^k \delta_{i+1} = \Theta(\delta^{\frac12})$. 
For any $y,z$, we have
\begin{align}\label{eq:proof-breveX-tildeX}
\mathsf{TV}(p_{\breve{X}_k\mymid Y,Z},p_{\widetilde{X}_k\mymid Y,Z}) \to 0.
\end{align}
The proof is postponed to the end of this subsection.
Armed with this auxiliary sequence, we prove \eqref{eq:proof-lem-distri-sc-5} via establishing
\begin{align}\label{eq:proof-lem-distri-sc-6}
\mathbb{E}_{\widetilde{X}_k^{\mathsf{s}}\mymid Y,Z}\big[\mathsf{TV}(p_{\breve{X}_k^{\mathsf{c}}|\breve{X}_k^{\mathsf{s}},Y,Z},\Vert p_{{X}_{t_k}^{\mathsf{c}}|{X}_{t_k}^{\mathsf{s}}} )\big] & \to 0\qquad \mathsf{as}\quad \delta\eta\log\frac{1}{\delta} \to 0\quad \mathrm{and}\quad \frac{\delta\eta^2}{\log\frac{1}{\delta}}\to\infty.
\end{align}
which implies that 
\begin{align*}
    &\mathbb{E}_{x_k^{\mathsf{sc}}\sim \widetilde{X}_k^{\mathsf{sc}}\mymid Y,Z}[\mathsf{TV}(p_{\widetilde{X}_k^{\mathsf{c}}\mymid \widetilde{X}_k^{\mathsf{sc}},Y,Z}(\cdot\mymid x_k^{\mathsf{sc}}, y, z),p_{{X}_{t_k}^{\mathsf{c}}\mymid {X}_{t_k}^{\mathsf{sc}}}(\cdot\mymid x_k^{\mathsf{sc}}))]\notag\\
&\quad    \le \mathbb{E}_{x_k^{\mathsf{sc}}\sim \widetilde{X}_k^{\mathsf{sc}}\mymid Y,Z}[\mathsf{TV}(p_{\breve{X}_k^{\mathsf{c}}\mymid \breve{X}_k^{\mathsf{sc}},Y,Z}(\cdot\mymid x_k^{\mathsf{sc}}, y, z),p_{{X}_{t_k}^{\mathsf{c}}\mymid {X}_{t_k}^{\mathsf{sc}}}(\cdot\mymid x_k^{\mathsf{sc}}))]\notag\\
&\qquad    + \mathbb{E}_{x_k^{\mathsf{sc}}\sim \widetilde{X}_k^{\mathsf{sc}}\mymid Y,Z}[\mathsf{TV}(p_{\breve{X}_k^{\mathsf{c}}\mymid \breve{X}_k^{\mathsf{sc}},Y,Z}(\cdot\mymid x_k^{\mathsf{sc}}, y, z),p_{\widetilde{X}_k^{\mathsf{c}}\mymid \widetilde{X}_k^{\mathsf{sc}},Y,Z}(\cdot\mymid x_k^{\mathsf{sc}}, y, z))] \notag\\
&\quad    \le \mathbb{E}_{x_k^{\mathsf{sc}}\sim \widetilde{X}_k^{\mathsf{sc}}\mymid Y,Z}[\mathsf{TV}(p_{\breve{X}_k^{\mathsf{c}}\mymid \breve{X}_k^{\mathsf{sc}},Y,Z}(\cdot\mymid x_k^{\mathsf{sc}}, y, z),p_{{X}_{t_k}^{\mathsf{c}}\mymid {X}_{t_k}^{\mathsf{sc}}}(\cdot\mymid x_k^{\mathsf{sc}}))]\notag\\
&\qquad    +\mathsf{TV}(p_{\breve{X}_k\mymid Y,Z}(\cdot\mymid y, z),p_{\widetilde{X}_k\mymid Y,Z}(\cdot\mymid y, z)) + \mathsf{TV}(p_{\breve{X}_k^{\mathsf{sc}}\mymid Y,Z}(\cdot\mymid y, z),p_{\widetilde{X}_k^{\mathsf{sc}}\mymid Y,Z}(\cdot\mymid y, z))\to 0.
\end{align*}

To establish \eqref{eq:proof-lem-distri-sc-6},
we begin with
considering given $\breve{X}_{1:k}^{\mathsf{sc}}=\breve{x}_{1:k}^{\mathsf{sc}}$, which satisfies
\begin{align}\label{eq:def-set-E}
\mathcal{E}_k\coloneqq\big\{\|\breve{x}_{i+1}^{\mathsf{sc}}-\breve{x}_{i}^{\mathsf{sc}}\|_2\lesssim \delta_{i+1}\log\frac{1}{\delta},\forall i\le k-1\big\}.
\end{align} 
For ease of notation, denote the distribution (pdf) of $\breve{X}_{k}^{\mathsf{c}}$ under given $\breve{X}_{1:k}^{\mathsf{s}}=\breve{x}_{1:k}^{\mathsf{sc}}$ as $\psi_k$.
Denote the target conditional distribution
$$
p_k^{\mathsf{c}}(x)\coloneqq p_{X_{t_k}^{\mathsf{c}}|X_{t_k}^{\mathsf{s}}}(x|\breve{x}_k^{\mathsf{s}}).
$$
By deifnition of $p_{\lambda_k}$, we have $\nabla_x \log p_{\lambda_k}(x,\breve{x}_k^{\mathsf{sc}}) = \nabla_x \log p_k^{\mathsf{c}}(x)$.
Moreover, denote $P_{\gamma_k}$ as the transition probability of the following SDE from $t$ to $t+\gamma_k$:
\begin{align}
\mathrm{d} X(t) = \nabla \log p_k^{\mathsf{c}}(X(t)) \mathrm{d} t + \sqrt{2} \mathrm{d} W_t,
\end{align}
where $W_t$ denotes the Brownian motion.
According to the analysis of Langevin, we have
\begin{align*}
\mathsf{KL}(P_{\gamma_k}\psi_{k}\Vert p_{k+1}^{\mathsf{c}}) \le {\rm e}^{-2\kappa\gamma_k}\mathsf{KL}(\psi_{k}\Vert p_{k+1}^{\mathsf{c}}),
\end{align*}
where $\kappa<\infty$ is a constant.
Therefore, we have
\begin{align}\label{eq:proof-13}
\mathsf{KL}(\psi_{k+1}\Vert p_{k+1}^{\mathsf{c}}) &= \mathsf{KL}(P_{\gamma_k}\psi_{k}\Vert p_{k+1}^{\mathsf{c}}) +\mathsf{KL}(\psi_{k+1}\Vert p_{k+1}^{\mathsf{c}}) - \mathsf{KL}(P_{\gamma_k}\psi_{k}\Vert p_{k+1}^{\mathsf{c}})\notag\\
&\le {\rm e}^{-2\kappa\gamma_k}\mathsf{KL}(\psi_{k}\Vert p_{k+1}^{\mathsf{c}}) +\mathsf{KL}(\psi_{k+1}\Vert p_{k+1}^{\mathsf{c}}) - \mathsf{KL}(P_{\gamma_k}\psi_{k}\Vert p_{k+1}^{\mathsf{c}})\notag\\
&= {\rm e}^{-2\kappa\gamma_k}\mathsf{KL}(\psi_{k}\Vert p_{k}^{\mathsf{c}}) + {\rm e}^{-2\kappa\gamma_k}\left(\mathsf{KL}(\psi_{k}\Vert p_{k+1}^{\mathsf{c}})-\mathsf{KL}(\psi_{k}\Vert p_{k}^{\mathsf{c}}) \right)\notag\\
&\quad +\mathsf{KL}(\psi_{k+1}\Vert p_{k+1}^{\mathsf{c}}) - \mathsf{KL}(P_{\gamma_k}\psi_{k}\Vert p_{k+1}^{\mathsf{c}}).
\end{align}

Next, we control the two error terms in the right-hand-side of \eqref{eq:proof-13} separately.
\begin{itemize}
\item
\emph{Control the term $\mathsf{KL}(\psi_{k}\Vert p_{k+1}^{\mathsf{c}})-\mathsf{KL}(\psi_{k}\Vert p_{k}^{\mathsf{c}})$.}
According to the definition of KL divergence, we have
\begin{align}\label{eq:proof-10}
\mathsf{KL}(\psi_{k}\Vert p_{k+1}^{\mathsf{c}}) - \mathsf{KL}(\psi_{k}\Vert p_{k}^{\mathsf{c}}) &= \int \log\frac{p_{k}^{\mathsf{c}}(x)}{p_{k+1}^{\mathsf{c}}(x)}\psi_{k}(x)\mathrm{d}x.
\end{align}
According to the definition of $p_{k+1}^{\mathsf{c}}(x)$, we have
\begin{align*}
p_{k}^{\mathsf{c}}(x)=\frac{p_{X_{t_k}}(x)}{p_{X_{t_k}^{\mathsf{s}}}(\breve{x}_k^{\mathsf{s}})}
\end{align*}
Then we have
\begin{align*}
\frac{\partial \log p_{k}^{\mathsf{c}}(x)}{\partial \alpha_{t_i}} & = 
\frac{\partial \log p_{X_{t_k}}(x)}{\partial \alpha_{t_k}} - \frac{\partial \log p_{X_{t_k}^{\mathsf{s}}}(\breve{x}_k^{\mathsf{s}})}{\partial \alpha_{t_k}}\notag\\
 &= \frac{1}{\sigma_{t_k}^2}\mathbb{E}[(x-\alpha_{t_k}X_0^{\mathsf{c}})^{\top}X_0^{\mathsf{c}}+(\breve{x}_k-\alpha_{t_k}X_0^{\mathsf{s}})^{\top}X_0^{\mathsf{s}} |X_{t_k}^{\mathsf{c}}=x,X_{t_k}^{\mathsf{s}}=\breve{x}_k^{\mathsf{s}}] \notag\\
&\quad - \frac{1}{\sigma_{t_k}^2}\mathbb{E}[(\breve{x}_k^{\mathsf{s}}-\alpha_{t_k}X_0^{\mathsf{s}})^{\top}X_0^{\mathsf{s}}|X_{t_k}^{\mathsf{s}}=\breve{x}_k^{\mathsf{s}}].
\end{align*}
Recalling that $X_0$ is bounded, we further have
\begin{align*}
\left|\frac{\partial \log p_{k}^{\mathsf{c}}(x)}{\partial \alpha_{t_k}}\right|\lesssim (\|x\|_2+\|\breve{x}_k^{\mathsf{s}}\|_2+1).
\end{align*}
By similar calculation, we have
\begin{align*}
\left| \frac{\partial \log p_{k}^{\mathsf{c}}(x)}{\partial \sigma_{t_i}} \right| \lesssim (\|x\|_2^2+\|\breve{x}_k^{\mathsf{s}}\|_2+ 1),\qquad \left\|\nabla_{x^{\mathsf{s}}} \log p_{k}^{\mathsf{c}}(x)\right\|_2\lesssim (\|x\|_2+\|\breve{x}_k^{\mathsf{s}}\|_2+ 1).
\end{align*}
Thus we have
\begin{align*}
\log \frac{p_{k+1}^{\mathsf{c}}(x)}{p_{k}^{\mathsf{c}}} &= O(|\alpha_{t_{k+1}}-\alpha_{t_{k}}|\notag\\
&\qquad + |\sigma_{t_{k+1}}-\sigma_{t_{k}}|)(\|x\|_2^2+\|\breve{x}_k^{\mathsf{s}}\|_2+1) + O(\|\breve{x}_{k+}^{\mathsf{s}}-\breve{x}_k^{\mathsf{s}}\|_2)(\|x\|_2+\|\breve{x}_k^{\mathsf{s}}\|_2+1)\notag\\
& = (\|x\|_2^2+\|\breve{x}_k^{\mathsf{s}}\|_2+1)O\bigg(\delta_{k+1}+\sqrt{\delta_{k+1}\log\frac{1}{\delta}}\bigg),
\end{align*}
where the last line arises from \eqref{eq:proof-diff-alpha}, \eqref{eq:proof-diff-sigma}, and \eqref{eq:def-set-E}.
Based on this observation, we have
\begin{align*}
\int \log\frac{p_{k}^{\mathsf{c}}(x)}{p_{k+1}^{\mathsf{c}}(x)}\psi_{k}^{\mathsf{c}}(x)\mathrm{d}x &= -\int \log\frac{p_{k+1}^{\mathsf{c}}(x)}{p_{k}^{\mathsf{c}}(x)}\psi_{k}^{\mathsf{c}}(x)\mathrm{d}x \notag\\
&=O(\sqrt{\delta_{k+1}\log\delta^{-1}})\big(\mathbb{E}[\|\breve{X}_k^{\mathsf{c}}\|_2^2\mymid \breve{X}_k^{\mathsf{s}}=\breve{x}_{1:k}^{\mathsf{sc}}]+\|\breve{x}_{k}^{\mathsf{sc}}\|_2+1\big)\notag\\
&=O(\sqrt{\delta_{k+1}\log\delta^{-1}}),
\end{align*}
where the last line uses $\mathbb{E}[\|\breve{X}_k^{\mathsf{c}}\|_2^2\mymid \breve{X}_k^{\mathsf{s}}=\breve{x}_{1:k}^{\mathsf{sc}}]<\infty$.
Combining these with \eqref{eq:proof-10} yields
\begin{align}\label{eq:proof-14}
\left|\mathsf{KL}(\psi_{k}\Vert p_{k+1}^{\mathsf{c}}) - \mathsf{KL}(\psi_{k}\Vert p_{k+1}^{\mathsf{c}})\right| 
\lesssim \sqrt{\delta_{k+1}\log\delta^{-1}}.
\end{align}

\item
\emph{Control the term $\mathsf{KL}(\psi_{k+1}\Vert p_{k+1}^{\mathsf{c}}) - \mathsf{KL}(P_{\gamma_k}\psi_{k}\Vert p_{k+1}^{\mathsf{c}})$.} For any smooth function $f$, we have
\begin{align}\label{eq:proof-11}
\mathbb{E}_{\breve{X}_{k+1}^{\mathsf{c}}}[f(\breve{X}_{k+1}^{\mathsf{c}})] 
&= \mathbb{E}_{\breve{X}_{k}^{\mathsf{c}},Z_k^{\mathsf{c}}}\big[f\big(\breve{X}_{k}^{\mathsf{c}}+\gamma_k\nabla\log p_{k+1}^{\mathsf{c}}(\breve{X}_{k}^{\mathsf{c}})+\sqrt{2\gamma_k}Z_k^{\mathsf{c}}\big)\big] \notag\\
&= \mathbb{E}\big[f\big(\breve{X}_{k}^{\mathsf{c}}\big)+\gamma_k\nabla f(\breve{X}_{k}^{\mathsf{c}})^{\top}\nabla\log p_{k+1}^{\mathsf{c}}(\breve{X}_{k}^{\mathsf{c}}) + \gamma_k\Delta f(\breve{X}_{k}^{\mathsf{c}}) + O(\gamma_k^2)\big],
\end{align}
where $\Delta f(x)\coloneqq \mathsf{Tr}(\nabla_x^2 f(x))$.
In contrast, by using the geneator operator of Langevin proccess $\mathcal{L} = \nabla\log p_{k+1}^{\top} \nabla + \Delta$, we have
\begin{align}\label{eq:proof-12}
\mathbb{E}_{X\sim P_{\gamma_k}\psi_k}[f(X)] &= \mathbb{E}_{X\sim\psi_k}[f(X)+\gamma_k (\mathcal{L}f)(X)+O(\gamma_k^2)]\notag\\
& =\mathbb{E}_{X\sim\psi_k}[f(X)+\gamma_k\nabla\log p_{k+1}^{\top}(X) \nabla f(X) + \gamma_k\Delta f(X)+O(\gamma_k^2)] 
\end{align}
Combining \eqref{eq:proof-11} and \eqref{eq:proof-12}, we have
\begin{align}\label{eq:proof-29}
\mathbb{E}_{X\sim \psi_{k+1}}[f(X)] -\mathbb{E}_{X\sim P_{\gamma_k}\psi_{k}}[f(X)] =O(\gamma_k^2).
\end{align}
Armed with this, we have
\begin{align*}
    &\mathsf{KL}(\psi_{k+1}\Vert p_{k+1}^{\mathsf{c}}) - \mathsf{KL}(P_{\gamma_k}\psi_{k}\Vert p_{k+1}^{\mathsf{c}}) \notag\\
    &\quad = \mathbb{E}_{X\sim \psi_{k+1}}\Big[\log\frac{\psi_{k+1}(X)}{p_{k+1}^{\mathsf{c}}(X)}\Big] - \mathbb{E}_{X\sim P_{\gamma_k}\psi_{k}}\Big[\log\frac{P_{\gamma_k}\psi_{k}(X)}{p_{k+1}^{\mathsf{c}}(X)}\Big]\notag\\
    &\quad =\mathbb{E}_{X\sim \psi_{k+1}}\Big[\log\frac{\psi_{k+1}(X)}{p_{k+1}^{\mathsf{c}}(X)}\Big] - \mathbb{E}_{X\sim P_{\gamma_k}\psi_{k}}\Big[\log\frac{\psi_{k+1}(X)}{p_{k+1}^{\mathsf{c}}(X)}\Big] - \mathbb{E}_{X\sim P_{\gamma_k}\psi_{k}}\Big[\log\frac{P_{\gamma_k}\psi_k(X)}{\psi_{k+1}(X)}\Big].
\end{align*}
Notice that as the KL divergence between $P_{\gamma_k}\psi_{k}$ and $\psi_{k+1}$ is non-negative, i.e.,
\begin{align*}
\mathbb{E}_{X\sim P_{\gamma_k}\psi_{k}}\Big[\log\frac{P_{\gamma_k}\psi_k(X)}{\psi_{k+1}(X)}\Big]\ge 0.
\end{align*}
By using \eqref{eq:proof-29}, we have
\begin{align}\label{eq:proof-15}
    \mathsf{KL}(\psi_{k+1}\Vert p_{k+1}^{\mathsf{c}}) - \mathsf{KL}(P_{\gamma_k}\psi_{k}\Vert p_{k+1}^{\mathsf{c}}) &\le \mathbb{E}_{X\sim \psi_{k+1}}\Big[\log\frac{\psi_{k+1}(X)}{p_{k+1}^{\mathsf{c}}(X)}\Big] - \mathbb{E}_{X\sim P_{\gamma_k}\psi_{k}}\Big[\log\frac{\psi_{k+1}(X)}{p_{k+1}^{\mathsf{c}}(X)}\Big]\notag\\
    & = O(\gamma_k^2).
\end{align}
\end{itemize}

Substituting \eqref{eq:proof-14} and \eqref{eq:proof-15} into \eqref{eq:proof-13}, we have
\begin{align}\label{eq:proof-lem-distri-xsc-1}
\mathsf{KL}(\psi_{k+1}\Vert p_{k+1}^{\mathsf{c}})&\le {\rm e}^{-2\kappa\gamma_k}\mathsf{KL}(\psi_{k}\Vert p_k^{\mathsf{c}})   + O\big(\sqrt{\delta_{k+1}\log\delta^{-1}}\big) + O(\gamma_k^2)\notag\\
&\le {\rm e}^{-2\kappa\sum_{i=k-N}^k{\gamma_i}}\mathsf{KL}(\psi_{k-N}\Vert p_{k-N}^{\mathsf{c}}) + O\Big(\frac{\sqrt{\delta}\log\delta^{-1}}{\delta\eta}\Big) + O(\max_k \gamma_k)\notag\\
&\lesssim  {\rm e}^{-2\Theta( \sqrt{\delta})\eta} + \frac{\log\delta^{-1}}{\sqrt{\delta}\eta} + \max_k \gamma_k\to0,\qquad\mathrm{as}\quad \frac{\delta\eta^2}{\log\delta^{-1}}\to\infty,\quad \delta\eta\to 0,
\end{align}
where we have used the assumption that $\sum_{i=k-N}^k\gamma_i\asymp \sqrt{\delta}\eta \to \infty$.
%
Futhermore, notice that
\begin{align}
\mathbb{P}(\mathcal{E}_k^{\mathsf{c}}) \le\sum_{i=1}^{k+1}\mathbb{P}\left(O(\delta_{i+1})+\sqrt{\delta_{i+1}}\|Z_{t_i}^{\mathsf{sc}}\|_2\le O(\sqrt{\delta_{i+1}}\log\delta^{-1})\right) \le \delta^{10}\to 0,\qquad \mathrm{as} \quad \delta \to 0.
\end{align}
We have
\begin{align*}
&\mathsf{TV}(p_{\breve{X}_{k}^{\mathsf{c}}|\breve{X}_{k}^{\mathsf{s}}}(\cdot\mymid \breve{x}_{k}^{\mathsf{s}}), p_{X_{t_k}^{\mathsf{c}}|X_{t_k}^{\mathsf{s}}}(\cdot\mymid \breve{x}_k^{\mathsf{s}}) ) \notag\\
&\quad \le \mathbb{E}_{\breve{x}_{1:k-1}^{\mathsf{s}}\sim \breve{X}_{1:k-1}^{\mathsf{s}}\mymid \breve{X}_{k}^{\mathsf{s}}}\big[\mathsf{TV}(p_{\breve{X}_{k}^{\mathsf{c}}|\breve{X}_{1:k}^{\mathsf{s}}}(\cdot\mymid \breve{x}_{1:k}^{\mathsf{s}}), p_{X_{t_k}^{\mathsf{c}}|X_{t_k}^{\mathsf{s}}}(\cdot\mymid x_k^{\mathsf{s}}) ) \big]\notag\\
&\quad \le \mathbb{E}_{\breve{x}_{1:k-1}^{\mathsf{s}}\sim \breve{X}_{1:k-1}^{\mathsf{s}}\mymid \breve{X}_{k}^{\mathsf{s}},\breve{x}_{1:k-1}^{\mathsf{s}}\in\mathcal{E}_k}\big[\mathsf{TV}(p_{\breve{X}_{k}^{\mathsf{c}}|\breve{X}_{1:k}^{\mathsf{s}}}(\cdot\mymid \breve{x}_{1:k}^{\mathsf{s}}), p_{X_{t_k}^{\mathsf{c}}|X_{t_k}^{\mathsf{s}}}(\cdot\mymid x_k^{\mathsf{s}}) ) \big]
+ \mathbb{P}(\mathcal{E}_k^{\mathsf{c}}) \to 0.
\end{align*}
We complete the proof.

\paragraph{Proof of \eqref{eq:proof-breveX-tildeX}.}

By Pinsker's inequality, we have
\begin{align*}
&\quad \mathsf{TV}(p_{\breve{X}_k\mymid Y,Z},p_{\widetilde{X}_k\mymid Y,Z})^2 \le \mathsf{KL}(p_{\widetilde{X}_k\mymid Y,Z} \Vert p_{\breve{X}_k\mymid Y,Z}) \notag\\
&\le \sum_{i=k-N}^{k-1} \mathbb{E}_{x_i^{\mathsf{sc}}\sim\widetilde{X}_i^{\mathsf{sc}},x_i^{\mathsf{c}}\sim\widetilde{X}_i^{\mathsf{c}},x_{i-N}^{\mathsf{c}}\sim\widetilde{X}_{i-N}^{\mathsf{c}}}\left[\mathsf{KL}(p_{\widetilde{X}_{i+1}^{\mathsf{sc}}\mymid \widetilde{X}_{i}^{\mathsf{sc}},\widetilde{X}_{i}^{\mathsf{c}}}(\cdot\mymid x_i^{\mathsf{sc}},x_i^{\mathsf{c}}) \Vert p_{\breve{X}_{i+1}^{\mathsf{sc}}\mymid \breve{X}_{i}^{\mathsf{sc}},\breve{X}_{i-N}^{\mathsf{c}}}(\cdot\mymid x_i^{\mathsf{sc}},x_{i-N}^{\mathsf{c}})) \right]\notag\\
&\qquad  + \sum_{i=k-N}^{k-1} \mathbb{E}_{x_{i}^{\mathsf{c}}\sim\widetilde{X}_{i}^{\mathsf{c}},x_{i+1}^{\mathsf{sc}}\sim\widetilde{X}_{i+1}^{\mathsf{sc}}}\left[\mathsf{KL}(p_{\widetilde{X}_{i+1}^{\mathsf{c}}\mymid \widetilde{X}_{i}^{\mathsf{c}},\widetilde{X}_{i+1}^{\mathsf{sc}}}(\cdot\mymid x_i^{\mathsf{c}},x_{i+1}^{\mathsf{sc}}) \Vert p_{\breve{X}_{i+1}^{\mathsf{c}}\mymid \breve{X}_{i}^{\mathsf{c}},\breve{X}_{i+1}^{\mathsf{sc}}}(\cdot\mymid x_i^{\mathsf{c}},x_{i+1}^{\mathsf{sc}})) \right]\notag\\
&= \sum_{i=k-N}^{k-1} \mathbb{E}_{x_i^{\mathsf{sc}}\sim\widetilde{X}_i^{\mathsf{sc}},x_i^{\mathsf{c}}\sim\widetilde{X}_i^{\mathsf{c}},x_{i-N}^{\mathsf{c}}\sim\widetilde{X}_{i-N}^{\mathsf{c}}}\left[\mathsf{KL}(p_{\widetilde{X}_{i+1}^{\mathsf{sc}}\mymid \widetilde{X}_{i}^{\mathsf{sc}},\widetilde{X}_{i}^{\mathsf{c}}}(\cdot\mymid x_i^{\mathsf{sc}},x_i^{\mathsf{c}}) \Vert p_{\breve{X}_{i+1}^{\mathsf{sc}}\mymid \breve{X}_{i}^{\mathsf{sc}},\breve{X}_{i-N}^{\mathsf{c}}}(\cdot\mymid x_i^{\mathsf{sc}},x_{i-N}^{\mathsf{c}})) \right],
\end{align*}
where the last equation holds since $p_{\widetilde{X}_{i+1}^{\mathsf{c}}\mymid \widetilde{X}_{i}^{\mathsf{c}},\widetilde{X}_{i+1}^{\mathsf{sc}}}(\cdot\mymid x_i^{\mathsf{c}},x_{i+1}^{\mathsf{sc}}) =p_{\breve{X}_{i+1}^{\mathsf{c}}\mymid \breve{X}_{i}^{\mathsf{c}},\breve{X}_{i+1}^{\mathsf{sc}}}(\cdot\mymid x_i^{\mathsf{c}},x_{i+1}^{\mathsf{sc}})$ by \eqref{eq:def-breveX-c}.
According to \eqref{eq:def-breveX-sc}, both $p_{\widetilde{X}_{i+1}^{\mathsf{s}}\mymid \widetilde{X}_{i}^{\mathsf{s}},\widetilde{X}_{i}^{\mathsf{c}}}(\cdot\mymid x_i^{\mathsf{s}},x_i^{\mathsf{c}})$ and $p_{\breve{X}_{i+1}^{\mathsf{sc}}\mymid \breve{X}_{i}^{\mathsf{sc}},\breve{X}_{i-N}^{\mathsf{c}}}(\cdot\mymid x_i^{\mathsf{s}},x_{i-N}^{\mathsf{c}})$ are Gaussian distributions with the same variance $\sigma_{t_{i+1}}^2(1-{\rm e}^{-2\delta_{i+1}})I_d$ and different mean vectors.
We have
\begin{align*}
    &\mathsf{KL}\left( p_{\widetilde{X}_{i+1}^{\mathsf{s}}\mymid \widetilde{X}_{i}^{\mathsf{s}},\widetilde{X}_{i}^{\mathsf{c}}}(\cdot\mymid x_i^{\mathsf{s}},x_i^{\mathsf{c}})\Vert p_{\breve{X}_{i+1}^{\mathsf{sc}}\mymid \breve{X}_{i}^{\mathsf{sc}},\breve{X}_{i-N}^{\mathsf{c}}}(\cdot\mymid x_i^{\mathsf{s}},x_{i-N}^{\mathsf{c}}) \right)\notag\\
    &\quad = \frac{\alpha_{t_{i+1}}^2\sigma_{t_i}^4(1-{\rm e}^{-2\delta_{i+1}})^2}{2\alpha_{t_i}^2\sigma_{t_{i+1}}^2(1-{\rm e}^{-2\delta_{i+1}})}\left\|\nabla_{x_{i}^{\mathsf{sc}}} \log p_{\lambda_i}(x_{i-N}^{\mathsf{c}},x_{i}^{\mathsf{sc}})-\nabla_{x_{i}^{\mathsf{sc}}} \log p_{\lambda_i}(x_{i}^{\mathsf{c}},x_{i}^{\mathsf{sc}})\right\|_2^2\notag\\
    &\quad \lesssim \delta_{i+1}\left\|\nabla_{x_{i}^{\mathsf{sc}}} \log p_{\lambda_i}(x_{i-N}^{\mathsf{c}},x_{i}^{\mathsf{sc}})-\nabla_{x_{i}^{\mathsf{sc}}} \log p_{\lambda_i}(x_{i}^{\mathsf{c}},x_{i}^{\mathsf{sc}})\right\|_2^2.
\end{align*}
Therefore, we have
\begin{align*}
\mathsf{TV}(p_{\breve{X}_k\mymid Y,Z},p_{\widetilde{X}_k\mymid Y,Z})^2 &\lesssim \sum_{i=k-N}^{k-1} \delta_{i+1}\mathbb{E}_{x_i^{\mathsf{sc}}\sim\widetilde{X}_i^{\mathsf{sc}},x_i^{\mathsf{c}}\sim\widetilde{X}_i^{\mathsf{c}},x_{i-N}^{\mathsf{c}}\sim\widetilde{X}_{i-N}^{\mathsf{c}}}\left[\left\|\nabla_{x_{i}^{\mathsf{sc}}} \log p_{\lambda_i}(x_{i-N}^{\mathsf{c}},x_{i}^{\mathsf{sc}})-\nabla_{x_{i}^{\mathsf{sc}}} \log p_{\lambda_i}(x_{i}^{\mathsf{c}},x_{i}^{\mathsf{sc}})\right\|_2^2 \right]\notag\\
&\lesssim \sum_{i=k-N}^{k-1}\delta_{i+1} \asymp \delta^{1/2} \to 0,\qquad \mathrm{as}\quad \delta\to 0.
\end{align*}
Thus we complete the proof.

\subsection{Proof of Lemma \ref{lem:distri-xsc}}
\label{subsec:proof-lem-distri-xsc}
It suffices to prove that
\begin{align}
\mathsf{KL}(p_{\widehat{X}_{t_i}^{\mathsf{sc}}\mymid \widehat{X}_{t_{i-1}}}\Vert p_{X_{t_i}^{\mathsf{sc}}\mymid {X}_{t_{i-1}}}) = o(\delta_{i}).
\end{align}
According to the update rule, $p_{\widehat{X}_{t_i}^{\mathsf{sc}}|\widehat{X}_{t_{i-1}}}$ is a Gaussian distribution.
We consider the following density function:
\begin{align}\label{eq:proof-lem-dsitri-xsc-4}
\widehat{p}_i(x_i|x_{i-1}) \coloneqq (2\pi\varrho^2)^{-d/2}\exp\left(-\frac{1}{2\varrho^2}\|x_i-\mu_0-\Delta\|_2^2\right),
\end{align} 
where variance $\varrho^2=\sigma_{t_{i}}^2(1-{\rm e}^{-2\delta_{i}})$ and the mean vecotr $\mu_0+\Delta$ is given by
\begin{align*}
    \mu_0 = \frac{\alpha_{t_{i}}}{\alpha_{t_{i-1}}}\widehat{X}_{t_{i-1}},\qquad \Delta = \alpha_{t_{i}}\sigma_{t_{i-1}}\left({\rm e}^{-\lambda_{t_{i-1}}}-{\rm e}^{-2\lambda_{t_{i}}+\lambda_{t_{i-1}}}\right)\nabla_{{x}_{i-1}} \log p_{X_{t_{i-1}}}({x}_{i-1}).
\end{align*} 
By noticing that $V_{\Stci{t_i}}^{\top}\nabla_{{x}_{i-1}} \log p_{X_{t_{i-1}}}({x}_{i-1}) = \nabla_{{x}_{i-1}^{\mathsf{sc}}} \log p_{\lambda_{i-1}}({x}_{i-1}^{\mathsf{sc}},{x}_{i-1}^{\mathsf{c}})$, we conclude that $\widehat{p}_i(x_i|x_{i-1})$ has the same marginal distribution on directions $\Stci{t_i}$ as $p_{\widehat{X}_{t_i}^{\mathsf{sc}}|\widehat{X}_{t_{i-1}}}$.
It suffices to prove that
\begin{align}\label{eq:proof-lem-distri-xsc-9}
\mathsf{KL}(p_{\widehat{X}_{t_i}^{\mathsf{sc}}\mymid \widehat{X}_{t_{i-1}}}\Vert p_{X_{t_i}^{\mathsf{sc}}\mymid {X}_{t_{i-1}}}) \le \mathsf{KL}( \widehat{p}_{i}\Vert p_{X_{t_i}\mymid {X}_{t_{i-1}}})  = o(\delta_{i}).
\end{align}

Towards this, we decompose $\widehat{p}_i$ and  $p_{X_{t_i}\mymid {X}_{t_{i-1}}}$ seperately.
\begin{itemize}
\item \emph{Decomposition of $\widehat{p}_i(x_i|x_{i-1})$.} Expanding the exponent expression in \eqref{eq:proof-lem-dsitri-xsc-4} yields
\begin{align}\label{eq:proof-lem-dsitri-xsc-5}
\widehat{p}_i(x_i|x_{i-1}) =(2\pi\varrho^2)^{-d/2}\exp\left(-\frac{1}{2\varrho^2}\|x_i-\mu_0\|_2^2 + \frac{1}{\varrho^2}\Delta^{\top}(x_i-\mu_0) + \frac{1}{2\varrho^2}\|\Delta\|_2^2\right).
\end{align}
Note that
\begin{align*}
\frac{1}{\varrho^2}\Delta^{\top}(x_i-\mu_0) &= \frac{1}{\varrho^2}\alpha_{t_{i}}\sigma_{t_{i-1}}\left({\rm e}^{-\lambda_{t_{i-1}}}-{\rm e}^{-2\lambda_{t_{i}}+\lambda_{t_{i-1}}}\right)\nabla_{{x}_{i-1}} \log p_{X_{t_{i-1}}}({x}_{i-1})^{\top}(x_i-\mu_0)\notag\\
&= \frac{1}{\varrho^2}\alpha_{t_{i}}\sigma_{t_{i-1}}{\rm e}^{-\lambda_{t_{i-1}}}\left(1-{\rm e}^{-2\delta_{i}}\right)\nabla_{{x}_{i-1}} \log p_{X_{t_{i-1}}}({x}_{i-1})^{\top}(x_i-\mu_0)\notag\\
&= \frac{1}{\varrho^2}\frac{\alpha_{t_{i}}^2\sigma_{t_{i-1}}^2}{\alpha_{t_{i-1}}^2}\left(1-{\rm e}^{-2\delta_{i}}\right)\nabla_{{x}_{i-1}} \log p_{X_{t_{i-1}}}({x}_{i-1})^{\top}\big(\frac{\alpha_{t_{i-1}}}{\alpha_{t_{i}}}x_i-\widehat{X}_{t_{i-1}}\big)\notag\\
&={\rm e}^{2\delta_i}\nabla_{{x}_{i-1}} \log p_{X_{t_{i-1}}}({x}_{i-1})^{\top}\big(\frac{\alpha_{t_{i-1}}}{\alpha_{t_{i}}}x_i-\widehat{X}_{t_{i-1}}\big).
\end{align*}
By denoting $\widehat{x}_{i}=\alpha_{t_{i-1}}x_i/\alpha_{t_i}$, and noticing that $\|\Delta\|_2 = O(\delta_i)$, we have
\begin{align}\label{eq:proof-lem-dsitri-xsc-6}
\widehat{p}_i(x_i|x_{i-1}) =(2\pi\varrho^2)^{-d/2}\exp\left(-\frac{1}{2\varrho^2}\|x_i-\mu_0\|_2^2 + {\rm e}^{2\delta_i}\nabla_{{x}_{i-1}} \log p_{X_{t_{i-1}}}({x}_{i-1})^{\top}\big(\widehat{x}_{i}-\widehat{X}_{t_{i-1}} \big)+ O(\delta_i)\right)
\end{align}

\item \emph{Decomposition of $p_{X_{t_i}\mymid {X}_{t_{i-1}}}(x_i|x_{i-1})$.}
\begin{align}\label{eq:proof-lem-distri-sc-3}
p_{X_{t_i}\mymid {X}_{t_{i-1}}}(x_i|x_{i-1}) = p_{X_{t_{i-1}}\mymid {X}_{t_{i}}}(x_{i-1}|x_{i})\frac{p_{X_{t_{i}}}(x_{i})}{p_{X_{t_{i-1}}}(x_{i-1})}.
\end{align}
Furthermore, according to the forward process 
\begin{align*}
X_{t_{i-1}} &= \frac{\alpha_{t_{i-1}}}{\alpha_{t_{i}}} X_{t_{i}} + \left(\sigma_{t_{i-1}}^2-\frac{\alpha_{t_{i-1}}}{\alpha_{t_{i}}}\sigma_{t_{i}}^2\right)\mathcal{N}(0,I_d) = \frac{\alpha_{t_{i-1}}}{\alpha_{t_{i}}} X_{t_{i}} + \sigma_{t_{i-1}}\left(1-{\rm e}^{-2\delta_i}\right)^{1/2}\mathcal{N}(0,I_d)\notag\\
&= \frac{\alpha_{t_{i-1}}}{\alpha_{t_{i}}} X_{t_{i}} + \frac{\sigma_{t_{i-1}}}{\sigma_{t_i}}\varrho\mathcal{N}(0,I_d),
\end{align*}
we have
\begin{align}\label{eq:proof-lem-distri-sc-2}
p_{X_{t_{i-1}}\mymid {X}_{t_{i}}}(x_{i-1}|x_{i}) 
&= (2\pi\varrho^2)^{-d/2}\left(\frac{\sigma_{t_{i}}}{\sigma_{t_{i-1}}}\right)^{d}\exp\left(-\frac{\sigma_{t_{i}}^2}{2\varrho^2\sigma_{t_{i-1}}^2}\big\|\widehat{X}_{t_{i-1}}-\frac{\alpha_{t_{i-1}}}{\alpha_{t_{i}}} \widehat{X}_{t_{i}}\big\|_2^2\right)\notag\\
&=(2\pi\varrho^2)^{-d/2}\left(\frac{\sigma_{t_{i}}}{\sigma_{t_{i-1}}}\right)^{d}\exp\left(-\frac{\sigma_{t_{i}}^2\alpha_{t_{i-1}}^2}{2\varrho^2\sigma_{t_{i-1}}^2\alpha_{t_{i}}^2}
\big\|\frac{\alpha_{t_{i}}}{\alpha_{t_{i-1}}} \widehat{X}_{t_{i-1}}-\widehat{X}_{t_{i}}\big\|_2^2\right)\notag\\
&=(2\pi\varrho^2)^{-d/2}\left(\frac{\sigma_{t_{i}}}{\sigma_{t_{i-1}}}\right)^{d}\exp\left(-\frac{{\rm e}^{-2\delta_{i}}}{2\varrho^2}
\big\|\widehat{X}_{t_{i}}-\mu_0\big\|_2^2\right).
\end{align}
For the ratio $\frac{p_{X_{t_{i}}}(x_{i})}{p_{X_{t_{i-1}}}(x_{i-1})}$, we have
\begin{align}\label{eq:proof-lem-distri-sc-1}
\frac{p_{X_{t_{i}}}(x_{i})}{p_{X_{t_{i-1}}}(x_{i-1})} &= \frac{p_{X_{t_{i}}}(x_{i})}{p_{X_{t_{i-1}}}(\widehat{x}_{i})} \frac{p_{X_{t_{i-1}}}(\widehat{x}_{i})}{p_{X_{t_{i-1}}}(x_{i-1})} = \frac{p_{X_{t_{i}}}(x_{i})}{p_{X_{t_{i-1}}}(\widehat{x}_{i})}\exp\left(\log p_{X_{t_{i-1}}}(\widehat{x}_{i}) - \log p_{X_{t_{i-1}}}(x_{i-1}) \right) \notag\\
&=\frac{p_{X_{t_{i}}}(x_{i})}{p_{X_{t_{i-1}}}(\widehat{x}_{i})}\exp\left(\nabla \log p_{X_{t_{i-1}}}(x_{i-1})^{\top}(\widehat{x}_{i}-x_{i-1}) \right). 
\end{align}
Furthermore, we have
\begin{align*}
&\quad\frac{p_{X_{t_{i}}}(x_{i})}{p_{X_{t_{i-1}}}(\widehat{x}_{i})} \notag\\
&= \left(\frac{\sigma_{t_{i-1}}}{\sigma_{t_i}}\right)^{d}\frac{\int p_{X_0}(x_0)\exp\left(-\frac{1}{2\sigma_{t_i}^2}\|x_i-\alpha_{t_i}x_0\|_2^2\right)\mathrm{d} x_0}{\int p_{X_0}(x_0)\exp\left(-\frac{{\rm e}^{-2\delta_i}}{2\sigma_{t_i}^2}\|x_i-\alpha_{t_i}x_0\|_2^2\right)\mathrm{d} x_0}\notag\\
&=\left(\frac{\sigma_{t_{i-1}}}{\sigma_{t_i}}\right)^{d}\frac{\int p_{X_0}(x_0)\exp\left(-\frac{{\rm e}^{-2\delta_i}}{2\sigma_{t_i}^2}\|x_i-\alpha_{t_i}x_0\|_2^2\right)\left(1-\frac{(1-{\rm e}^{-2\delta_i})}{2\sigma_{t_i}^2}\|x_i-\alpha_{t_i}x_0\|_2^2 + O(\delta_i^2)\right)\mathrm{d} x_0}{\int p_{X_0}(x_0)\exp\left(-\frac{{\rm e}^{-2\delta_i}}{2\sigma_{t_i}^2}\|x_i-\alpha_{t_i}x_0\|_2^2\right)\mathrm{d} x_0},
\end{align*}
where the first equation comes from
\begin{align*}
p_{X_{t_{i}}}(x_{i}) &= (2\pi\sigma_{t_i}^2)^{-d/2}\int p_{X_0}(x_0)\exp\left(-\frac{1}{2\sigma_{t_i}^2}\|x_i-\alpha_{t_i}x_0\|_2^2\right)\mathrm{d} x_0\notag\\
p_{X_{t_{i-1}}}(\widehat{x}_{i})&=(2\pi\sigma_{t_{i-1}}^2)^{-d/2}\int p_{X_0}(x_0)\exp\left(-\frac{1}{2\sigma_{t_{i-1}}^2}\big\|\frac{\alpha_{t_{i-1}}}{\alpha_{t_{i}}}x_{i}-\alpha_{t_{i-1}}x_0\big\|_2^2\right)\mathrm{d} x_0\notag\\
&=(2\pi\sigma_{t_{i-1}}^2)^{-d/2}\int p_{X_0}(x_0)\exp\left(-\frac{\alpha_{t_{i-1}}^2}{2\sigma_{t_{i-1}}^2\alpha_{t_{i}}^2}\big\|x_{i-1}-\alpha_{t_{i}}x_0\big\|_2^2\right)\mathrm{d} x_0\notag\\
&=(2\pi\sigma_{t_{i-1}}^2)^{-d/2}\int p_{X_0}(x_0)\exp\left(-\frac{{\rm e}^{-2\delta_i}}{2\sigma_{t_{i}}^2}\big\|x_{i}-\alpha_{t_{i}}x_0\big\|_2^2\right)\mathrm{d} x_0.
\end{align*}
Thus we have
\begin{align*}
&\quad\left|\left(\frac{\sigma_{t_{i-1}}}{\sigma_{t_i}}\right)^{-d}\frac{p_{X_{t_{i}}}(x_{i})}{p_{X_{t_{i-1}}}(\widehat{x}_{i})} - 1\right|\notag\\
 &\le \frac{\int p_{X_0}(x_0)\exp\left(-\frac{{\rm e}^{-2\delta_i}}{2\sigma_{t_i}^2}\|x_i-\alpha_{t_i}x_0\|_2^2\right)\left(\frac{2\delta_i}{2\sigma_{t_i}^2}\|x_i-\alpha_{t_i}x_0\|_2^2 + O(\delta_i^2)\right)\mathrm{d} x_0}{\int p_{X_0}(x_0)\exp\left(-\frac{{\rm e}^{-2\delta_i}}{2\sigma_{t_i}^2}\|x_i-\alpha_{t_i}x_0\|_2^2\right)\mathrm{d} x_0} \notag\\
&= O(\delta_i)(\|x_i\|_2^2+1).
\end{align*}
Instituting into \eqref{eq:proof-lem-distri-sc-1}, we have
\begin{align}\label{eq:proof-lem-distri-sc-4}
    \frac{p_{X_{t_{i}}}(x_{i})}{p_{X_{t_{i-1}}}(x_{i-1})} &= \left(\frac{\sigma_{t_{i-1}}}{\sigma_{t_i}}\right)^{-d}\exp\left(\nabla \log p_{X_{t_{i-1}}}(x_{i-1})^{\top}(\widehat{x}_{i}-x_{i-1}) \right)\left(1+O(\delta_i)(1+\|x_i\|_2^2)\right)
\end{align}
Combining this with \eqref{eq:proof-lem-distri-sc-2} and \eqref{eq:proof-lem-distri-sc-3} yields
\begin{align}\label{eq:proof-lem-dsitri-xsc-7}
    p_{X_{t_i}\mymid {X}_{t_{i-1}}}(x_i|x_{i-1})
    &=(2\pi\varrho^2)^{-d/2}\exp\left(-\frac{{\rm e}^{-2\delta_{i}}}{2\varrho^2}
\big\|\widehat{X}_{t_{i}}-\mu_0\big\|_2^2\right)\notag\\
&\quad \cdot\exp\left(\nabla \log p_{X_{t_{i-1}}}(x_{i-1})^{\top}(\widehat{x}_{i}-x_{i-1}) \right)\left(1+O(\delta_i)(1+\|x_i\|_2^2)\right)\notag\\
&=(2\pi\varrho^2)^{-d/2}\exp\left(-\frac{{\rm e}^{-2\delta_{i}}}{2\varrho^2}
\big\|\widehat{X}_{t_{i}}-\mu_0\big\|_2^2+\nabla \log p_{X_{t_{i-1}}}(x_{i-1})^{\top}(\widehat{x}_{i}-x_{i-1}) \right)\notag\\
&\quad \cdot\left(1+O(\delta_i)(1+\|x_i\|_2^2)\right)
\end{align}
\end{itemize} 

Combining \eqref{eq:proof-lem-dsitri-xsc-6} and \eqref{eq:proof-lem-dsitri-xsc-7} leads to
\begin{align*}
    \frac{p_{X_{t_i}\mymid {X}_{t_{i-1}}}(x_i|x_{i-1})}{\widehat{p}_i(x_i|x_{i-1})} 
    &= \exp\left(\frac{(1-{\rm e}^{-2\delta_{i}})}{2\varrho^2}
\big\|x_{i}-\mu_0\big\|_2^2-({\rm e}^{2\delta_{i}}-1)\nabla \log p_{X_{t_{i-1}}}(x_{i-1})^{\top}(\widehat{x}_{i}-x_{i-1})+O(\delta_i)\right)\notag\\
&\qquad \cdot(1+O(\delta_i)(1+\|x_i\|_2^2))
\end{align*}
and 
\begin{align*}
    \frac{p_{X_{t_i}\mymid {X}_{t_{i-1}}}(x_i|x_{i-1})}{\widehat{p}_i(x_i|x_{i-1})}  - 1 = O(\delta_i)\frac{\|x_i-\mu_0\|_2^2}{\varrho^2} + O(\delta_i)\|\widehat{x}_i-x_{i-1}\|_2 + O(\delta_i)\|x_i\|^2 + O(\delta_i).
\end{align*}
By using the definition of KL divergence, we have
\begin{align*}
\mathsf{KL}(\widehat{p}_{i}\Vert p_{X_{t_i}\mymid {X}_{t_{i-1}}}) &= \int \log \frac{\widehat{p}_{i}(x_i|x_{i-1})}{p_{X_{t_i}\mymid {X}_{t_{i-1}}}(x_i|x_{i-1})}  \widehat{p}_{i}(x_i|x_{i-1}) \mathrm{d} x_i\notag\\
&=-\int \log \frac{p_{X_{t_i}\mymid {X}_{t_{i-1}}}(x_i|x_{i-1})}{\widehat{p}_{i}(x_i|x_{i-1})}  \widehat{p}_{i}(x_i|x_{i-1}) \mathrm{d} x_i\notag\\
&=-\int \left(\frac{p_{X_{t_i}\mymid {X}_{t_{i-1}}}(x_i|x_{i-1})}{\widehat{p}_{i}(x_i|x_{i-1})}  -1\right)\widehat{p}_{i}(x_i|x_{i-1}) \mathrm{d} x_i \notag\\
&\quad + \frac{1}{2}\int \left(\frac{p_{X_{t_i}\mymid {X}_{t_{i-1}}}(x_i|x_{i-1})}{\widehat{p}_{i}(x_i|x_{i-1})}  -1\right)^2\widehat{p}_{i}(x_i|x_{i-1}) \mathrm{d} x_i\notag\\
&=\frac{1}{2}\int \left(\frac{p_{X_{t_i}\mymid {X}_{t_{i-1}}}(x_i|x_{i-1})}{\widehat{p}_{i}(x_i|x_{i-1})}  -1\right)^2\widehat{p}_{i}(x_i|x_{i-1}) \mathrm{d} x_i\notag\\
&=O(\delta_i^2)\mathbb{E}_{x_i\sim \widehat{p}_i(\cdot\mymid x_{i-1})}\Bigg[\frac{\|x_i-\mu_0\|_2^2}{\varrho^2} + \|\widehat{x}_i-x_{i-1}\|_2 + \|x_i\|^2+1\Bigg] = O(\delta_i^2),
\end{align*}
where the last equation holds since by \eqref{eq:proof-lem-dsitri-xsc-4},
$$
\frac{\|x_i-\mu_0\|_2^2}{\varrho^2} = \frac{\|\Delta\|_2^2 + \varrho^2}{\varrho^2} = O(1).
$$
Combining with \eqref{eq:proof-lem-distri-xsc-9}, we complete the proof.

\subsection{Proof of Lemma \ref{lem:distri-sc-conds}}
\label{subsec:proof-lem-distri-sc-conds}
We start from the observation that $p_{\widetilde{X}_i^{\mathsf{sc}}\mymid \widetilde{X}_{i-1}}({x}_{i}^{\mathsf{sc}}\mymid {x}_{i-1})$ is a Gaussian distribution and $x_{i-1}^{\mathsf{c}}$ influences the mean vector:
\begin{align*}
p_{\widetilde{X}_i^{\mathsf{sc}}\mymid \widetilde{X}_{i-1}}({x}_{i}^{\mathsf{sc}}\mymid {x}_{i-1}) = \phi({x}_{i}^{\mathsf{sc}}; \mu_0 + \Delta(x_{i-1}^{\mathsf{c}}),\sigma_{t_{i}}^2(1-{\rm e}^{-2\delta_{i}})),
\end{align*}
where $\phi(\cdot;\mu,\sigma^2)$ denotes the density function of Gaussian distribution with mean vector $\mu$ and variance $\sigma^2$, and
\begin{align}
\mu_0&\coloneqq \frac{\alpha_{t_i}}{\alpha_{t_{i-1}}} x_{i-1}^{\mathsf{sc}},\notag\\
\Delta(x_{i-1}^{\mathsf{c}}) &\coloneqq \alpha_{t_{i}}\sigma_{t_{i-1}}\left({\rm e}^{-\lambda_{t_{i-1}}}-{\rm e}^{-2\lambda_{t_{i}}+\lambda_{t_{i-1}}}\right)\nabla_{{x}_{i-1}^{\mathsf{sc}}} \log p_{\lambda_{i-1}}({x}_{i-1}^{\mathsf{c}},{x}_{i-1}^{\mathsf{sc}}).
\end{align}
It can be immediately check that
\begin{align}
\|\Delta(x_{i-1}^{\mathsf{c}})\|_2 \lesssim O(\delta).
\end{align}
By Taylor expansion and denoting the variance as $\varrho^2=\sigma_{t_{i}}^2(1-{\rm e}^{-2\delta_{i}})$, we have
\begin{align*}
   p_{\widetilde{X}_i^{\mathsf{sc}}\mymid \widetilde{X}_{i-1}}({x}_{i}^{\mathsf{sc}}\mymid {x}_{i-1}) &= \phi({x}_{i}^{\mathsf{sc}}; \mu_0,\varrho^2)\left(1 + \frac{({x}_{i}^{\mathsf{sc}}-\mu_0)^{\top}\Delta(x_{i-1}^{\mathsf{c}})}{\varrho^2} - \frac{\|\Delta(x_{i-1}^{\mathsf{c}})\|_2^2}{\varrho^2} + \frac{(({x}_{i}^{\mathsf{sc}}-\mu_0)^{\top}\Delta(x_{i-1}^{\mathsf{c}}))^2}{\varrho^2}\right)\notag\\
   &\quad  + O(\|\Delta(x_{i-1}^{\mathsf{c}})\|_2^3).
\end{align*}
Thus we have
\begin{align*}
&\int p_{\widetilde{X}_i^{\mathsf{sc}}\mymid \widetilde{X}_{i-1}}({x}_{i}^{\mathsf{sc}}\mymid {x}_{i-1})p_{{X}_{t_i}^{\mathsf{sc}}\mymid {X}_{t_{i-1}}}({x}_{i-1}^{\mathsf{c}}\mymid {x}_{i-1}^{\mathsf{s}}) \mathrm{d}{x}_{i-1}^{\mathsf{c}} \notag\\
&\quad= \int \phi({x}_{i}^{\mathsf{sc}}; \mu_0,\varrho^2)\left(1 + \frac{({x}_{i}^{\mathsf{sc}}-\mu_0)^{\top}\Delta(x_{i-1}^{\mathsf{c}})}{\varrho^2} - \frac{\|\Delta(x_{i-1}^{\mathsf{c}})\|_2^2}{\varrho^2} + \frac{(({x}_{i}^{\mathsf{sc}}-\mu_0)^{\top}\Delta(x_{i-1}^{\mathsf{c}}))^2}{\varrho^2}\right)\notag\\
&\qquad \cdot p_{{X}_{t_{i-1}}^{\mathsf{c}}\mymid {X}_{t_{i-1}}^{\mathsf{s}}}({x}_{i-1}^{\mathsf{c}}\mymid {x}_{i-1}^{\mathsf{s}}) \mathrm{d}{x}_{i-1}^{\mathsf{c}}\notag\\
&\qquad + O(\mathbb{E}_{{X}_{t_{i-1}}^{\mathsf{c}}\mymid {X}_{t_{i-1}}^{\mathsf{s}}}[\|\Delta({X}_{t_{i-1}}^{\mathsf{c}})\|_2^3])\notag\\
&\quad= \phi({x}_{i}^{\mathsf{sc}}; \mu_0,\varrho^2)\bigg(1 + \varrho^{-2}({x}_{i}^{\mathsf{sc}}-\mu_0)^{\top}\mathbb{E}_{{X}_{t_{i-1}}^{\mathsf{c}}\mymid {X}_{t_{i-1}}^{\mathsf{s}}}[\Delta({X}_{t_{i-1}}^{\mathsf{c}})] - \varrho^{-2}\mathbb{E}_{{X}_{t_{i-1}}^{\mathsf{c}}\mymid {X}_{t_{i-1}}^{\mathsf{s}}}[\|\Delta({X}_{t_{i-1}}^{\mathsf{c}})\|_2^2]\notag\\
&\qquad\qquad\qquad + \varrho^{-2}\mathbb{E}_{{X}_{t_{i-1}}^{\mathsf{c}}\mymid {X}_{t_{i-1}}^{\mathsf{s}}}[(({x}_{i}^{\mathsf{sc}}-\mu_0)^{\top}\Delta({X}_{t_{i-1}}^{\mathsf{c}}))^2]\bigg)  + O(\mathbb{E}_{{X}_{t_{i-1}}^{\mathsf{c}}\mymid {X}_{t_{i-1}}^{\mathsf{s}}}[\|\Delta({X}_{t_{i-1}}^{\mathsf{c}})\|_2^3]).
\end{align*}
Similarly, we have
\begin{align*}
&\int p_{\widetilde{X}_i^{\mathsf{sc}}\mymid \widetilde{X}_{i-1}}({x}_{i}^{\mathsf{sc}}\mymid {x}_{i-1})p_{\widetilde{X}_{i-1}^{\mathsf{c}}\mymid \widetilde{X}_{i-1}^{\mathsf{s}}}({x}_{i-1}^{\mathsf{c}}\mymid {x}_{i-1}^{\mathsf{s}}) \mathrm{d}{x}_{i-1}^{\mathsf{c}} \notag\\
&\quad= \phi({x}_{i}^{\mathsf{sc}}; \mu_0,\varrho^2)\bigg(1 + \varrho^{-2}({x}_{i}^{\mathsf{sc}}-\mu_0)^{\top}\mathbb{E}_{\widetilde{X}_{i-1}^{\mathsf{c}}\mymid \widetilde{X}_{i-1}^{\mathsf{s}}}[\Delta(\widetilde{X}_{i-1}^{\mathsf{c}})] - \varrho^{-2}\mathbb{E}_{\widetilde{X}_{i-1}^{\mathsf{c}}\mymid \widetilde{X}_{i-1}^{\mathsf{s}}}[\|\Delta(\widetilde{X}_{i-1}^{\mathsf{c}})\|_2^2]\notag\\
&\qquad\qquad\qquad\qquad\qquad + \varrho^{-2}\mathbb{E}_{\widetilde{X}_{i-1}^{\mathsf{c}}\mymid \widetilde{X}_{i-1}^{\mathsf{s}}}[(({x}_{i}^{\mathsf{sc}}-\mu_0)^{\top}\Delta(\widetilde{X}_{i-1}^{\mathsf{c}}))^2]\bigg)  + O(\mathbb{E}_{\widetilde{X}_{i-1}^{\mathsf{c}}\mymid \widetilde{X}_{i-1}^{\mathsf{s}}}[\|\Delta(\widetilde{X}_{i-1}^{\mathsf{c}})\|_2^3]).
\end{align*}
These lead to
\begin{align*}
&\log \frac{\int p_{\widetilde{X}_i^{\mathsf{sc}}\mymid \widetilde{X}_{i-1}}({x}_{i}^{\mathsf{sc}}\mymid {x}_{i-1}) p_{\widetilde{X}_{i-1}^{\mathsf{c}}\mymid \widetilde{X}_{i-1}^{\mathsf{s}},Y,Z}({x}_{i-1}^{\mathsf{c}}\mymid {x}_{i-1}^{\mathsf{s}},y,z)\mathrm{d}{x}_{i-1}^{\mathsf{c}}}{\int p_{\widetilde{X}_i^{\mathsf{sc}}\mymid \widetilde{X}_{i-1}}({x}_{i}^{\mathsf{sc}}\mymid {x}_{i-1})p_{{X}_{t_{i-1}}^{\mathsf{c}}\mymid {X}_{t_{i-1}}^{\mathsf{s}}}({x}_{i-1}^{\mathsf{c}}\mymid {x}_{i-1}^{\mathsf{s}}) \mathrm{d}{x}_{i-1}^{\mathsf{c}}} \notag\\
&\quad = \log \frac{1 + \varrho^{-2}({x}_{i}^{\mathsf{sc}}-\mu_0)^{\top}\mathbb{E}[\Delta(\widetilde{X}_{i-1}^{\mathsf{c}})] - \varrho^{-2}\mathbb{E}[\|\Delta(\widetilde{X}_{i-1}^{\mathsf{c}})\|_2^2]+ \varrho^{-2}\mathbb{E}[(({x}_{i}^{\mathsf{sc}}-\mu_0)^{\top}\Delta(\widetilde{X}_{i-1}^{\mathsf{c}}))^2]}{1 + \varrho^{-2}({x}_{i}^{\mathsf{sc}}-\mu_0)^{\top}\mathbb{E}[\Delta(X_{t_{i-1}}^{\mathsf{c}})] - \varrho^{-2}\mathbb{E}[\|\Delta(X_{t_{i-1}}^{\mathsf{c}})\|_2^2]+ \varrho^{-2}\mathbb{E}[(({x}_{i}^{\mathsf{sc}}-\mu_0)^{\top}\Delta(X_{t_{i-1}}^{\mathsf{c}}))^2]} + O(\delta^3)\notag\\
&\quad \le \frac{\varrho^{-2}({x}_{i}^{\mathsf{sc}}-\mu_0)^{\top}\big(\mathbb{E}[\Delta(\widetilde{X}_{{i-1}}^{\mathsf{c}})-\Delta(X_{t_{i-1}}^{\mathsf{c}})]\big) - \varrho^{-2}\big(\mathbb{E}[\|\Delta(\widetilde{X}_{{i-1}}^{\mathsf{c}})\|_2^2-\|\Delta(X_{t_{i-1}}^{\mathsf{c}})\|_2^2]\big)}{1 + \varrho^{-2}({x}_{i}^{\mathsf{sc}}-\mu_0)^{\top}\mathbb{E}[\Delta(X_{t_{i-1}}^{\mathsf{c}})] - \varrho^{-2}\mathbb{E}[\|\Delta(X_{t_{i-1}}^{\mathsf{c}})\|_2^2]+ \varrho^{-2}\mathbb{E}[(({x}_{i}^{\mathsf{sc}}-\mu_0)^{\top}\Delta(X_{t_{i-1}}^{\mathsf{c}}))^2]}\notag\\
&\qquad + \frac{\varrho^{-2}\mathbb{E}[(({x}_{i}^{\mathsf{sc}}-\mu_0)^{\top}\Delta(\widetilde{X}_{{i-1}}^{\mathsf{c}}))^2-(({x}_{i}^{\mathsf{sc}}-\mu_0)^{\top}\Delta(X_{t_{i-1}}^{\mathsf{c}}))^2]}{1 + \varrho^{-2}({x}_{i}^{\mathsf{sc}}-\mu_0)^{\top}\mathbb{E}[\Delta(X_{t_{i-1}}^{\mathsf{c}})] - \varrho^{-2}\mathbb{E}[\|\Delta(X_{t_{i-1}}^{\mathsf{c}})\|_2^2]+ \varrho^{-2}\mathbb{E}[(({x}_{i}^{\mathsf{sc}}-\mu_0)^{\top}\Delta(X_{t_{i-1}}^{\mathsf{c}}))^2]}.
\end{align*}
Therefore, the term considered in \eqref{eq:lem-distri-sc-conds} is bounded by
\begin{align}\label{eq:proof-17}
&\int \log \frac{\int p_{\widetilde{X}_i^{\mathsf{sc}}\mymid \widetilde{X}_{i-1}}({x}_{i}^{\mathsf{sc}}\mymid {x}_{i-1}) p_{\widetilde{X}_{i-1}^{\mathsf{c}}\mymid \widetilde{X}_{i-1}^{\mathsf{s}},Y,Z}({x}_{i-1}^{\mathsf{c}}\mymid {x}_{i-1}^{\mathsf{s}},y,z) \mathrm{d}{x}_{i-1}^{\mathsf{c}}}{\int p_{\widetilde{X}_i^{\mathsf{sc}}\mymid \widetilde{X}_{i-1}}({x}_{i}^{\mathsf{sc}}\mymid {x}_{i-1})p_{{X}_{t_{i-1}}^{\mathsf{c}}\mymid {X}_{t_{i-1}}^{\mathsf{s}}}({x}_{i-1}^{\mathsf{c}}\mymid {x}_{i-1}^{\mathsf{s}})\mathrm{d}{x}_{i-1}^{\mathsf{c}}} p_{\widetilde{X}_{i}^{\mathsf{sc}}\mymid \widetilde{X}_{{i-1}}^{\mathsf{s}},Y,Z}({x}_i^{\mathsf{sc}}\mymid {x}_{i-1}^{\mathsf{s}},y,z)\mathrm{d} {x}_i^{\mathsf{sc}} \notag\\
&\lesssim \int \Big(\varrho^{-2}({x}_{i}^{\mathsf{sc}}-\mu_0)^{\top}\big(\mathbb{E}[\Delta(\widetilde{X}_{{i-1}}^{\mathsf{c}})-\Delta(X_{t_{i-1}}^{\mathsf{c}})]\big) - \varrho^{-2}\big(\mathbb{E}[\|\Delta(\widetilde{X}_{{i-1}}^{\mathsf{c}})\|_2^2-\|\Delta(X_{t_{i-1}}^{\mathsf{c}})\|_2^2]\big)
    \notag\\
&\qquad +\varrho^{-2}\mathbb{E}[(({x}_{i}^{\mathsf{sc}}-\mu_0)^{\top}\Delta(\widetilde{X}_{{i-1}}^{\mathsf{c}}))^2-(({x}_{i}^{\mathsf{sc}}-\mu_0)^{\top}\Delta(X_{t_{i-1}}^{\mathsf{c}}))^2]\Big)p_{\widetilde{X}_{i}^{\mathsf{sc}}\mymid \widetilde{X}_{{i-1}}^{\mathsf{s}}}({x}_i^{\mathsf{sc}}\mymid {x}_{i-1}^{\mathsf{s}})\mathrm{d} {x}_i^{\mathsf{sc}}\notag\\
&\le \varrho^{-2}\|\mathbb{E}[{x}_{i}^{\mathsf{sc}}-\mu_0]\|_2^{\top}\|\mathbb{E}[\Delta(X_{t_{i-1}}^{\mathsf{c}})-\Delta(\widetilde{X}_{{i-1}}^{\mathsf{c}})]\|_2 + \varrho^{-2}\big|\mathbb{E}[\|\Delta(X_{t_{i-1}}^{\mathsf{c}})\|_2^2-\|\Delta(\widetilde{X}_{{i-1}}^{\mathsf{c}})\|_2^2]\big|\notag\\
&\quad + \varrho^{-2}\mathbb{E}[\|{x}_{i}^{\mathsf{sc}}-\mu_0\|_2^2]\big\|\mathbb{E}[\Delta(X_{t_{i-1}}^{\mathsf{c}})\Delta(X_{t_{i-1}}^{\mathsf{c}})^{\top}-\Delta(\widetilde{X}_{i-1}^{\mathsf{c}})\Delta(\widetilde{X}_{i-1}^{\mathsf{c}})^{\top}]\big\|_{\mathsf{op}}.
\end{align}
According to the update rule of $\widetilde{X}_i^{\mathsf{sc}}$, the expectation of $\widetilde{X}_{i}^{\mathsf{sc}}-\mu_0$ conditioned on $\widetilde{X}_{i-1}^{\mathsf{s}} = {x}_{i-1}^{\mathsf{s}}$ is bounded by
\begin{align*}
\varrho^{-2}\|\mathbb{E}[\widetilde{x}_{i}^{\mathsf{sc}}-\mu_0]\|_2 &= \varrho^{-2}\|\mathbb{E}[\Delta(\widetilde{X}_{i-1}^{\mathsf{c}})\mymid \widetilde{X}_{i-1}^{\mathsf{s}} = {x}_{i-1}^{\mathsf{s}}] \|_2\notag\\
&=\frac{\alpha_{t_i}\sigma_{t_{i-1}}{\rm e}^{-\lambda_{t_{i-1}}}(1-{\rm e}^{-2\delta_{i}})}{\sigma_{t_i}^2(1-{\rm e}^{-2\delta_{i}})}
\|\mathbb{E}[\nabla_{{x}_{i-1}^{\mathsf{sc}}} \log p_{\lambda_{i-1}}(\widetilde{X}_{i-1}^{\mathsf{c}},{x}_{i-1}^{\mathsf{sc}})\mymid \widetilde{X}_{i-1}^{\mathsf{s}} = {x}_{i-1}^{\mathsf{s}}] \|_2
= O(1).
\end{align*}
Similarly, we have
\begin{align*}
\varrho^{-2}\mathbb{E}[\|{x}_{i}^{\mathsf{sc}}-\mu_0\|_2^2]
=\frac{\alpha_{t_i}^2\sigma_{t_{i-1}}^2{\rm e}^{-2\lambda_{t_{i-1}}}(1-{\rm e}^{-2\delta_{i}})^2}{\sigma_{t_i}^2(1-{\rm e}^{-2\delta_{i}})}
\mathbb{E}[\|\nabla_{{x}_{i-1}^{\mathsf{sc}}} \log p_{\lambda_{i-1}}(\widetilde{X}_{i-1}^{\mathsf{c}},{x}_{i-1}^{\mathsf{sc}})\|_2^2\mymid \widetilde{X}_{i-1}^{\mathsf{s}} = {x}_{i-1}^{\mathsf{s}}]
+d
=O(1).
\end{align*}
Morever, define 
$$
\overline{\Delta}({x}_{i-1}^{\mathsf{c}}) \coloneqq \delta_i^{-1}\Delta({x}_{i-1}^{\mathsf{c}}).
$$
We have $\mathbb{E}[\|\overline{\Delta}({X}_{t_{i-1}}^{\mathsf{c}})\|_2]<\infty$ and $\mathbb{E}[\|\overline{\Delta}(\widetilde{X}_{i-1}^{\mathsf{c}})\|_2]<\infty$.
Instituting into \eqref{eq:proof-17}, we have
\begin{align}\label{eq:proof-18}
&\int \log \frac{\int p_{\widetilde{X}_i^{\mathsf{sc}}\mymid \widetilde{X}_{i-1}}({x}_{i}^{\mathsf{sc}}\mymid {x}_{i-1}) p_{\widetilde{X}_{i-1}^{\mathsf{c}}\mymid \widetilde{X}_{i-1}^{\mathsf{s}},Y,Z}({x}_{i-1}^{\mathsf{c}}\mymid {x}_{i-1}^{\mathsf{s}},y,z)\mathrm{d}{x}_{i-1}^{\mathsf{c}}}{\int p_{\widetilde{X}_i^{\mathsf{sc}}\mymid \widetilde{X}_{i-1}}({x}_{i}^{\mathsf{sc}}\mymid {x}_{i-1})p_{{X}_{t_{i-1}}^{\mathsf{c}}\mymid {X}_{t_{i-1}}^{\mathsf{s}}}({x}_{i-1}^{\mathsf{c}}\mymid {x}_{i-1}^{\mathsf{s}}) \mathrm{d}{x}_{i-1}^{\mathsf{c}}} p_{\widetilde{X}_{i}^{\mathsf{sc}}\mymid \widetilde{X}_{{i-1}}^{\mathsf{s}},Y,Z}({x}_i^{\mathsf{sc}}\mymid {x}_{i-1}^{\mathsf{s}},y,z)\mathrm{d} {x}_i^{\mathsf{sc}} \notag\\
&\quad\lesssim \delta_i\|\mathbb{E}[\overline{\Delta}({X}_{t_{i-1}}^{\mathsf{c}})-\overline{\Delta}(\widetilde{X}_{{i-1}}^{\mathsf{c}})]\|_2 + \delta_i|\mathbb{E}[\|\overline{\Delta}(X_{t_{i-1}}^{\mathsf{c}})\|_2^2-\|\overline{\Delta}(\widetilde{X}_{i-1}^{\mathsf{c}})\|_2^2]|\notag\\
&\qquad  + \delta_i^2 \|\mathbb{E}[\overline{\Delta}({X}_{t_{i-1}}^{\mathsf{c}})\overline{\Delta}({X}_{t_{i-1}}^{\mathsf{c}})^{\top}-\overline{\Delta}(\widetilde{X}_{{i-1}}^{\mathsf{c}})\overline{\Delta}(\widetilde{X}_{{i-1}}^{\mathsf{c}})^{\top}]\|_{\mathsf{op}}.
\end{align}
Thus we have
\begin{align*}
&\quad \sum_{i=1}^k\mathbb{E}_{{x}_{i-1}^{\mathsf{s}}\sim \widetilde{X}_{i-1}^{\mathsf{s}}\mymid Y,Z}\bigg[\int \log \frac{\int p_{\widetilde{X}_i^{\mathsf{sc}}\mymid \widetilde{X}_{i-1}}({x}_{i}^{\mathsf{sc}}\mymid {x}_{i-1}) p_{\widetilde{X}_{i-1}^{\mathsf{c}}\mymid \widetilde{X}_{i-1}^{\mathsf{s}},Y,Z}({x}_{i-1}^{\mathsf{c}}\mymid {x}_{i-1}^{\mathsf{s}},y,z)\mathrm{d}{x}_{i-1}^{\mathsf{c}}}{\int p_{\widetilde{X}_i^{\mathsf{sc}}\mymid \widetilde{X}_{i-1}}({x}_{i}^{\mathsf{sc}}\mymid {x}_{i-1})p_{{X}_{t_{i-1}}^{\mathsf{c}}\mymid {X}_{t_{i-1}}^{\mathsf{s}}}({x}_{i-1}^{\mathsf{c}}\mymid {x}_{i-1}^{\mathsf{s}}) \mathrm{d}{x}_{i-1}^{\mathsf{c}}}\notag\\
&\qquad\qquad\qquad\qquad\qquad\cdot p_{\widetilde{X}_{i}^{\mathsf{sc}}\mymid \widetilde{X}_{{i-1}}^{\mathsf{s}}}({x}_i^{\mathsf{sc}}\mymid {x}_{i-1}^{\mathsf{s}},y,z)\mathrm{d} {x}_i^{\mathsf{sc}}\bigg]\notag\\
&\lesssim \max_i\mathbb{E}_{x_{i-1}^{\mathsf{sc}}\sim p_{\widetilde{X}_{i-1}^{\mathsf{sc}}\mymid Y,Z}}\big[\|\mathbb{E}[\overline{\Delta}({X}_{t_{i-1}}^{\mathsf{c}})-\overline{\Delta}(\widetilde{X}_{{i-1}}^{\mathsf{c}})\mymid \widetilde{X}_{i-1}^{\mathsf{sc}}=X_{t_{i-1}}^{\mathsf{sc}}=x_{i-1}^{\mathsf{sc}}]\|_2\big]\notag\\
&\quad + \max_i\mathbb{E}_{x_{i-1}^{\mathsf{sc}}\sim p_{\widetilde{X}_{i-1}^{\mathsf{sc}}\mymid Y,Z}}\big[|\mathbb{E}[\|\overline{\Delta}(X_{t_{i-1}}^{\mathsf{c}})\|_2^2-\|\overline{\Delta}(\widetilde{X}_{i-1}^{\mathsf{c}})\|_2^2\mymid \widetilde{X}_{i-1}^{\mathsf{sc}}=X_{t_{i-1}}^{\mathsf{sc}}=x_{i-1}^{\mathsf{sc}}]|\big]\notag\\
&\qquad  + \delta\max_i \mathbb{E}_{x_{i-1}^{\mathsf{sc}}\sim p_{\widetilde{X}_{i-1}^{\mathsf{sc}}\mymid Y,Z}}\big[\|\mathbb{E}[\overline{\Delta}({X}_{t_{i-1}}^{\mathsf{c}})\overline{\Delta}({X}_{t_{i-1}}^{\mathsf{c}})^{\top}-\overline{\Delta}(\widetilde{X}_{{i-1}}^{\mathsf{c}})\overline{\Delta}(\widetilde{X}_{{i-1}}^{\mathsf{c}})^{\top}\mymid \widetilde{X}_{i-1}^{\mathsf{sc}}=X_{t_{i-1}}^{\mathsf{sc}}=x_{i-1}^{\mathsf{sc}}]\|_{\mathsf{op}}\big] \to 0,
\end{align*}
as Lemma \ref{lem:distri-xc} implies that the following limit holds for any  function $f$ with finite expectation:
$$
\mathbb{E}_{x_{i-1}^{\mathsf{sc}}\sim p_{\widetilde{X}_{i-1}^{\mathsf{sc}}\mymid Y,Z}}\big[\mathbb{E}[f(X_{t_{i-1}}^{\mathsf{c}})-f(\widetilde{X}_{{i-1}}^{\mathsf{c}})\mymid \widetilde{X}_{i-1}^{\mathsf{sc}}=X_{t_{i-1}}^{\mathsf{sc}}=x_{i-1}^{\mathsf{sc}}]\big]\to 0.
$$ 

\subsection{Proof of Lemma \ref{lem:proof-boundet}}
\label{app:proof-lem-boundet}

We will bound the norm of $\|e_{t_i,j}\|_2$ for $j=1,2,3$ separately.
We start from $\|e_{t_i,1}\|_2$, which is controlled by $|\frac{\sigma_{t_{i+1}}}{\sigma_{t_i}}-1|\|\widehat{X}_{t_i}^{\mathsf{c}}\|_2$ according to its definition.
Note that $\sigma_{t_{i+1}}\le \sigma_{t_{i}}$ and $\alpha_{t_{i+1}}\ge \alpha_{t_i}$. Thus we have
\begin{align}\label{eq:proof-diff-sigma}
\big|\frac{\sigma_{t_{i+1}}}{\sigma_{t_i}}-1\big| = 1-\frac{\sigma_{t_{i+1}}}{\sigma_{t_i}} \le 1-\frac{\alpha_{t_i}}{\alpha_{t_{i+1}}}\frac{\sigma_{t_{i+1}}}{\sigma_{t_i}} = 1 - \frac{{\rm e}^{-\lambda_{t_{i+1}}}}{{\rm e}^{-\lambda_{t_i}}} = 1 - {\rm e}^{-\delta_{{i+1}}}\le \delta_{{i+1}}.
\end{align}
Thus we have
\begin{align}\label{eq:proof-lem-boundet-1}
\|e_{t_i,1}({x}_{t_i}^{\mathsf{c}})\|_2^2=\Big\|\left(\frac{\sigma_{t_{i+1}}}{\sigma_{t_{i}}}-1\right)\sqrt{1-\eta(1 - e^{-2\delta_{i+1}})}{x}_{t_{i}}^{\mathsf{c}}\Big\|_2^2 \le \delta_{{i+1}}^2 \|{x}_{t_i}^{\mathsf{c}}\|_2^2.
\end{align}

For the error term $\|e_{t_i,2}\|_2$, recalling that $x_0$ is bounded, which implies that $\mu_{t_i}^{\mathsf{c}}$ is also bounded by some constant $B$ as it is a conditional expectation of $x_0$.
Thus we have
\begin{align}\label{eq:proof-1}
\|e_{t_i,2}({x}_{t_i}^{\mathsf{c}})\|_2
&=\Big\|\big(\left(\alpha_{t_{i+1}}-\alpha_{t_i}\right) + \left(\alpha_{t_{i+1}}e^{-\delta_{i+1}}-\alpha_{t_i}\right)\sqrt{1-\eta(1 - e^{-2\delta_{i+1}})}\big)\mu_{t_{i}}^{\mathsf{c}}({x}_{t_i})\Big\|_2\notag\\
& \le B\left(\alpha_{t_{i+1}} - \alpha_{t_i} + |\alpha_{t_{i+1}}{\rm e}^{-\delta_{{i+1}}} - \alpha_{t_i}|\right) \le B\left(2\alpha_{t_{i+1}} - 2\alpha_{t_i} + \alpha_{t_{i+1}}(1-{\rm e}^{-\delta_{{i+1}}})\right).
\end{align}
Note that
\begin{align}\label{eq:proof-diff-alpha}
\alpha_{t_{i+1}} - \alpha_{t_i} = \alpha_{t_{i+1}}\left(1 - \frac{\alpha_{t_i}}{\alpha_{t_{i+1}}}\right) \le \alpha_{t_{i+1}}\left(1 - \frac{\sigma_{t_{i+1}}}{\sigma_{t_i}}\frac{\alpha_{t_i}}{\alpha_{t_{i+1}}}\right) = \alpha_{t_{i+1}}\left(1-{\rm e}^{-\delta_{{i+1}}}\right) \le \alpha_{t_{i+1}} \delta_{{i+1}}.
\end{align}
Inserting into \eqref{eq:proof-1}, we have
\begin{align}\label{eq:proof-lem-boundet-2}
\|e_{t_i,2}({x}_{t_i}^{\mathsf{c}})\|_2^2\le B^2\alpha_{t_{i+1}}^2\left(2\delta_{{i+1}} + \delta_{{i+1}}\right)^2 = O(\delta_{{i+1}}^2).
\end{align}

For the error term $\|e_{t_i,3}\|_2$, we have
\begin{align*}
\|e_{t_i,3}({x}_{t_i},{x}_{t_{i+1}}^{\mathsf{s}})\|_2 
&=\Big\|\big(1 - \sqrt{1-\eta(1 - e^{-2\delta_{i+1}})}\big)\big(\sigma_{t_{i}}\epsilon_{t_{i}}^{\mathsf{c}}({X}_{t_{i}} )-\sigma_{t_{i+1}}\epsilon_{t_{i+1}}^{\mathsf{c}}({x}_{t_{i+1}}^{\mathsf{est}} )\big)\Big\|_2\notag\\
&\overset{\text{(i)}}{\le} \sigma_{t_{i+1}}^{-2}\gamma_i\left\|\sigma_{t_{i}}\epsilon_{t_{i}}^{\mathsf{c}}-\sigma_{t_{i+1}}\epsilon_{t_{i+1}}^{\mathsf{c}}\right\|_2 \notag\\
&=\sigma_{t_{i+1}}^{-2}\gamma_i \Big\|\sigma_{t_i}^2\nabla \log p_{X_{t_i}^{\mathsf{c}},X_{t_i}^{\mathsf{sc}}|X_{t_i}^{\mathsf{s}}}({x}_{t_i}^{\mathsf{c}},{x}_{t_i}^{\mathsf{sc}}|\alpha_{t_i}\overline{Y}_{t_i} + \hat{\sigma}_{t_i}\overline{Z}_{t_i}) \notag\\
&\qquad \qquad - \sigma_{t_{i+1}}^2\nabla \log p_{X_{t_{i+1}}^{\mathsf{c}},X_{t_{i+1}}^{\mathsf{sc}}|x_{t_{i+1}}^{\mathsf{s}}}({x}_{t_i}^{\mathsf{c}},{x}_{t_{i+1}}^{\mathsf{sc}}|\alpha_{t_{i+1}}\overline{Y}_{t_i} + \hat{\sigma}_{t_{i+1}}\overline{Z}_{t_i})\Big\|,
\end{align*}
where ${x}_{t_{i+1}}^{\mathsf{est}}$ is defined by $V_{\mathcal{S}^{\mathsf{c}}}^{\top}{x}_{t_{i+1}}^{\mathsf{est}} = {x}_{t_{i}}^{\mathsf{c}}$ and $V_{\mathcal{S}}^{\top}{x}_{t_{i+1}}^{\mathsf{est}} = {x}_{t_{i+1}}^{\mathsf{s}}$, (i) arises from
\begin{align*}
1-\sqrt{1-\eta(1-{\rm e}^{-2\delta_{{i+1}}})} \le \eta(1-{\rm e}^{-2\delta_{{i+1}}}) = \sigma_{t_{i+1}}^{-2}\gamma_{i}.
\end{align*}
and the last equation comes from \eqref{eq:def-epsc}.
Combining with \eqref{eq:nablap-exp}, we have
\begin{align}\label{eq:proof-6}
&\quad\|e_{t_i,3}({x}_{t_i},{x}_{t_{i+1}}^{\mathsf{s}})\|_2 \notag\\
&\le\sigma_{t_{i+1}}^{-2}\gamma_i\Big\|\mathbb{E}_{X_{0}^{\mathsf{c}}}[{x}_{t_i}^{\mathsf{c}}-\alpha_{t_i}X_{0}^{\mathsf{c}}|X_{t_i}^{\mathsf{c}}={x}_{t_i}^{\mathsf{c}},X_{t_i}^{\mathsf{sc}}={x}_{t_i}^{\mathsf{sc}}, X_{t_i}^{\mathsf{s}}=\alpha_{t_i}\overline{Y}_{t_i} + \hat{\sigma}_{t_i}\overline{Z}_{t_i}] \notag\\
&\qquad \quad - \mathbb{E}_{X_{0}^{\mathsf{c}}}[{x}_{t_i}^{\mathsf{c}}-\alpha_{t_{i+1}}X_{0}^{\mathsf{c}}|X_{t_{i+1}}^{\mathsf{c}}={x}_{t_i}^{\mathsf{c}},X_{t_{i+1}}^{\mathsf{sc}}={x}_{t_{i+1}}^{\mathsf{sc}}, X_{t_{i+1}}^{\mathsf{s}}=\alpha_{t_{i+1}}\overline{Y}_{t_{i+1}} + \hat{\sigma}_{t_{i+1}}\overline{Z}_{t_{i+1}}]\Big\|_2\notag\\
&= \sigma_{t_{i+1}}^{-2}\gamma_i\Big\|\mathbb{E}_{X_{0}^{\mathsf{c}}}[\alpha_{t_i}X_{0}^{\mathsf{c}}|X_{t_i}^{\mathsf{c}}={x}_{t_i}^{\mathsf{c}},X_{t_i}^{\mathsf{sc}}={x}_{t_i}^{\mathsf{sc}}, X_{t_i}^{\mathsf{s}}=\alpha_{t_i}\overline{Y}_{t_i} + \hat{\sigma}_{t_i}\overline{Z}_{t_i}] \notag\\
&\qquad \quad - \mathbb{E}_{X_{0}^{\mathsf{c}}}[\alpha_{t_{i+1}}X_{0}^{\mathsf{c}}|X_{t_{i+1}}^{\mathsf{c}}={x}_{t_i}^{\mathsf{c}},X_{t_{i+1}}^{\mathsf{sc}}={x}_{t_{i+1}}^{\mathsf{sc}}, X_{t_{i+1}}^{\mathsf{s}}=\alpha_{t_{i+1}}\overline{Y}_{t_{i+1}} + \hat{\sigma}_{t_{i+1}}\overline{Z}_{t_{i+1}}]\Big\|_2\notag\\
&\le\sigma_{t_{i+1}}^{-2}\gamma_i (\alpha_{t_{i+1}}-\alpha_{t_{i}})\Big\|\mathbb{E}_{X_{0}^{\mathsf{c}}}[X_{0}^{\mathsf{c}}|X_{t_i}^{\mathsf{c}}={x}_{t_i}^{\mathsf{c}},X_{t_i}^{\mathsf{sc}}={x}_{t_i}^{\mathsf{sc}}, X_{t_i}^{\mathsf{s}}=\alpha_{t_i}\overline{Y}_{t_i} + \hat{\sigma}_{t_i}\overline{Z}_{t_i}]\Big\|_2 \notag\\
&\qquad \quad +\sigma_{t_{i+1}}^{-2}\gamma_i\alpha_{t_{i+1}}\Big\|\mathbb{E}_{X_{0}^{\mathsf{c}}}[X_{0}^{\mathsf{c}}|X_{t_i}^{\mathsf{c}}=x_{t_i}^{\mathsf{c}},X_{t_i}^{\mathsf{sc}}=x_{t_i}^{\mathsf{sc}}, X_{t_i}^{\mathsf{s}}=\alpha_{t_i}\overline{Y}_{t_i} + \hat{\sigma}_{t_i}\overline{Z}_{t_i}] \notag\\
&\qquad \quad - \mathbb{E}_{X_{0}^{\mathsf{c}}}[X_{0}^{\mathsf{c}}|X_{t_{i+1}}^{\mathsf{c}}=x_{t_i}^{\mathsf{c}},X_{t_{i+1}}^{\mathsf{sc}}=x_{t_{i+1}}^{\mathsf{sc}}, X_{t_{i+1}}^{\mathsf{s}}=\alpha_{t_{i+1}}\overline{Y}_{t_{i+1}} + \hat{\sigma}_{t_{i+1}}\overline{Z}_{t_{i+1}}]\Big\|_2.
\end{align}
Recall that $X_0$ is bounded. By using \eqref{eq:proof-diff-alpha}, the first term is bounded by
\begin{align}\label{eq:proof-5}
(\alpha_{t_{i+1}}-\alpha_{t_{i}})\Big\|\mathbb{E}_{X_{0}^{\mathsf{c}}}[X_{0}^{\mathsf{c}}|X_{t_i}^{\mathsf{c}}={x}_{t_i}^{\mathsf{c}},X_{t_i}^{\mathsf{sc}}={x}_{t_i}^{\mathsf{sc}}, X_{t_i}^{\mathsf{s}}=\alpha_{t_i}\overline{Y}_{t_i} + \hat{\sigma}_{t_i}\overline{Z}_{t_i}]\Big\|_2
= O(\delta_{{i+1}}).
\end{align}
We claim that the second term is bounded by the following inequality, whose proof is postponed to the end of this section.
\begin{align}\label{eq:proof-4}
&\Big\|\mathbb{E}_{X_{0}^{\mathsf{c}}}[X_{0}^{\mathsf{c}}|X_{t_i}^{\mathsf{c}}=x_{t_i}^{\mathsf{c}},X_{t_i}^{\mathsf{sc}}=x_{t_i}^{\mathsf{sc}}, X_{t_i}^{\mathsf{s}}=\alpha_{t_i}\overline{Y}_{t_i} + \hat{\sigma}_{t_i}\overline{Z}_{t_i}]\notag\\
&\quad - \mathbb{E}_{X_{0}^{\mathsf{c}}}[X_{0}^{\mathsf{c}}|X_{t_{i+1}}^{\mathsf{c}}=x_{t_i}^{\mathsf{c}},X_{t_{i+1}}^{\mathsf{sc}}=x_{t_{i+1}}^{\mathsf{sc}}, X_{t_{i+1}}^{\mathsf{st}}=\alpha_{t_{i+1}}\overline{Y}_{t_{i+1}} + \hat{\sigma}_{t_{i+1}}\overline{Z}_{t_{i+1}}]\Big\|_2\notag\\
&\qquad \lesssim \bigg(\|x_{t_i}^{\mathsf{c}}\|_2 +\|x_{t_i}^{\mathsf{sc}}\|_2 + \|Z\|_2+ 1\bigg)\delta_{{i+1}} + \|x_{t_{i+1}}^{\mathsf{sc}}-x_{t_i}^{\mathsf{sc}}\|_2.
\end{align}
Combining \eqref{eq:proof-4} and \eqref{eq:proof-5} together with \eqref{eq:proof-6}, we have
\begin{align}\label{eq:proof-lem-boundet-3}
\|e_{t_{i,3}}({x}_{t_i},{x}_{t_{i+1}}^{\mathsf{s}})\|_2\lesssim \bigg(\|x_{t_i}^{\mathsf{c}}\|_2 +\|x_{t_i}^{\mathsf{sc}}\|_2 + \|Z\|_2+ 1\bigg)\delta_{{i+1}}\gamma_i + \gamma_i\|x_{t_{i+1}}^{\mathsf{sc}}-x_{t_i}^{\mathsf{sc}}\|_2.
\end{align} 

For the last term in the right-hand-side of \eqref{eq:def-et}, we have
\begin{align}\label{eq:proof-lem-boundet-4}
&\quad\left\|\Bigg(1 - \sqrt{1-\eta(1 - e^{-2\delta_{i+1}})} - \frac{\eta(1 - e^{-2\delta_{i+1}})}{2}\Bigg)\sigma_{t_{i+1}}\epsilon_{t_{i+1}}^{\mathsf{c}}\right\|_2 \notag\\
&\le \Bigg|1 - \sqrt{1-\eta(1 - e^{-2\delta_{i+1}})} - \frac{\eta(1 - e^{-2\delta_{i+1}})}{2}\Bigg|\sigma_{t_{i+1}}\|\epsilon_{t_{i+1}}^{\mathsf{c}}\|_2\notag\\
&\lesssim \gamma_i^2\frac{\|\epsilon_{t_{i+1}}^{\mathsf{c}}\|_2}{\sigma_{t_{i+1}}^3} \le \frac{\gamma_i^2}{\sigma_{t_{i+1}}^4}(\|x_{t_i}^{\mathsf{c}}\|_2 + \alpha_{t_i}B).
\end{align}

Combining \eqref{eq:proof-lem-boundet-1}, \eqref{eq:proof-lem-boundet-2}, \eqref{eq:proof-lem-boundet-3}, and \eqref{eq:proof-lem-boundet-4}, we have
\begin{align*}
\|e_{t_i}(x_{t_i},x_{t_{i+1}}^{\mathsf{s}})\|_2^2 &\lesssim \bigg(\|x_{t_i}^{\mathsf{c}}\|_2 +\|x_{t_i}^{\mathsf{sc}}\|_2 + \|Z\|_2+ 1\bigg)^2\delta_{{i+1}}^2+ (\|x_{t_i}^{\mathsf{c}}\|_2^2+1)\gamma_i^4+\gamma_i^2\|x_{t_{i+1}}^{\mathsf{sc}}-x_{t_i}^{\mathsf{sc}}\|_2^2.
\end{align*}
It suffices to prove that
\begin{align*}
\mathbb{E}_{\widehat{X}_{t_{i+1}}^{\mathsf{sc}}|\widehat{X}_{t_i}=x_{t_i}}[\|\widehat{X}_{t_{i+1}}^{\mathsf{sc}}-\widehat{X}_{t_i}^{\mathsf{sc}}\|_2^2] =O(\delta_{{i+1}}^2)(\|x_{t_i}\|_2^2+1) + O(\delta_{i}).
\end{align*}
According to the update rule of $\widehat{X}_{t_{i+1}}^{\mathsf{sc}}$, we have
\begin{align*}
\mathbb{E}_{\widehat{X}_{t_{i+1}}^{\mathsf{sc}}|\widehat{X}_{t_i}=x_{t_i}}[\|\widehat{X}_{t_{i+1}}^{\mathsf{sc}}-\widehat{X}_{t_i}^{\mathsf{sc}}\|_2^2] &\lesssim \bigg(\frac{\sigma_{t_{i+1}}}{\sigma_{t_i}}{\rm e}^{-\delta_{i+1}}-1\bigg)^2\|x_{t_i}^{\mathsf{sc}}\|_2^2 + \delta_{{i+1}}^2 + \delta_{i+1}\mathbb{E}[\|z_{t_i}^{\mathsf{sc}}\|_2^2]\notag\\
& = \delta_{{i+1}}^2(\|x_{t_i}^{\mathsf{sc}}\|_2^2+1).
\end{align*}
and complete the proof.


\paragraph{Proof of \eqref{eq:proof-4}.}
For the second term, we introduce the following notation $q_{\lambda_{i}}(x_0|x')$ to simplify the expression:
\begin{align*}
q_{\lambda_{i}}(x_0|x') &\coloneqq p_{X_{0}|X_{t_i}}(x_0|x') = 
 \frac{\phi(x'|\alpha_{t_i}x_0,\sigma_{t_i})p_{X_0}(x_0)}{p_{X_{t_i}}(x')},
\end{align*}
where $\phi(\cdot|\mu,\sigma)$ denotes the Gaussian density functon with mean vector $\mu$ and covariance matrix $\sigma^2I$.
Armed with this notation, we can control the second term as
\begin{align}\label{eq:proof-2}
&\Big\|\mathbb{E}_{X_{0}^{\mathsf{c}}}[X_{0}^{\mathsf{c}}|X_{t_i}^{\mathsf{c}}=x_{t_i}^{\mathsf{c}},X_{t_i}^{\mathsf{sc}}=x_{t_i}^{\mathsf{sc}}, X_{t_i}^{\mathsf{s}}=\alpha_{t_i}\overline{Y}_{t_i} + \hat{\sigma}_{t_i}\overline{Z}_{t_i}]\notag\\
&\quad  - \mathbb{E}_{X_{0}^{\mathsf{c}}}[X_{0}^{\mathsf{c}}|X_{t_{i+1}}^{\mathsf{c}}=x_{t_i}^{\mathsf{c}},X_{t_{i+1}}^{\mathsf{sc}}=x_{t_{i+1}}^{\mathsf{sc}}, X_{t_{i+1}}^{\mathsf{s}}=\alpha_{t_{i+1}}\overline{Y}_{t_{i+1}} + \hat{\sigma}_{t_{i+1}}\overline{Z}_{t_{i+1}}]\Big\|_2\notag\\
&\quad=\left\|\int x_0 \left(q_{\lambda_{i}}(x_0|\tilde{x}_{t_i}) - q_{\lambda_{i+1}}(x_0|\tilde{x}_{t_{i+1}})\right)\mathrm{d} x_0\right\|_2 \le B\int |q_{\lambda_{i}}(x_0|\tilde{x}_{t_i}) - q_{\lambda_{i+1}}(x_0|\tilde{x}_{t_{i+1}})|\mathrm{d} x_0,
\end{align}
where $\tilde{x}_{t_i}\coloneqq [x_{t_i}^{\mathsf{c}},x_{t_i}^{\mathsf{sc}},\alpha_{t_{i}}\overline{Y}_{t_{i}}+\hat{\sigma}_{t_{i}}\overline{Z}_{t_{i}}]$ and $\tilde{x}_{t_{i+1}}\coloneqq [x_{t_i}^{\mathsf{c}},x_{t_{i+1}}^{\mathsf{sc}},\alpha_{t_{i+1}}\overline{Y}_{t_{i+1}}+\hat{\sigma}_{t_{i+1}}\overline{Z}_{t_{i+1}}]$.
According to the definition of $q_{\lambda_i}$, we have
\begin{align}\label{eq:proof-7}
\frac{1}{q_{\lambda_i}(x|x')}\left|\frac{\partial q_{\lambda_i}(x|x')}{\partial \alpha_{t_i}}\right| &= \frac{\partial \log q_{\lambda_i}(x|x')}{\partial \alpha_{t_i}} = \frac{\partial \log \phi(x'|\alpha_{t_i}x,\sigma_{t_i})}{\partial \alpha_{t_i}} - \frac{\partial \log p_{X_{t_i}}(x')}{\partial \alpha_{t_i}}\notag\\
&=-\frac12\frac{\partial}{\partial \alpha_{t_i}}\frac{1}{\sigma_{t_i}^2}\|x'-\alpha_{t_i}x\|_2^2 - \frac{\partial \log p_{X_{t_i}}(x')}{\partial \alpha_{t_i}}.
\end{align}
Moreover, we have
\begin{align*}
\left|\frac{\partial \log p_{X_{t_i}}(x')}{\partial \alpha_{t_i}}\right|
&= \frac{1}{2\sigma_{t_i}^2}\left|\int \frac{\partial}{\partial \alpha_{t_i}}\|x'-\alpha_{t_i}x\|_2^2\frac{\phi(x'|\alpha_{t_i}x,\sigma_{t_i})p_{X_0}(x)}{p_{X_{t_i}}(x')} \mathrm{d} x\right|\notag\\
&= \frac{1}{\sigma_{t_i}^2}\left|\int (x'-\alpha_{t_i}x)^{\top}x \frac{\phi(x'|\alpha_{t_i}x,\sigma_{t_i})p_{X_0}(x)}{p_{X_{t_i}}(x')} \mathrm{d} x\right|\notag\\
&= \frac{1}{\sigma_{t_i}^2}\left|\int (x'-\alpha_{t_i}x)^{\top}x q_{\lambda_i}(x|x') \mathrm{d} x\right|\notag\\
&\lesssim O(\|x'\|_2+1).
\end{align*}
Substituting into \eqref{eq:proof-7}, we have
\begin{align*}
\frac{1}{q_{\lambda_i}(x|x')}\left|\frac{\partial q_{\lambda_i}(x|x')}{\partial \alpha_{t_i}}\right| 
&\lesssim O(\|x'\|_2 + 1).
\end{align*}

Furthermore, we have the following inequality in a similar way
\begin{align}\label{eq:proof-8}
\frac{1}{q_{\lambda_i}(x|x')}\frac{\partial q_{\lambda_i}(x|x')}{\partial \sigma_{t_i}} 
&=-\frac12\frac{\partial}{\partial \sigma_{t_i}}\frac{1}{\sigma_{t_i}^2}\|x'-\alpha_{t_i}x\|_2^2 -\frac{d}{\sigma_{t_i}}- \frac{\partial \log p_{X_{t_i}}(x')}{\partial \sigma_{t_i}} \notag\\
&\overset{\text{(i)}}{=} \sigma_{t_i}^{-3}\|x'-\alpha_{t_i}x\|_2^2-\int \sigma_{t_i}^{-3}\|x'-\alpha_{t_i}x\|_2^2  q_{\lambda_i}(x|x') \mathrm{d} x,
\end{align}
where (i) comes from
\begin{align*}
\frac{\partial \log p_{X_{t_i}}(x')}{\partial \sigma_{t_i}}
&= \int \sigma_{t_i}^{-3}\|x'-\alpha_{t_i}x\|_2^2  q_{\lambda_i}(x|x') \mathrm{d} x - \frac{d}{\sigma_{t_i}}.
\end{align*}
By using the bounded property of $X_0$ again, we have
\begin{align*}
\left|\frac{1}{q_{\lambda_i}(x|x')}\frac{\partial q_{\lambda_i}(x|x')}{\partial \sigma_{t_i}} \right|
=O(\|x'\|_2+1).
\end{align*}
In addition, we have
\begin{align*}
\nabla_{x'} \log q_{\lambda_i}(x|x') &= -\nabla_{x'} \frac{1}{2\sigma_{t_i}^2}\|x'-\alpha_{t_i}x\|_2^2 -  \nabla_{x'} \log p_{X_{t_i}}(x') \notag\\
&= -\frac{1}{\sigma_{t_i}^2}(x'-\alpha_{t_i}x) + \frac{1}{\sigma_{t_i}^2}\mathbb{E}[x'-\alpha_{t_i}X_0|X_{t_i}=x'],\notag\\
\|\nabla_{x'} \log q_{\lambda_i}(x|x')\|_2&=O(\|x'\|_2).
\end{align*}
Moreover, we have
\begin{align*}
\|\tilde{x}_{t_{i+1}} - \tilde{x}_{t_{i}}\|_2 \le (\alpha_{t_{i+1}} - \alpha_{t_i})\|\overline{y}_{t_i}\|_2 + (\sigma_{t_{i+1}} - \sigma_{t_i})\|\overline{Z}_{t_i}\|_2 + \|x_{t_{i+1}}^{\mathsf{sc}}-x_{t_i}^{\mathsf{sc}}\|_2.
\end{align*}
Combining these together, we have
\begin{align*}
&\quad|q_{\lambda_{i}}(x|\tilde{x}_{t_i}) - q_{\lambda_{i+1}}(x|\tilde{x}_{t_{i+1}})|\notag\\
 &\lesssim q_{\lambda_{i}}(x|\tilde{x}_{t_i})(\|x_{t_i}^{\mathsf{c}}\|_2 +\|x_{t_i}^{\mathsf{sc}}\|_2 + \|Z\|_2+ 1) (|\alpha_{t_i}-\alpha_{t_{i+1}}| + |\sigma_{t_i} -\sigma_{t_{i+1}}|) \notag\\
&\quad +q_{\lambda_{i}}(x|\tilde{x}_{t_i})\|x_{t_{i+1}}^{\mathsf{sc}}-x_{t_i}^{\mathsf{sc}}\|_2+ O((|\alpha_{t_i}-\alpha_{t_{i+1}}| + |\sigma_{t_i} -\sigma_{t_{i+1}}|)^2) + O(\|x_{t_{i+1}}^{\mathsf{sc}}-x_{t_i}^{\mathsf{sc}}\|_2^2)
\end{align*}
According to \eqref{eq:proof-diff-alpha} and \eqref{eq:proof-diff-sigma}, we have
$|\alpha_{t_i}-\alpha_{t_{i+1}}|=O(\delta_{{i+1}})$ and $|\sigma_{t_i} -\sigma_{t_{i+1}}|=O(\delta_{{i+1}})$, and thus
\begin{align}\label{eq:proof-3}
|q_{\lambda_{i}}(x|\tilde{x}_{t_i}) - q_{\lambda_{i+1}}(x|\tilde{x}_{t_{i+1}})| &\lesssim q_{\lambda_{i}}(x|\tilde{x}_{t_i})\bigg(\|x_{t_i}^{\mathsf{c}}\|_2 +\|x_{t_i}^{\mathsf{sc}}\|_2 + \|Z\|_2+ 1\bigg)\delta_{{i+1}} \notag\\
&\quad +q_{\lambda_{i}}(x|\tilde{x}_{t_i})\|x_{t_{i+1}}^{\mathsf{sc}}-x_{t_i}^{\mathsf{sc}}\|_2 + O(\delta_{{i+1}}^2) + O(\|x_{t_{i+1}}^{\mathsf{sc}}-x_{t_i}^{\mathsf{sc}}\|_2^2).
\end{align}
Substituting \eqref{eq:proof-3} into \eqref{eq:proof-2}, we complete the proof.

\section{Extension to general $A$}
\label{subsec:extension}

For $V_{\mathcal{S}^{\mathsf{c}}}^{\top} x_{t_i}, V_{\Stci{t_i}}^{\top} x_{t_i}$, we aim to approximately generate them according to the following conditional distribution:
\begin{align*}
V_{\mathcal{S}^{\mathsf{c}}}^{\top} \widehat{X}_{t_i}, V_{\Stci{t_i}}^{\top} \widehat{X}_{t_i} \mymid V_{\Sti{t_i}}^{\top} \widehat{X}_{t_i}, V_{\Stci{t_i}}^{\top} \widehat{X}_{\tau}\sim p_{V_{\mathcal{S}^{\mathsf{c}}}^{\top} X_{t_i}, V_{\Stci{t_i}}^{\top} X_{t_i} \mymid V_{\Sti{t_i}}^{\top} X_{t_i}, V_{\Stci{t_i}}^{\top} X_{\tau}}
\end{align*}
Towards this, we need to calculate the score function, i.e., the log gradient of the above conditional density with respect to $V_{\mathcal{S}^{\mathsf{c}}}^{\top} x_{t_i}$ and $V_{\Stci{t_i}}^{\top} x_{t_i}$.
In particular, the gradient with respect to $V_{\mathcal{S}^{\mathsf{c}}}^{\top} x_{t_i}$ is given by
\begin{align*}
&\nabla_{V_{\mathcal{S}^{\mathsf{c}}}^{\top} x_{t_i}}\log p_{V_{\mathcal{S}^{\mathsf{c}}}^{\top} X_{t_i}, V_{\Stci{t_i}}^{\top} X_{t_i} \mymid V_{\Sti{t_i}}^{\top} X_{t_i}, V_{\Stci{t_i}}^{\top} X_{\tau}}(V_{\mathcal{S}^{\mathsf{c}}}^{\top} x_{t_i}, V_{\Stci{t_i}}^{\top} x_{t_i} \mymid V_{\Sti{t_i}}^{\top} x_{t_i}, V_{\Stci{t_i}}^{\top} x_{\tau})\notag\\
&\quad =\nabla_{V_{\mathcal{S}^{\mathsf{c}}}^{\top} x_{t_i}}\log p_{V_{\mathcal{S}^{\mathsf{c}}}^{\top} X_{t_i}, V_{\Stci{t_i}}^{\top} X_{t_i}, V_{\Sti{t_i}}^{\top} X_{t_i}}(V_{\mathcal{S}^{\mathsf{c}}}^{\top} x_{t_i}, V_{\Stci{t_i}}^{\top} x_{t_i}, V_{\Sti{t_i}}^{\top} x_{t_i}) = -\frac{1}{\sigma_{t_i}}V_{\mathcal{S}^{\mathsf{c}}}^{\top}\epsilon_{t_i}(x_{t_i}),
\end{align*}
which is the same as previously used score function defined in \eqref{eq:conditional-data-predictor}.
the gradient with respect to $V_{\Stci{t_i}}^{\top} x_{t_i}$ is given by
\begin{align*}
&\nabla_{V_{\Stci{t_i}}^{\top} x_{t_i}}\log p_{V_{\mathcal{S}^{\mathsf{c}}}^{\top} X_{t_i}, V_{\Stci{t_i}}^{\top} X_{t_i} \mymid V_{\Sti{t_i}}^{\top} X_{t_i}, V_{\Stci{t_i}}^{\top} X_{\tau}}(V_{\mathcal{S}^{\mathsf{c}}}^{\top} x_{t_i}, V_{\Stci{t_i}}^{\top} x_{t_i} \mymid V_{\Sti{t_i}}^{\top} x_{t_i}, V_{\Stci{t_i}}^{\top} x_{\tau})\notag\\
&\quad =\nabla_{V_{\Stci{t_i}}^{\top} x_{t_i}}\log p_{V_{\mathcal{S}^{\mathsf{c}}}^{\top} X_{t_i}, V_{\Stci{t_i}}^{\top} X_{t_i}, V_{\Sti{t_i}}^{\top} X_{t_i}}(V_{\mathcal{S}^{\mathsf{c}}}^{\top} x_{t_i}, V_{\Stci{t_i}}^{\top} x_{t_i}, V_{\Sti{t_i}}^{\top} x_{t_i}) \notag\\
&\qquad + \nabla_{V_{\Stci{t_i}}^{\top} x_{t_{i}}} \log p_{V_{\Stci{t_i}}^{\top} X_{\tau}\mymid V_{\Stci{t_i}}^{\top} X_{t_{i}}}(V_{\Sti{t_i}}^{\top} x_{\tau},V_{\Sti{t_i}}^{\top} x_{t_i})\notag\\
&\quad =-\frac{1}{\sigma_{t_i}}V_{\Stci{t_i}}^{\top}\epsilon_{t_i}(x_{t_i})
+\frac{1}{\sigma_{t_i}^2({\rm e}^{2(\lambda_{t_i}-\lambda_{\tau})}-1)}\left(\frac{\alpha_{t_i}}{\alpha_{\tau}} V_{\Stci{t_i}}^{\top} x_{\tau}- V_{\Stci{t_i}}^{\top} x_{t_i}\right).
\end{align*}
Compared to previously used score function, an additional term is introduced to capture the condition information between $V_{\Stci{t_i}}^{\top}\widehat{X}_{t_i}$ and $V_{\Stci{t_i}}^{\top}\widehat{X}_{\tau}$.
Inserting this score function into the DDIM update rule yields the update formula in \eqref{eq:ddim-prior-dominated-extension}.

\subsection{Proof of Theorem \ref{cor:main}}
\label{sec:proof-cor:main}

Akin to the proof of Theorem \ref{thm:main}, we first construct the following auxiliary sequence:
\begin{subequations}\label{eq:def-auxiliary-sequence-extension}
\begin{align}
\overline{X}_{k+1}^{\mathsf{st}} &=\alpha_{t_{k+1}}\overline{Y}_{t_{k+1}} + \widehat{\sigma}_{t_{k+1}}\overline{Z}_{t_{k+1}},\quad
\overline{X}_{k+1}^{\mathsf{sc}},\overline{X}_{k+1}^{\mathsf{c}} \sim p_{X_{k+1}^{\mathsf{sc}},X_{k+1}^{\mathsf{c}}\mid X_{k+1}^{\mathsf{st}},X_{\tau_s}^{\mathsf{sc},t_{k+1}}}(\cdot,\cdot\mid \overline{X}_{k+1}^{\mathsf{st}},\overline{X}_{k_{\tau_s}}^{\mathsf{sc},t_{k+1}}),\label{eq:def-auxiliary-sequence-bar-extension}\\
    \widetilde{X}_{k+1}^{\mathsf{st}} &=\alpha_{t_{k+1}}\overline{Y}_{t_{k+1}} + \widehat{\sigma}_{t_{k+1}}\overline{Z}_{t_{k+1}},\\
\widetilde{X}_{k+1}^{\mathsf{sc}} &= \widetilde{X}_{k}^{\mathsf{sc}} +  \gamma_k \nabla_{\widetilde{X}_k^{\mathsf{sc}}} \log q_{\lambda_{k+1}}(\widetilde{X}_{k}^{\mathsf{c}},\widetilde{X}_{k}^{\mathsf{sc}},\widetilde{X}_{k+1}^{\mathsf{st}}) + \sqrt{2\gamma_k} Z_k^{\mathsf{sc}},\label{eq:def-auxiliary-sequence-extension-2}\\
\widetilde{X}_{k+1}^{\mathsf{c}} &= \widetilde{X}_{k}^{\mathsf{c}} +  \gamma_k \nabla_{\widetilde{X}_k^{\mathsf{c}}} \log q_{\lambda_{k+1}}(\widetilde{X}_{k}^{\mathsf{c}},\widetilde{X}_{k}^{\mathsf{sc}},\widetilde{X}_{k+1}^{\mathsf{st}}) + \sqrt{2\gamma_k} Z_k^{\mathsf{c}},\label{eq:def-auxiliary-sequence-extension-3}
\end{align}
\end{subequations}
where $q_{\lambda_k}(x_{k}^{\mathsf{c}},x_{k}^{\mathsf{sc}},x_{k+1}^{\mathsf{st}})$ is defined as 
\begin{align*}
    q_{\lambda_{k+1}}(x_{k}^{\mathsf{c}},x_{k}^{\mathsf{sc}},x_{k}^{\mathsf{st}})\coloneqq p_{X_{t_{k+1}}^{\mathsf{sc}},X_{t_{k+1}}^{\mathsf{c}}\mymid X_{t_{k+1}}^{\mathsf{st}},X_{\tau_s}^{\mathsf{sc},t_{k+1}}}(x_{k}^{\mathsf{c}},x_{k}^{\mathsf{sc}}\mymid x_{k+1}^{\mathsf{st}},x_{k_{\tau_s}}^{\mathsf{sc},t_{k+1}}),
\end{align*}
and $X_{\tau_s}^{\mathsf{sc},t_{k+1}} = \{v_s^{\top}X_{\tau_s}\}_{s\in\mathcal{S}_{t_{k+1}}^{\mathsf{c}}}$, $\overline{X}_{k_{\tau_s}}^{\mathsf{sc},t_{k+1}} = \{v_s^{\top}\overline{X}_{k_{\tau_s}}\}_{s\in\mathcal{S}_{t_{k+1}}^{\mathsf{c}}}$
By a similar argument with Lemma \ref{lem:convergence-auxiliary-sequence}, we have
\begin{align}
\mathsf{KL}(p_{\widetilde{X}_{0:M}\mymid Y,Z}\Vert p_{\widehat{X}_{t_{0:M}}\mymid Y,Z})\to 0,\qquad \mathrm{as}\quad \eta^3\delta^2\to 0, \quad \mathrm{and}\quad \eta\to\infty.
\end{align}
Moreover, the sequence defined in \eqref{eq:def-auxiliary-sequence-bar} satisfies $p_{\overline{X}_k \mymid V_{\mathcal{S}_{t_k}}^{\top}\overline{X}_{k}, v_{s}^{\top}\overline{X}_{k_{\tau_s}}} = p_{X_{t_k} \mymid V_{\mathcal{S}_{t_k}}^{\top}X_{t_k}, v_{s}^{\top}X_{\tau_s}}$.
By a similar argument as in \eqref{eq:proof-thm-1}, it suffices to prove that for any $y$, $z$,
\begin{align*}
\mathsf{TV}\big(p_{\widetilde{X}_k\mymid Y,Z}(\cdot\mymid y,z),p_{\overline{X}_{k}\mymid Y,Z}(\cdot\mymid y,z)\big)\to 0.
\end{align*}
By decomposition, we have
\begin{align*}
\mathsf{TV}\big(p_{\widetilde{X}_k\mymid Y,Z}(\cdot\mymid y,z),p_{\overline{X}_{k}\mymid Y,Z}(\cdot\mymid y,z)\big) &\le \mathsf{TV}\big(p_{\widetilde{X}_k^{\mathsf{st}},\widetilde{X}_{k_{\tau_s}}^{\mathsf{sc},t_{k}}\mymid Y,Z}(\cdot\mymid y,z),p_{\overline{X}_{k}^{\mathsf{st}},\overline{X}_{k_{\tau_s}}^{\mathsf{sc},t_{k}}\mymid Y,Z}(\cdot\mymid y,z)\big) \notag\\
&\qquad + \mathsf{TV}\big(p_{\widetilde{X}_k\mymid \widetilde{X}_{k}^{\mathsf{st}},\widetilde{X}_{k_{\tau_s}}^{\mathsf{sc},t_{k}},Y,Z}(\cdot\mymid x_k^{\mathsf{st}},x_{\tau},y,z),p_{\overline{X}_{k}\mymid \overline{X}_{k}^{\mathsf{st}},\overline{X}_{k_{\tau_s}}^{\mathsf{sc},t_k},Y,Z}(\cdot\mymid x_k^{\mathsf{st}},x_{\tau},y,z)\big)\notag\\
&=\mathsf{TV}\big(p_{\widetilde{X}_k\mymid \widetilde{X}_{k}^{\mathsf{st}},\widetilde{X}_{k_{\tau_s}}^{\mathsf{sc},t_{k}},Y,Z}(\cdot\mymid x_k^{\mathsf{st}},x_{\tau},y,z),p_{{X}_{t_k}\mymid {X}_{t_k}^{\mathsf{st}},{X}_{\tau_s}^{\mathsf{sc},t_{k}}}(\cdot\mymid x_k^{\mathsf{st}},x_{\tau})\big),
\end{align*}
where $x_k^\mathsf{st} = \alpha_{t_k}y + \widehat{\sigma}_{t_k}z$, and the last equation uses $p_{\overline{X}_{k}\mymid \overline{X}_{k}^{\mathsf{st}},\overline{X}_{k_{\tau_s}}^{\mathsf{sc},t_k},Y,Z}(\cdot\mymid x_k^{\mathsf{st}},x_{\tau},y,z) = p_{{X}_{t_k}\mymid {X}_{t_k}^{\mathsf{st}},{X}_{\tau_s}^{\mathsf{sc},t_{k}}}(\cdot\mymid x_k^{\mathsf{st}},x_{\tau})$.
The remaining proof is similar to the proof of Lemma \ref{lem:distri-xc}.
Specificially, we define a new random variable $\widetilde{X}_k^{\mathsf{stc}} \coloneqq  [\widetilde{X}_k^{\mathsf{sc}},\widetilde{X}_k^{\mathsf{c}}]$ and ${X}_{t_k}^{\mathsf{stc}} \coloneqq  [{X}_{t_k}^{\mathsf{sc}},{X}_{t_k}^{\mathsf{c}}]$.
Then the update rule in \eqref{eq:def-auxiliary-sequence-extension-2} - \eqref{eq:def-auxiliary-sequence-extension-3} can be integrated as 
\begin{align*}
\widetilde{X}_{k+1}^{\mathsf{stc}} &= \widetilde{X}_{k}^{\mathsf{stc}} +  \gamma_k \nabla_{\widetilde{X}_k^{\mathsf{stc}}} \log q_{\lambda_{k+1}}(\widetilde{X}_{k}^{\mathsf{c}},\widetilde{X}_{k}^{\mathsf{sc}},\widetilde{X}_{k+1}^{\mathsf{st}}) + \sqrt{2\gamma_k} [Z_k^{\mathsf{sc}},Z_k^{\mathsf{c}}].
\end{align*}
For convenience, denote distributions
$$
p_k^{\mathsf{stc}}({x}_{t_k}^{\mathsf{stc}})\coloneqq p_{{X}_{t_k}^{\mathsf{stc}}\mymid {X}_{t_k}^{\mathsf{st}}}({x}_{t_k}^{\mathsf{stc}}\mymid {x}_{t_k}^{\mathsf{st}}),\qquad \psi_k({x}_{t_k}^{\mathsf{stc}})\coloneqq p_{\widetilde{X}_{k}^{\mathsf{stc}}\mymid \widetilde{X}_{k}^{\mathsf{st}},\widetilde{X}_{k_{\tau_s}}^{\mathsf{sc},t_{k}},Y,Z}({x}_{t_k}^{\mathsf{stc}}\mymid x_k^{\mathsf{st}},x_{\tau},y,z)
$$
and denote $P_{\gamma_k}$ as the transition probability of the following SDE from $t$ to $t+\gamma_k$:
\begin{align}
\mathrm{d} X(t) = \nabla \log p_{k+1}^{\mathsf{stc}}(X(t)) \mathrm{d} t + \sqrt{2} \mathrm{d} W_t,
\end{align}
where $W_t$ denotes the Brownian motion.
According to the analysis of Langevin, we have
\begin{align*}
\mathsf{KL}(\psi_{k+1}\Vert p_{k+1}^{\mathsf{stc}}) &\le \mathsf{KL}(P_{\gamma_k}\psi_{k}\Vert p_{k+1}^{\mathsf{stc}}) + \mathsf{KL}(\psi_{k+1}\Vert p_{k+1}^{\mathsf{stc}}) - \mathsf{KL}(P_{\gamma_k}\psi_{k}\Vert p_{k+1}^{\mathsf{stc}})\notag\\
&\le {\rm e}^{-2\kappa\gamma_k}\mathsf{KL}(\psi_{k}\Vert p_{k}^{\mathsf{stc}}) + {\rm e}^{-2\kappa\gamma_k}\left(\mathsf{KL}(\psi_{k}\Vert p_{k+1}^{\mathsf{stc}})-\mathsf{KL}(\psi_{k}\Vert p_{k}^{\mathsf{stc}}) \right)\notag\\
&\quad +\mathsf{KL}(\psi_{k+1}\Vert p_{k+1}^{\mathsf{stc}}) - \mathsf{KL}(P_{\gamma_k}\psi_{k}\Vert p_{k+1}^{\mathsf{stc}})
\end{align*}
where $\kappa<\infty$ is a constant.
We claim that 
\begin{align}\label{eq:proof-cor-main-1}
    \mathsf{KL}(\psi_{k}\Vert p_{k+1}^{\mathsf{stc}})-\mathsf{KL}(\psi_{k}\Vert p_{k}^{\mathsf{stc}})  = O(\delta_{k+1}).
\end{align}
By similar argument as in \eqref{eq:proof-11} - \eqref{eq:proof-15}, we have
\begin{align*}
\mathsf{KL}(\psi_{k+1}\Vert p_{k+1}^{\mathsf{stc}}) - \mathsf{KL}(P_{\gamma_k}\psi_{k}\Vert p_{k+1}^{\mathsf{stc}}) = O(\gamma_k^2).
\end{align*}
Therefore, we have
\begin{align*}
\mathsf{KL}(\psi_{k+1}\Vert p_{k+1}^{\mathsf{stc}}) & \le {\rm e}^{-2\kappa\gamma_k}\mathsf{KL}(\psi_{k}\Vert p_{k}^{\mathsf{stc}}) + O(\delta_{k+1} + \gamma_k^2)
&\lesssim \frac{1}{\eta} + \max_k \gamma_k \to 0,\qquad \mathrm{as}\quad \eta\to\infty, \delta\eta \to 0.
\end{align*}
Finally applying Pinsker's inequality, we complete the proof.

\paragraph{Proof of \eqref{eq:proof-cor-main-1}.}
According to the definition of KL divergence, we have
\begin{align*}
\mathsf{KL}(\psi_k\Vert p_{k+1}^{\mathsf{stc}}) - \mathsf{KL}(\psi_k\Vert p_{k}^{\mathsf{stc}}) = \int \log \frac{p_{k}^{\mathsf{stc}}(x)}{p_{k+1}^{\mathsf{stc}}(x)}\psi_{k+1}(x)\mathrm{d} x.
\end{align*}
It suffices to prove that 
$$
\log p_{k}^{\mathsf{stc}}(x) - \log p_{k+1}^{\mathsf{stc}}(x) = O(\delta_{k+1}),
$$
which is implied by
\begin{align*}
\left|\frac{\partial}{\partial\alpha_k}\log p_{k}^{\mathsf{stc}}(x)\right| &= O(\|x\|_2+1),\qquad \left|\frac{\partial}{\partial\sigma_k}\log p_{k}^{\mathsf{stc}}(x)\right| = O(\|x\|_2+1),\notag\\
\|\nabla_{{x}_{t_k}^{\mathsf{st}}}\log p_{{X}_{t_k}^{\mathsf{stc}}\mymid {X}_{t_k}^{\mathsf{st}}}({x}_{t_k}^{\mathsf{stc}}\mymid {x}_{t_k}^{\mathsf{st}})\|_2 &= O(\|{x}_{t_k}^{\mathsf{st}}\|_2+1).
\end{align*}
To this end, by the definition of $p_{k}^{\mathsf{stc}}(x)$, we have
\begin{align*}
\left|\frac{\partial}{\partial\alpha_{t_k}}\log p_{k}^{\mathsf{stc}}(x)\right| &\le \left|\frac{\partial}{\partial\alpha_k}\log p_{{X}_{t_k}^{\mathsf{stc}}, {X}_{t_k}^{\mathsf{st}}}({x}^{\mathsf{stc}}, {x}^{\mathsf{st}})\right| + \left|\frac{\partial}{\partial\alpha_k}\log p_{{X}_{t_k}^{\mathsf{st}}}( {x}^{\mathsf{st}})\right| \notag\\
&= \frac{\big|\mathbb{E}[(x^{\mathsf{stc}}-\alpha_{t_k}X_0^{\mathsf{stc}})^{\top}X_0^{\mathsf{stc}}\mymid X_{t_k}=x]\big|}{\sigma_{t_k}^2} + \frac{\big|\mathbb{E}[(x^{\mathsf{st}}-\alpha_{t_k}X_0^{\mathsf{st}})^{\top}X_0^{\mathsf{st}}\mymid X_{t_k}=x]\big|}{\sigma_{t_k}^2}\notag\\
&\qquad  + \frac{\big|\mathbb{E}[(x^{\mathsf{st}}-\alpha_{t_k}X_0^{\mathsf{st}})^{\top}X_0^{\mathsf{st}}\mymid X_{t_k}^{\mathsf{st}}=x^{\mathsf{st}}]\big|}{\sigma_{t_k}^2}\notag\\
\left|\frac{\partial}{\partial\sigma_{t_k}}\log p_{k}^{\mathsf{stc}}(x)\right| &\le 
\frac{\mathbb{E}[\|x^{\mathsf{stc}}-\alpha_{t_k}X_0^{\mathsf{stc}}\|_2^2\mymid X_{t_k}=x]}{\sigma_{t_k}^3} + \frac{\mathbb{E}[\|x^{\mathsf{st}}-\alpha_{t_k}X_0^{\mathsf{st}}\|_2^2\mymid X_{t_k}=x]}{\sigma_{t_k}^3}\notag\\
&\qquad  + \frac{\mathbb{E}[\|x^{\mathsf{st}}-\alpha_{t_k}X_0^{\mathsf{st}}\|_2^2\mymid X_{t_k}^{\mathsf{st}}=x^{\mathsf{st}}]}{\sigma_{t_k}^3} - 
\frac{|\mathcal{S}_t^{\mathsf{c}}\cup \mathcal{S}^{\mathsf{c}}|}{\sigma_{t_k}},\notag\\
\|\nabla_{{x}_{t_k}^{\mathsf{st}}}\log p_{{X}_{t_k}^{\mathsf{stc}}\mymid {X}_{t_k}^{\mathsf{st}}}({x}_{t_k}^{\mathsf{stc}}\mymid {x}_{t_k}^{\mathsf{st}})\|_2& \le \|\nabla_{{x}_{t_k}^{\mathsf{st}}}\log p_{{X}_{t_k}^{\mathsf{stc}}, {X}_{t_k}^{\mathsf{st}}}({x}_{t_k}^{\mathsf{stc}},{x}_{t_k}^{\mathsf{st}})\|_2+ \|\nabla_{{x}_{t_k}^{\mathsf{st}}}\log p_{{X}_{t_k}^{\mathsf{st}}}( {x}_{t_k}^{\mathsf{st}})\|_2\notag\\
&\le \frac{\mathbb{E}[\|x^{\mathsf{st}}-\alpha_{t_k}X_0^{\mathsf{st}}\|_2\mymid X_{t_k}=x]}{\sigma_{t_k}^2} + \frac{\mathbb{E}[\|x^{\mathsf{st}}-\alpha_{t_k}X_0^{\mathsf{st}}\|_2\mymid X_{t_k}^{\mathsf{st}}=x^{\mathsf{st}}]}{\sigma_{t_k}^2}.
\end{align*}
Therefore, we complete the proof.

\bibliographystyle{apalike}
\bibliography{bibfileDF2}

\end{document}